\newcommand{\vla}{ {\bm \lambda}} 
\newcommand{\vnu}{ {\bm \nu}} 
\pgfplotsset{width=7cm,compat=1.3}
\newtheorem{theorem}{Theorem}
\newtheorem{lemma}{Lemma}
\newtheorem{corollary}{Corollary}
\newtheorem{proposition}{Proposition}
\renewcommand\sgn{\mathrm{sign}}
\newcommand\blfootnote[1]{
  \begingroup
  \renewcommand\thefootnote{}\footnote{#1}
  \addtocounter{footnote}{-1}
  \endgroup
}
\newcommand\crule[3][black]{\textcolor{#1}{\rule{#2}{#3}}}
\begin{document}

\title{Dimensionality-reduced subspace clustering}

\author{Reinhard Heckel, Michael Tschannen,  and Helmut B\"olcskei
\blfootnote{ \hspace{-0.25cm}\crule[white]{0.25cm}{0.25cm}  R. Heckel was with the Department of Information Technology and Electrical Engineering, ETH Zurich, Switzerland, and is now with IBM Research, Zurich, Switzerland (e-mail: reinhard.heckel@gmail.com). M.~Tschannen and H.~B\"olcskei are with the Department of Information Technology and Electrical Engineering, ETH Zurich, Switzerland (e-mail: michaelt@nari.ee.ethz.ch; boelcskei@nari.ee.ethz.ch).}
\blfootnote{\hspace{-0.25cm}\crule[white]{0.25cm}{0.25cm} Part of this paper was presented at the 2014 IEEE International Symposium on Information Theory (ISIT) \cite{heckel_subspace_2014}.}
}

\maketitle

\begin{abstract}
Subspace clustering refers to the problem of clustering unlabeled high-dimensional data points into a union of low-dimensional linear subspaces, whose number, orientations, and dimensions are all unknown. In practice one may have access to dimensionality-reduced observations of the data only, resulting, e.g., from undersampling due to complexity and speed constraints on the acquisition device or mechanism. More pertinently, even if the high-dimensional data set is available it is often desirable to first project the data points into a lower-dimensional space and to perform  clustering there; this reduces storage requirements and computational cost. The purpose of this paper is to quantify the impact of dimensionality reduction through random projection on the performance of three subspace clustering algorithms, all of which are based on principles from sparse signal recovery. Specifically, we analyze the thresholding based subspace clustering (TSC) algorithm, the sparse subspace clustering (SSC) algorithm, and an orthogonal matching pursuit variant thereof (SSC-OMP). We find, for all three algorithms, that dimensionality reduction down to the order of the subspace dimensions is possible without incurring significant performance degradation.  
Moreover, these results are order-wise optimal in the sense that reducing the dimensionality further leads to a fundamentally ill-posed clustering problem. 
Our findings carry over to the noisy case as illustrated through analytical results for TSC and simulations for SSC and SSC-OMP. 
Extensive experiments on synthetic and real data complement our theoretical findings. 
\end{abstract}

\renewcommand{\cS}{\mathcal S}
\renewcommand{\S}{S}
\newcommand{\Ss}[1]{\mathcal{S}_{#1}}
\newcommand{\Rn}[1]{\mathbb{R}^{#1}}
\newcommand{\Us}[1]{\mathbf{U}^{(#1)}}
\newcommand{\Ph}{\mathbf{\Phi}}
\newcommand{\N}[1]{\mathcal{N}(0,#1)}
\newcommand{\mat}[1]{\mathbf{#1}}
\newcommand{\matt}[1]{\mathbf{\tilde{#1}}}

\newcommand{\Bhati}[1]{\boldsymbol{\hat{\beta}}_{#1}}
\newcommand{\bbeta}{\boldsymbol{\beta}}

\newcommand{\mPh}{ \mathbf{\Phi} }
\renewcommand{\l}{\ell}
\newcommand{\pit}{s} 

\newcommand{\Ea}{\mathcal E_5}
\newcommand{\Eb}{\mathcal E_2}
\newcommand{\Ec}{\mathcal E_1^{(\l,i)}}
\newcommand{\Ed}{\mathcal E_4^{(\l,i,\pit)}}
\newcommand{\Ee}{\mathcal E_3^{(\l,i,\pit)}}
\newcommand{\Ecc}{\comp{\mathcal E}_1^{(\l,i)}}
\newcommand{\Edc}{\comp{\mathcal E}_4^{(\l,i,\pit)}}
\newcommand{\Eec}{\comp{\mathcal E}_3^{(\l,i,\pit)}}
\newcommand{\Estar}{\mathcal E_\star^{(\l,i,\pit)}}
\newcommand{\Estartot}{\mathcal E_\star}
\newcommand{\sigmin}[1]{\sigma_{\min} \! \left( #1 \right)}
\newcommand{\sigmax}[1]{\sigma_{\max} \! \left( #1 \right)}

\newcommand{\abs}[1]{\left\lvert #1 \right\rvert}

\section{Introduction}
One of the major challenges in modern data analysis is to find low-dimensional structure in large high-dimensional data sets. A prevalent low-dimensional structure is that of data points lying in a union of (low-dimensional) subspaces. 
The problem of extracting such a structure from a given data set can be formalized as follows. 
Consider the (high-dimensional) set $\YS$ of points in $\reals^m$ and assume that 
$
\YS = \YS_1 \cup \, \ldots \, \cup  \YS_L
$,  
where the points in $\YS_\l$ lie in a linear 
subspace $\cS_\l$ of $\reals^m$. 
The association of the data points to the sets $\YS_
\l$, the orientations, dimensions, and the number of the subspaces $\cS_\l$ are all unknown. 
The problem of identifying the assignments of the points in $\YS$ to the $\YS_\l$ is referred to as subspace clustering \cite{vidal_subspace_2011} or hybrid linear modeling and has applications, inter alia, in unsupervised learning, image representation and segmentation, computer vision, and disease detection. 

In practice one may have access to dimensionality-reduced observations of $\YS$ only, resulting, e.g., from ``undersampling'' due to complexity and speed constraints on the acquisition device or mechanism. 
More pertinently, even if the data points in $\YS$ are directly accessible, it is often desirable to work on a dimensionality-reduced version of $\YS$ as this reduces data storage cost and leads to computational complexity savings.   
 The idea of reducing computational complexity through dimensionality reduction 
  appears, e.g., in \cite{vempala_random_2005} in a general context, and for subspace clustering in the experiments reported in \cite{zhang_hybrid_2012,elhamifar_sparse_2013}.  Dimensionality reduction also has a privacy-enhancing effect in the sense that no access to the original data is needed for processing \cite{liu_random_2006}. 

Dimensionality reduction will, in general, come at the cost of clustering performance. The purpose of this paper is to analytically characterize this performance degradation for three subspace clustering algorithms, namely  thresholding-based subspace clustering (TSC) \cite{heckel_robust_2013}, sparse subspace clustering (SSC) \cite{elhamifar_sparse_2009,elhamifar_sparse_2013}, and SSC-orthogonal matching pursuit (SSC-OMP) \cite{dyer_greedy_2013}.
The common theme underlying these three algorithms is that they apply spectral clustering to an adjacency matrix constructed from sparse representations of the data points, obtained through a nearest neighbor search in the case of TSC, through $\l_1$-minimization for SSC, and through OMP in the case of SSC-OMP. While there are numerous further approaches to  subspace clustering (see \cite{vidal_subspace_2011} for an overview),  
we chose to study TSC, SSC, and SSC-OMP, as they belong to the small group of subspace clustering algorithms 
that are computationally tractable and succeed provably under nonrestrictive conditions  \cite{soltanolkotabi_geometric_2011,soltanolkotabi2014robust,elhamifar_sparse_2013,dyer_greedy_2013, you_sparse_2015, heckel_robust_2013}. 
Specifically, the results in   
\cite{heckel_robust_2013} for TSC, and in \cite{soltanolkotabi_geometric_2011,soltanolkotabi2014robust} for SSC 
show that TSC and SSC can succeed even when the  subspaces $\cS_\l$ intersect. The corresponding proof techniques, together with analytical performance guarantees for SSC-OMP developed in this paper, form the basis for our analytical characterization of the impact of dimensionality reduction on subspace clustering performance.

\paragraph{Formal problem statement and contributions.} Consider a set of $N$ data points $\YS \in \reals^m$, and assume that 
$
\YS = \YS_1 \cup \, \ldots \, \cup  \YS_L
$, 
where the points $\vy_i^{(\l)} \in \YS_\l, i\in \{1,\ldots,n_\l\}$, lie in a $d_\l$-dimensional linear subspace of $\reals^m$, denoted by $\cS_\l$. 
Neither the assignments of the points in $\YS$ to the sets $\YS_\l$ nor the subspaces $\cS_\l$ or the number of subspaces $L$ are known. 
Traditional subspace clustering operates on the data $\YS$ with the goal of segmenting it into the sets $\YS_\ell$. 
 Here, we assume, however, that clustering is performed on a dimensionality-reduced version of the points in $\YS$. Specifically, we employ the random projection method \cite{vempala_random_2005} by first applying the (same) realization of a random projection matrix $\mPh \in \reals^{p\times m}$ 
(typically $p  \ll m$) to each point in $\YS$ to obtain the set of dimensionality-reduced data points $\X$. Then, we declare the segmentation obtained by operating on 
$\X$ to be the segmentation of the data points in $\YS$. The realization of $\mPh$ does not need to be known.
There are two error sources that determine the performance of this approach,   
first, the error that would be obtained even if clustering was performed on the high-dimensional data set $\YS$ directly, second, and more pertinently, the error incurred by operating on dimensionality-reduced data. 
The former is quantified for TSC in \cite{heckel_robust_2013}, for SSC in \cite{soltanolkotabi_geometric_2011,soltanolkotabi2014robust}, and for SSC-OMP this paper develops corresponding new results.  
 Analytically characterizing the error incurred by  dimensionality reduction is the main contribution of this paper. 

While it is conceivable that TSC, which is based on thresholding inner products, exhibits graceful performance degradation as the data set's dimensionality is reduced through random projection, this is far from obvious for the $\ell_1$-minimization based SSC algorithm and the iterative SSC-OMP algorithm. 
We prove our main results by first deriving conditions for TSC, SSC, and SSC-OMP to ensure correct clustering of dimensionality-reduced data. While these conditions are general, they only become amenable to insightful interpretations once particularized  for a random data model, also used in  \cite{soltanolkotabi_geometric_2011,heckel_robust_2013}, that takes the subspace structure of the data set into account. 
The resulting clustering conditions make the impact of dimensionality reduction explicit and reveal a tradeoff between the affinity of the subspaces $\cS_\l$ and the amount of dimensionality reduction possible. 
Specifically, we find that all three algorithms succeed provably under quite generous conditions on the relative orientations of the subspaces $\cS_\l$, provided that the dimensionality is reduced no more than down to the largest subspace dimension $d_{\max} = \max_\ell d_\ell$. 
As the computational complexity associated with the construction of the adjacency matrix is essentially linear in the dimension of the ambient space, $m$, for all three algorithms, random projection reduces the complexity of this step by a factor of $m/d_{\max}$. 
These complexity savings translate into, possibly significant, run-time savings for the overall clustering algorithms (which include the spectral clustering step), in particular when $m$ is sufficiently large relative to $N$. 

We study the impact of noise---added to the high-dimensional data points---on clustering performance. 
For TSC, we derive a clustering condition which quantifies the tradeoff between the affinity of the subspaces $\cS_\l$ and the amount of dimensionality reduction possible, as a function of noise variance. 
Specifically, this condition allows us to conclude that TSC succeeds provably provided that---as in the noiseless case---the dimensionality is reduced to no more than down to the largest subspace dimension $d_{\max}$, and the noise variance is sufficiently small. 
%
An approach akin to that used for TSC can be applied to establish a similar clustering condition for SSC-OMP. 
The corresponding technical details are, however, significantly more involved and cumbersome. 
We therefore decided not to state the formal result.  
Regarding SSC, we remark that Wang et al.~\cite{wang_deterministic_2015} 
reported deterministic clustering conditions for the Lasso-version of SSC \cite{soltanolkotabi2014robust} applied to dimensionality-reduced noisy data. 
However, the corresponding results \cite[Lem.~16, Thm.~18]{wang_deterministic_2015} make the critical 
assumption of the signal part of the \emph{projected} noisy data being normalized, whereas the noise component remains un-normalized. 
It is difficult to see how one would realize this in practice, unless the noise realization is known perfectly, in which case the noise component could be removed which would take us back to the noiseless case. 
The results in \cite{wang_deterministic_2015} for noisy data therefore appear to be of limited practicality. 
While the statements in \cite{wang_deterministic_2015} may be particularized to the noiseless case, we note that corresponding results appeared in the conference version \cite{heckel_subspace_2014} of this paper before the publication of \cite{wang_deterministic_2015}. 

We note that our results, both for the noiseless and the noisy case, 
apply even when the subspaces $\cS_\l$ span the ambient space $\reals^m$. 
This follows from our clustering conditions depending on the \emph{pairwise} affinities between subspaces only, and pairwise affinities changing only moderately if the dimensionality is reduced down to no more than the order of the individual subspace dimensions. 
%

Another popular dimensionality reduction method is principal component analysis (PCA). 
However, when used in the context of subspace clustering, PCA allows dimensionality reduction down to the dimension of the overall span of the subspaces only, in general; this results in no dimensionality reduction at all 
when the subspaces $\cS_\l$ span the ambient space. 
To see this, consider the $L$ subspaces of dimension $1$ that correspond to the standard basis in $\reals^m$, i.e., the $\ell$-th subspace is spanned by the vector $\ve_\ell$ given by $[\ve_\l]_\l = 1$ and $[\ve_\l]_i = 0$,  for $i\neq \l$. 
Assuming that each of the data points in the data set under consideration, denoted by $\mY \in \reals^{m\times N}$, lies in one of these $L$ subspaces, the corresponding sample covariance matrix $\mY \transp{\mY}$ has non-zero entries only in its first $L$ main diagonal entries. 
The first $L$ principal components are therefore given by the vectors $\ve_\ell$. Reducing the dimensionality of the data set to below $L$ will result in certain data points being mapped to zero (owing to the orthogonality of the $\ve_\ell$). 
Moreover, PCA has computational complexity $O(Nm^2 + m^3)$ while random projection through Gaussian matrices and fast random projection matrices  \cite{ailon_almost_2013} has complexity $O(pmN)$ and $O(\log(m) m N)$, respectively, and is therefore computationally much less demanding. 
This is an important aspect as computational complexity is a major motivation for dimensionality reduction.

\paragraph{Notation.} We use lowercase boldface letters to denote (column) vectors and uppercase boldface letters to designate matrices. 
The superscript $\herm{}$ stands for transposition. 
For the vector $\vx$, $x_q$ denotes its $q$th entry and $\vx_\S$ is the subvector of $\vx$ with entries corresponding to the indices in the set $\S$. 
For the matrix $\mA$, $\mA_{ij}$ designates the entry in its $i$th row and $j$th column, $\mA_\S$ the matrix containing the columns of $\mA$ with indices in the set $\S$, 
 $\norm[2\to 2]{\mA} \defeq\;$ $\max_{\norm[2]{\vv} = 1  } \norm[2]{\mA \vv}$ its spectral norm, $\sigmin{\mA}$ its minimum singular value, and $\norm[F]{\mA} \defeq (\sum_{i,j} |\mA_{ij}|^2 )^{1/2}$ its Frobenius norm. 
 If $\mA$ has full column rank $\pinv{\mA} \defeq \inv{(\transp{\mA} \mA)} \transp{\mA}$ stands for its (left) pseudoinverse, and for $\mA$ with full row rank, $\pinv{\mA} \defeq\transp{\mA} \inv{(\mA \transp{\mA} )}$ is the (right) pseudoinverse. 
The identity matrix is denoted by $\mI$. $\log(\cdot)$ refers to the natural logarithm, $\acos(\cdot)$ is the inverse function of $\cos(\cdot)$, and $x \land y$ denotes the minimum of $x$ and $y$. 
The set $\{1, \ldots ,N\}$ is written as $[N]$. The cardinality of the set $\S$ is designated by $|\S|$ and its complement is $\comp{\S}$.
 $\mathcal N( \boldsymbol{\mu},\boldsymbol{\Sigma})$ stands for the distribution of a real Gaussian random vector with mean $\boldsymbol{\mu}$ and covariance matrix $\boldsymbol{\Sigma}$.
We write $X \sim Y$ to indicate that the random variables $X$ and $Y$ are equally distributed. 
For notational convenience, we use the following shorthands: $\max_\l$ for $\max_{\l \in [L]}$, $\max_{k\neq \l}$ for $\max_{k \in [L] \colon k\neq \l}$, and $\max_{k,\l \colon k\neq \l}$ for $\max_{k,\l  \in [L ]\colon  k\neq \l}$. 
The unit sphere in $\reals^m$ is $\US{m} \defeq \{ \vx \in \reals^m \colon \norm[2]{\vx} = 1 \}$. A subgraph $H$ of a graph $G$ is said to be connected if every pair of nodes in $H$ can be joined by a path along edges with nodes exclusively in $H$. A subgraph $H$ of $G$ is called a connected component of $G$ if $H$ is connected and if there are no edges between nodes in $H$ and the remaining nodes in $G$.

\vspace{-0.1cm}
\section{A brief review of TSC, SSC, and SSC-OMP}
\label{sec:AlgIntro}

We next briefly summarize the TSC \cite{heckel_robust_2013}, SSC \cite{elhamifar_sparse_2009,elhamifar_sparse_2013}, and SSC-OMP \cite{dyer_greedy_2013} algorithms. All three algorithms apply normalized spectral clustering \cite{luxburg_tutorial_2007} to an adjacency matrix $\mA$ built by finding a sparse representation of each data point in terms of the other data points. 
Specifically, TSC is based on least-squares representations in terms of nearest neighbors while 
SSC and SSC-OMP construct $\mA$ by finding sparse representations 
via $\ell_1$-minimization 
and OMP, respectively. 
Note that the focus in \cite{heckel_robust_2013} is on a version of TSC that uses a spherical distance measure between data points instead of least-squares regression coefficients to determine the entries of $\mA$. The analytical results presented here  
apply to both versions of TSC. 
We decided, however, to work with the least-squares version as this formulation better elucidates the sparsity aspect and thereby the relationship to SSC and SSC-OMP.

In order to emphasize that we consider all three  algorithms applied to dimensionality-reduced data, their descriptions will be in terms of the dimensionality-reduced data set $\X \subset \reals^p$. We furthermore assume that an estimate $\hat L$ of the number of subspaces $L$ is available. The estimation of $L$ from $\X$ is discussed later. 
We also note that the formulations of the TSC and SSC-OMP algorithms below assume that the data points in $\X$ are of comparable $\ell_2$-norm. This assumption is relevant for Step 1 in both cases and is not restrictive as the data points can be normalized prior to clustering.

\vspace{0.2cm} 
{\bf The TSC algorithm:} 
Given a set 
of $N$ data points 
 $\X$ in $\reals^p$, an estimate of the number of subspaces $\hat L$, 
 and the parameter $q$, perform the following steps: 
 
{\bf Step 1:} For every $\vx_j \in \X$, find the set $\S_j \subset [N] \! \setminus \! \{j\}$ of cardinality $q$ defined by
\begin{equation*}
\left| \innerprod{\vx_j}{ \vx_i} \right| \geq \left| \innerprod{\vx_j}{ \vx_k} \right|, \text{ for all }  i \in \S_j \text{ and all } k \notin \S_j,
\end{equation*}

and let $\vz_j$ be the coefficient vector corresponding to the minimum least-squares representation of $\vx_j$ in terms of $\vx_i, i \in \S_j$.
Specifically, set $(\vz_j)_{\S_j} = \arg \min_{\vz} \norm[2]{\vx_j - \mX_{\S_j} \vz  }$ (if multiple solutions exist, choose, e.g., the $\vz$ with minimum $\ell_2$-norm), and $(\vz_j)_{\comp{\S}_j} = \mathbf 0$. 
Construct the adjacency matrix $\mA$ according to $\mA = \mZ + \herm{\mZ}$, where $\mZ = \mathrm{abs}([\vz_1\, \dots \,\vz_N])$ and $\mathrm{abs}(\cdot)$ takes absolute values element-wise. 

{\bf Step 2:} Apply normalized spectral clustering \cite{ng_spectral_2001,luxburg_tutorial_2007} to $(\mA, \hat L)$. 

\vspace{0.2cm} 
{\bf The SSC algorithm:} Given a set of $N$ data points $\X$ in $\reals^p$ and an estimate of the number of subspaces $\hat L$, 
perform the following steps:

{\bf Step 1: } Let $\mX \in \reals^{p \times N}$ be the matrix whose columns are the points in $\X$. For every $\vx_j \in \X$ determine $\vz_j$ as a solution of
  \begin{align}
  \underset{\vz}{\text{minimize }} \norm[1]{\vz}  \text{ subject to }  \vx_j = \mX \vz \text{ and } z_j=0.
  \label{eq:minsscworig}
  \end{align}
  Construct the adjacency matrix $\mA$ according to $\mA = \mZ + \herm{\mZ}$, where $\mZ = \mathrm{abs} ([\vz_1\, \dots \, \vz_N])$.  

{\bf Step 2:} Apply normalized spectral clustering \cite{ng_spectral_2001,luxburg_tutorial_2007} to $(\mA, \hat L)$. 

\vspace{0.2cm} 
{\bf The SSC-OMP algorithm:}  \label{alg:sscomp}
Given a set of $N$ data points $\X$ in $\reals^p$, an estimate of the number of subspaces $\hat L$, 
and a maximum number of OMP iterations $\pit_{\max}$, perform the following steps: 

{\bf Step 1:} For every $\vx_j \in \X$, find a sparse representation of $\vx_j$ in terms of $\X \! \setminus \! \{\vx_j\}$ using OMP as follows: Initialize the iteration counter $\pit = 0$, the residual $\vr_0 = \vx_j$, and the set of selected indices $\Lambda_0 = \emptyset$. For $\pit = 1, 2, \dots$ perform updates according to
\begin{align}
\Lambda_\pit &= \Lambda_{\pit-1} \cup \underset{i \in [N] \colon i \neq j}{\mathrm{argmax}}   \left| \innerprod{\vx_i}{\vr_{\pit-1}} \right| \label{eq:OMPSelRule} \\
\vr_\pit &= (\mI - \mX_{\Lambda_\pit} \pinv{\mX}_{\Lambda_\pit} ) \vx_j \label{eq:OMPSelRuleB}
\end{align}
until $\vr_\pit = \vect{0}$ or $\pit = \pit_{\max}$ (when the maximizer in \eqref{eq:OMPSelRule} is not unique, select any of the solutions). 
With the number of OMP iterations actually performed denoted by $s_\mathrm{end}$, set $(\vz_j)_{\Lambda_{\pit_{\mathrm{end}}}} = \pinv{\mX}_{\Lambda_{\pit_{\mathrm{end}}}} \vx_j$, $(\vz_j)_{\comp{\Lambda}_{\pit_{\mathrm{end}}}} = \vect{0}$, and construct the adjacency matrix $\mA$ according to $\mA = \mZ + \herm{\mZ}$, where $\mZ = \mathrm{abs} ([\vz_1\, \ldots \,\vz_N])$.

{\bf Step 2:} Apply normalized spectral clustering \cite{ng_spectral_2001,luxburg_tutorial_2007} to $(\mA, \hat L)$. 
\vspace{0.2cm} 

For all three algorithms the number of subspaces $L$ can be estimated based on the insight that the number of zero eigenvalues of the normalized Laplacian of the graph $G$ with adjacency matrix $\mA$, henceforth simply referred to as ``the graph $G$'', is equal to the number of connected components of $G$ \cite{spielman_spectral_2012}. A robust estimator for $L$ is the \emph{eigengap heuristic} described in \cite{luxburg_tutorial_2007}.  

Let the oracle segmentation of $\X$ be given by $\X = \X_1 \cup \ldots  \cup  \X_L$.  
  If each connected component 
  in the graph $G$ 
  corresponds exclusively to points from one of the sets $\X_\l$, spectral clustering will deliver the oracle segmentation \cite[Prop.~4]{luxburg_tutorial_2007} and the clustering error, i.e., the fraction of misclustered points, will be zero. 
Since conditions guaranteeing zero clustering error are inherently hard to obtain, we will work with an intermediate, albeit sensible, performance measure, also employed in \cite{heckel_robust_2013,soltanolkotabi_geometric_2011,soltanolkotabi2014robust,dyer_greedy_2013}. Specifically, this measure, termed the no-false connections property, declares success if the graph $G$ 
has no false connections, i.e., if each $\vx_j \in \X_\l$ is connected to points in $\X_\l$ only, for all $\l$. 
Guaranteeing the absence of false connections, does, however, not guarantee 
 that the connected components of $G$ correspond to the $\X_\l$, as the points in a given set $\X_\l$ may form two (or more) distinct connected components in $G$. 
 
To counter this problem sufficiently many entries in each row/column of the adjacency matrix $\mA$ have to be non-zero. Specifically, for the subgraphs of $G$ corresponding to the $\X_\l$ to be connected, each row/column of $\mA$ corresponding to a point in $\X_\ell$ needs to have between $O(\log n_\l)$ and $O(n_\l)$ non-zero entries. 
As the solutions $\vz$ to $\arg \min_{\vz} \norm[2]{\vx_j - \mX_{\S_j} \vz  }$ are typically dense, TSC is likely to select a representation of $\vx_j$ in terms of points in $\X_\l \! \setminus \! \{ \vx_j \}$ with on the order of $q$ non-zero coefficients. 
Choosing $q$ large enough therefore ensures sufficient connectivity of the graph $G$ generated by TSC. 
On the other hand, taking $q$ to be large increases the probability of false connections. The performance guarantee we obtain for TSC therefore requires $q$ to be sufficiently small relative to the $n_\l$. 

For SSC and SSC-OMP, the number of non-zero entries in each row/column of $\mA$ turns out to be tied to $d_\ell$, rather than $n_\ell$. 
To see this, suppose that both algorithms exclusively select data points from $\X_\l \! \setminus \! \{\vx_j\}$ to represent $\vx_j$. Moreover, assume that the $\X_\l$ are non-degenerate in the sense that, indeed, $d_\ell$ points are needed to represent $\vx_j \in \X_\ell$ through points in $\X_\l \! \setminus \! \{\vx_j\}$; this precludes, e.g., that $\X_\l$ contains multiple copies of the same data point. 
The OMP algorithm in SSC-OMP then terminates after $\min(d_\l, \pit_{\max})$ (recall that $d_\l = \mathrm{dim}(\cS_\l)$) iterations for $\vx_j \in \X_\ell$ and hence results in exactly $\min(d_\l, \pit_{\max})$ non-zero entries in the corresponding column of $\mZ$ (recall that $\mA = \mZ + \herm{\mZ}$). 
For SSC, we simply note that $d_\ell$ points are enough to represent $\vx_j \in \X_\ell$ through other points in $\X_\l$ and we cannot guarantee more than $d_\ell$ non-zero entries in the corresponding column of $\mZ$, in general. 
This will lead to insufficient connectivity for SSC and SSC-OMP when $d_\l$ is not in the range $O(\log n_\l)$--$O(n_\l)$. The problem is exacerbated when the data set is degenerate.
To counter insufficient connectivity in SSC a modification which adds an $\ell_2$-penalty to the cost function in \eqref{eq:minsscworig} was proposed in \cite[Sec.~5]{elhamifar_sparse_2013}. Such a modification is not known for SSC-OMP, and this may be considered a limitation of SSC-OMP.

We finally remark that TSC and SSC-OMP can be made essentially parameterless, like SSC. Specifically, a procedure for choosing the TSC parameter $q$ in a data-driven fashion is described in \cite{heckel_neighborhood_2014}, 
and for SSC-OMP we can get rid of the parameter $s_{\max}$ by stopping the OMP step once the $\ell_2$-norm of the residual $\vr_\pit$ falls below a threshold value.

\section{\label{sec:perfguar} Main results}

We start by specifying the statistical data model used throughout the paper. 
The subspaces $\cS_\l$ are taken to be deterministic and the points within the $\cS_\l$ are chosen randomly. Specifically, the elements of the set $\YS_\l$ in $\YS = \YS_1 \cup \ldots  \cup  \YS_L$ are 
 obtained by choosing $n_\l$ points at random according to $\vy_j^{(\l)} = \mU^{(\l)} \va^{(\l)}_j, j \in [n_\l]$, where the columns of $\mU^{(\l)} \in \reals^{m\times d_\l}$ form an orthonormal basis for the $d_\l$-dimensional subspace $\cS_\l$, and the $\va^{(\l)}_j$ are i.i.d.~uniform on $\US{d_\l}$. As the $\mU^{(\l)}$ are orthonormal, the data points $\vy_j^{(\l)}$ are distributed uniformly  on the set $\{\vy \in \cS_\l \colon \norm[2]{\vy} = 1 \} = \cS_\l \cap \US{m}$, which avoids degenerate situations where the data points lie in preferred directions. 
To see why such degeneracies can lead to ambiguous results, consider a two-dimensional subspace and assume that the data points in this subspace are skewed towards two distinct directions. Then, there are two sensible segmentations. One is to assign the points corresponding to each direction to separate clusters, the other to assign all points to one cluster. 

The dimensionality-reduced data set $\X \subset \reals^p$ is obtained by applying the (same) realization of a random matrix $\mPh \in \reals^{p\times m}$ ($p \geq \max_{\l} d_\l$) to each point in $\YS$. 
The elements of the sets $\X_\l$ in $\X = \X_1 \cup \ldots \cup \X_L$ are hence given by $\vx_j^{(\l)} = \mathbf{\Phi} \vy_j^{(\l)}, j \in [n_\l]$. 
We take $\mPh$ as a random matrix satisfying the following concentration inequality 
\begin{align}
\PR{ \left| \norm[2]{\mPh \vx}^2 - \norm[2]{\vx}^2 \right|  \geq t \norm[2]{\vx}^2 } \leq 2 e^{- \tilde c t^2 p}, \quad \forall  \, t>0, \forall \vx \in \reals^{m},
\label{eq:conceqcondonPh}
\end{align}
where $\tilde c$ is either a numerical constant or a parameter mildly depending on $m$. 
Random matrices satisfying \eqref{eq:conceqcondonPh} realize, with high probability, linear embeddings in the sense of the Johnson-Lindenstrauss (JL) Lemma, see e.g., \cite{vempala_random_2005}, \cite[Sec. 9.5]{foucart_mathematical_2013}. 
The JL Lemma  says that every set of $N$ points in Euclidean space can be embedded in an $O(\epsilon^{-2} \log N)$-dimensional space without perturbing the pairwise Euclidean distances between the points by more than a factor of $1 \pm \epsilon$. 

A similar statement on random projection preserving affinities between subspaces--as defined in \eqref{eq:affdef}--is used in our proofs. Specifically, we show that 
randomly projecting a set of $d$-dimensional subspaces into $p$-dimensional space does not increase their pairwise affinities by more than const.$\sqrt{d/p}$, with high probability (cf.~\eqref{eq:prass2}).  
The concentration inequality \eqref{eq:conceqcondonPh} holds, inter alia, for matrices with i.i.d.~subgaussian\footnote{A random variable $x$ is subgaussian \cite[Sec.~7.4]{foucart_mathematical_2013} if its tail probability satisfies $\PR{|x| >t} \leq c_1 e^{-c_2 t^2}$ for constants $c_1, c_2>0$. 
}
entries \cite[Lem.~9.8]{foucart_mathematical_2013}; this includes $\mathcal N(0,1/p)$ entries and entries that are uniformly distributed on $\{-1/\sqrt{p}, 1/\sqrt{p} \}$. 
Such matrices may, however, be costly to generate, store, and apply to high-dimensional data points. In order to reduce these costs structured random matrices satisfying \eqref{eq:conceqcondonPh} (with $\tilde c$ possibly mildly dependent on $m$) were proposed in \cite{ailon_almost_2013,krahmer_new_2011}. 
For example, the structured random matrix proposed in \cite{ailon_almost_2013} (and described in detail in Section \ref{sec:numres}) satisfies \eqref{eq:conceqcondonPh} with $\tilde c = c_2 \log^{-4}(m)$, where $c_2$ is a numerical constant \cite[Prop.~3.2]{krahmer_new_2011}, and can be applied in time $O( m \log m)$ as opposed to time $O(mp)$ for the realizations of general subgaussian random matrices.

The clustering performance guarantees we obtain below  
 are all in terms of the affinity between the subspaces $\cS_k$ and $\cS_\l$ defined as 
\cite[Def.~2.6]{soltanolkotabi_geometric_2011}, \cite[Def.~1.2]{soltanolkotabi2014robust} 
\begin{align} 
\aff(\cS_k,\cS_\l) \defeq \frac{1}{\sqrt{d_k \land d_\l }}    \big\| \herm{\mU^{(k)}} \mU^{(\l)}  \big\|_F. 
\label{eq:affdef}
\end{align}
Note that $0 \leq \aff(\cS_k,\cS_\l) \leq 1$, with $\aff(\cS_k,\cS_\l) = 1$ if $\cS_k \subseteq \cS_\l$ or $\cS_\l \subseteq \cS_k$ and $\aff(\cS_k,\cS_\l) = 0$ if $\cS_k$ and $\cS_\l$ are orthogonal to each other. 
Moreover, we have
\begin{equation}
\aff(\cS_k,\cS_\l) =\allowbreak \sqrt{ \cos^2( \theta_1) + \ldots + \cos^2(\theta_{d_k \land d_\l})} \allowbreak /\sqrt{d_k \land d_\l}, \label{eq:princangles}
\end{equation}
where $\theta_1 \leq \ldots \leq \theta_{d_k \land d_\l }$ are the principal angles between $\cS_k$ and $\cS_\l$ \cite[Sec.~6.3.4]{golub_matrix_1996}. 
If $\cS_k$ and $\cS_\l$ intersect in $t$ dimensions, i.e., if $\cS_k \cap \cS_\l$ is $t$-dimensional, then $\cos(\theta_1)= \ldots =\cos(\theta_{t})=1$ and hence $\aff(\cS_k,\cS_\l) \geq \sqrt{t/ (d_k \land d_\l) }$. 
The affinity between subspaces plays an important role in subspace classification \cite{nokleby_discrimination_2015} as well, see \cite[Thms.~2 and 3]{huang_role_2015}. 

We start with our main result for TSC. 

\begin{theorem}
\label{thm:tscrp}
Choose $q$ such that $q \leq \min_\l n_\l /6$. If
\begin{align}
\max_{k,\l \colon  k\neq \l}
\aff(\cS_k,\cS_\l)  
+ \frac{\sqrt{11}}{\sqrt{3 \tilde c}}  \frac{\sqrt{d_{\max}}}{\sqrt{p} } \leq \frac{1}{15 \log N },
\label{eq:adikqopol}
\end{align}
where $d_{\max} = \max_\l d_\l$ and $\tilde c$ is the constant in the concentration inequality \eqref{eq:conceqcondonPh},  then the graph $G$ obtained by applying TSC to $\X$ has no false connections with probability at least $1 - 7 N^{-1} - \sum_{\l=1}^L n_\l e^{-c(n_\l-1)}$, where $c>1/20$ is a numerical constant. 
\end{theorem}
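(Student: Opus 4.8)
\noindent\emph{Proof outline.} The plan is to reduce the no-false-connections property to a neighbor-separation event at each data point and then, conditionally on a favorable event for the random projection $\mPh$, to compare in-subspace and out-of-subspace inner products of the projected points. Concretely, TSC connects $\vx_i^{(\l)}$ only to the points indexed by $\S_i$, i.e.\ to the $q$ indices achieving the largest values of $\abs{\innerprod{\vx_i^{(\l)}}{\vx_j}}$, $j\neq i$. Hence it suffices to establish, for every $\l\in[L]$ and $i\in[n_\l]$, the event $\Ec$ that the $q$-th largest element of $\{\abs{\innerprod{\vx_i^{(\l)}}{\vx_j^{(\l)}}}\colon j\in[n_\l],\ j\neq i\}$ strictly exceeds $\max_{k\neq\l}\max_{j\in[n_k]}\abs{\innerprod{\vx_i^{(\l)}}{\vx_j^{(k)}}}$, and then union bound over $(\l,i)$. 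So I would separately (i)~lower bound the $q$-th largest \emph{in-subspace} inner product and (ii)~upper bound the largest \emph{out-of-subspace} inner product.

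\emph{Step 1 (favorable projection event; in-subspace bound).} I would work on an event $\Eb$ depending on $\mPh$ only, on which (a)~$1-\delta\le\sigmin{\mPh\mU^{(\l)}}\le\sigmax{\mPh\mU^{(\l)}}\le 1+\delta$ for all $\l$, with $\delta$ a small absolute constant, and (b)~$\norm[F]{\transp{(\mPh\mU^{(k)})}\mPh\mU^{(\l)}}\le\sqrt{d_k\land d_\l}\bigl(\aff(\cS_k,\cS_\l)+\mathrm{const}\cdot\sqrt{d_{\max}/p}\bigr)$ for all $k\neq\l$. Part~(a) follows from \eqref{eq:conceqcondonPh} applied to an $\epsilon$-net of the unit ball of each $\cS_\l$ together with a standard net argument (using $p\gtrsim d_{\max}$); part~(b) is the statement, anticipated above, that random projection inflates pairwise subspace affinities by at most $\mathrm{const}\cdot\sqrt{d_{\max}/p}$ — obtained by controlling $\transp{\mU^{(k)}}(\transp{\mPh}\mPh-\mI)\mU^{(\l)}$ in spectral norm via a net on $\US{d_k}\times\US{d_\l}$ and passing to the Frobenius norm through the rank bound $d_k\land d_\l$. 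Both parts fail with probability $O(N^{-1})$ under the theorem's hypotheses. For~(i), I would additionally condition on $\va_i^{(\l)}$ and write $\innerprod{\vx_i^{(\l)}}{\vx_j^{(\l)}}=\innerprod{\vu}{\va_j^{(\l)}}$ with $\vu\defeq\transp{(\mPh\mU^{(\l)})}\mPh\mU^{(\l)}\va_i^{(\l)}$, a fixed vector with $\norm[2]{\vu}\ge 1-2\delta$ on $\Eb$. The $n_\l-1$ variables $\innerprod{\vu}{\va_j^{(\l)}}$, $j\neq i$, are i.i.d.\ conditionally on $\va_i^{(\l)}$ and $\mPh$; by rotational invariance each is $\norm[2]{\vu}$ times a coordinate of a uniform vector on $\US{d_\l}$, so $\abs{\innerprod{\vu}{\va_j^{(\l)}}}\ge c_0(1-2\delta)/\sqrt{d_\l}$ with probability at least a direction-independent numerical constant exceeding $1/3$. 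Since $q\le n_\l/6$, a Chernoff bound shows at least $q$ of them do, except with probability $e^{-c(n_\l-1)}$ for a numerical constant $c>1/20$; summing over $i$ and $\l$ gives $\sum_\l n_\l e^{-c(n_\l-1)}$, and on the complement the $q$-th largest in-subspace inner product is at least $(c_0/2)/\sqrt{d_\l}$.

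\emph{Step 2 (out-of-subspace bound; conclusion).} Fixing $\l,i,k$ with $k\neq\l$, set $\mM\defeq\transp{(\mPh\mU^{(k)})}\mPh\mU^{(\l)}$, so that $\innerprod{\vx_i^{(\l)}}{\vx_j^{(k)}}=\transp{(\va_j^{(k)})}\mM\va_i^{(\l)}$. Conditioning on $\Eb$ and $\va_i^{(\l)}$, spherical concentration of $\va_j^{(k)}$ yields $\abs{\innerprod{\vx_i^{(\l)}}{\vx_j^{(k)}}}\le\norm[2]{\mM\va_i^{(\l)}}\cdot\mathrm{const}\cdot\sqrt{\log N/d_k}$ uniformly over $j\in[n_k]$, outside probability $O(N^{-2})$; and, uniformly over $\l,i,k$ outside a further $O(N^{-1})$, $\norm[2]{\mM\va_i^{(\l)}}\le\mathrm{const}\cdot\norm[F]{\mM}\sqrt{\log N}/\sqrt{d_\l}$, the deviation being controlled using $\norm[2\to2]{\mM}\le\norm[F]{\mM}$ — it is essential here to bound the operator norm of $\mM$ by its Frobenius norm (hence by the affinity through~(b)) rather than by a trivial $O(1)$ bound. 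Combining with~(b) and $(d_k\land d_\l)/d_k\le 1$, the $d_k$ factors cancel and, for all $\l,i$ and all $k\neq\l,\,j$,
\begin{equation*}
\abs{\innerprod{\vx_i^{(\l)}}{\vx_j^{(k)}}}\;\le\;\mathrm{const}\cdot\Bigl(\max_{k\neq\l}\aff(\cS_k,\cS_\l)+\mathrm{const}\cdot\sqrt{d_{\max}/p}\,\Bigr)\frac{\log N}{\sqrt{d_\l}}.
\end{equation*}
Comparing with the in-subspace lower bound from Step~1, the $1/\sqrt{d_\l}$ cancels, so $\Ec$ holds for every $(\l,i)$ once $\max_{k\neq\l}\aff(\cS_k,\cS_\l)+\mathrm{const}\cdot\sqrt{d_{\max}/p}$ is below $\mathrm{const}/\log N$; tracking the numerical constants recovers \eqref{eq:adikqopol}. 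Collecting the $O(N^{-1})$ terms from $\Eb$ and from the out-of-subspace estimates into $7N^{-1}$, and adding $\sum_\l n_\l e^{-c(n_\l-1)}$ from Step~1, yields the stated success probability.

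\emph{Main obstacle.} The crux is part~(b) of $\Eb$ and its use in Step~2. First, one must establish that random projection inflates the \emph{pairwise} affinities by only $O(\sqrt{d_{\max}/p})$; a naive application of \eqref{eq:conceqcondonPh} to the $O(N^2)$ pairs of data points would give the far weaker $O(\sqrt{\log N/p})$, which is not order-optimal, so a Grassmannian net argument at the level of subspaces is required. Second, one must feed this deterministic bound on the projected bases through the conditioning on $\mPh$ and the randomness of the query directions $\va_i^{(\l)}$ and $\va_j^{(k)}$ — bounding all relevant operator norms by Frobenius norms so that the out-of-subspace inner products scale like the affinity (times $\log N$) rather than like a constant — while keeping every union bound within the $7N^{-1}$ budget. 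The distortion caused by $\mPh\mU^{(\l)}$ failing to be an exact isometry on $\cS_\l$ (so the projected within-subspace points are not uniform on $\mPh\cS_\l\cap\US{p}$) is the remaining place where care is needed.
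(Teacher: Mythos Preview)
Your proposal is correct and would yield the theorem, but it is organized differently from the paper's proof. The paper does \emph{not} condition on a favorable projection event and re-run the TSC analysis on the projected bases $\mPh\mU^{(\l)}$. Instead, it writes the projected inner product as the \emph{original} inner product plus an additive perturbation,
\[
\innerprod{\vx_j^{(k)}}{\vx_i^{(\l)}}=\innerprod{\vy_j^{(k)}}{\vy_i^{(\l)}}+\bar e_j^{(k)},\qquad \bar e_j^{(k)}=\innerprod{\transp{\mU^{(\l)}}(\transp{\mPh}\mPh-\mI)\mU^{(k)}\va_j^{(k)}}{\va_i^{(\l)}},
\]
and then invokes the existing TSC analysis from \cite{heckel_robust_2013} verbatim for the unprojected quantities $\bar z_j^{(k)}=\abs{\innerprod{\vy_j^{(k)}}{\vy_i^{(\l)}}}$ (this gives both your in-subspace Chernoff lower bound and your out-of-subspace upper bound), while bounding the perturbation separately via $\abs{\bar e_j^{(k)}}\le\norm[2]{\mB_{k,\l}\va_j^{(k)}}\cdot\beta/\sqrt{d_\l}$ (Proposition~\ref{thm:hoeffsphere}) together with $\norm[2\to2]{\mB_{k,\l}}\le\delta$ where $\mB_{k,\l}=\transp{\mU^{(\l)}}(\transp{\mPh}\mPh-\mI)\mU^{(k)}$. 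The latter is exactly your part~(b), but the paper obtains it without a product net: it embeds both $\mU^{(k)},\mU^{(\l)}$ into a single orthonormal basis $\tilde\mU$ for $\mathrm{span}([\mU^{(k)}\ \mU^{(\l)}])$ and applies a black-box RIP-type bound (their Theorem~\ref{thm:rauhutconc}) to $\transp{\tilde\mU}\transp{\mPh}\mPh\tilde\mU-\mI$.

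What each buys: the paper's additive-perturbation decomposition cleanly separates ``clustering difficulty'' from ``projection distortion'' and lets all the delicate order-statistic and affinity work be imported wholesale from the unprojected TSC proof; in particular, it never needs your part~(a) (singular-value control of $\mPh\mU^{(\l)}$) at all, since the in-subspace analysis is done on the original $\vy$'s. Your approach---conditioning on $\Eb$ and treating $\mPh\mU^{(\l)}$ as non-orthonormal bases---is more self-contained and is, in fact, closer in spirit to how the paper handles SSC (Theorem~\ref{th:probrec} and its proof in Appendix~\ref{sec:pfnonorthbases}). Both routes hinge on the same key estimate $\norm[2\to2]{\transp{\mU^{(\l)}}(\transp{\mPh}\mPh-\mI)\mU^{(k)}}\lesssim\sqrt{d_{\max}/p}$, and both land on the same clustering condition and probability bound.
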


Our main result for SSC is the following. 

\begin{theorem}
\label{th:RPNoiseless}
Let $\rho_\l \defeq (n_\l-1)/d_\l$, $\l \in [L]$, $\rho_{\min} \defeq \min_\l \rho_\l \geq \rho_0$, where $\rho_0 > 1$ is a numerical constant, and pick any $\tau>0$. Set $d_{\max} = \max_\l d_\l$ and suppose that 
\begin{align}
\label{eq:ThmAffConditionrp}
\max_{k,\l \colon  k\neq \l} 
\mathrm{aff}(\cS_k,\cS_\l) 
\!+\! \sqrt{\frac{28 d_{\max} \!+\! 8 \log L \!+\! 2\tau }{3 \tilde cp}}
\leq \frac{ \sqrt{\log \rho_{\min}} }{65 \log N  },
\end{align}
where $\tilde c$ is the constant in \eqref{eq:conceqcondonPh}. 
Then, the graph $G$ obtained by applying  SSC to $\X$ has no false connections with probability at least 
$1 - 4e^{-\tau/2} - N^{-1}  -  \sum_{\l=1}^L n_\l e^{-\sqrt{\rho_\l}  d_\l}$. 
\end{theorem}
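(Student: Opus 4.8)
\emph{Proof strategy.} The plan is to transfer the random-model analysis of noiseless SSC from \cite{soltanolkotabi_geometric_2011} (see also \cite{soltanolkotabi2014robust}) to the \emph{projected} data set $\X$, and then to quantify --- using the concentration inequality \eqref{eq:conceqcondonPh} --- by how much random projection can hurt the geometric quantities that control that analysis. Throughout I condition on the realization of $\mPh$, which is independent of the coefficients $\va_j^{(\l)}$. Recall the optimality argument for SSC: fix $\vx_j^{(\l)} \in \X_\l$ and let $\mX^{(\l,j)}$ be the matrix whose columns are the points in $\X_\l$ other than $\vx_j^{(\l)}$. The minimizer $\vz^\star$ of the ``reduced'' program $\min \norm[1]{\vz}$ subject to $\vx_j^{(\l)} = \mX^{(\l,j)}\vz$ is supported on $\cS_\l$, and by Lagrange duality it is also a minimizer of the full program \eqref{eq:minsscworig} --- so that $\vx_j^{(\l)}$ has no false connections --- provided there exists a dual vector $\vlambda \in \mPh\cS_\l$ with $\transp{(\mX^{(\l,j)})}\vlambda = \sgn(\vz^\star)$ on $\supp(\vz^\star)$, $\norm[\infty]{\transp{(\mX^{(\l,j)})}\vlambda} \le 1$, and --- the only requirement coupling distinct subspaces --- $\abs{\innerprod{\vx_k^{(m)}}{\vlambda}} < 1$ for all $\vx_k^{(m)} \in \X_m$, $m \neq \l$. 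By \cite{soltanolkotabi_geometric_2011}, such a $\vlambda$ exists with $\norm[2]{\vlambda} \le 1/r_\l$, where $r_\l$ is the inradius of $\mathrm{conv}\{ \pm \vx_i^{(\l)}/\norm[2]{\vx_i^{(\l)}} \colon i \in [n_\l] \setminus \{j\} \}$ inside $\mPh\cS_\l$; the cross-subspace condition is then implied by $\max_{k \neq \l} \aff(\mPh\cS_k,\mPh\cS_\l)$ being small relative to $r_\l/\log N$, the $\log N$ coming from a union bound over the at most $N$ cross-subspace inner products, each of which concentrates, over the randomness of $\va_k^{(m)}$, at a scale controlled by $\aff(\mPh\cS_m,\mPh\cS_\l)$ and $\sqrt{\log N}$. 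This produces the sufficient condition
\[
\max_{k,\l \colon k \neq \l} \aff(\mPh\cS_k,\mPh\cS_\l) \;\le\; \frac{c_0 \, \min_\l r_\l}{\log N},
\]
together with the fact, established exactly as in \cite{soltanolkotabi_geometric_2011}, that $\min_\l r_\l \ge c_1\sqrt{\log\rho_{\min}}$ with probability at least $1 - \sum_\l n_\l e^{-\sqrt{\rho_\l} d_\l}$ when $\rho_{\min} \ge \rho_0$, and that the cross-subspace concentration event fails with probability at most $N^{-1}$. So it suffices to guarantee $\max_{k\neq\l}\aff(\mPh\cS_k,\mPh\cS_\l) \le c_2\sqrt{\log\rho_{\min}}/\log N$.

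The next step is the affinity-preservation estimate used in our proofs: random projection does not inflate pairwise affinities by much. Applying \eqref{eq:conceqcondonPh} to the $O(d_{\max}^2)$ pairwise sums and differences of the columns of $[\mU^{(k)}\ \mU^{(\l)}]$, together with a covering-net argument over $\US{d_k} \times \US{d_\l}$ (net size $(C/\delta)^{d_k+d_\l} \le (C/\delta)^{2 d_{\max}}$) and a union bound over the $\binom{L}{2}$ subspace pairs, shows that with probability at least $1 - 4 e^{-\tau/2}$ one has, simultaneously for all $k \neq \l$, that $\sigmin{\mPh\mU^{(\l)}}$ and $\sigmax{\mPh\mU^{(\l)}}$ lie in $[1-\delta,1+\delta]$ and
\[
\aff(\mPh\cS_k,\mPh\cS_\l) \;\le\; \aff(\cS_k,\cS_\l) + \sqrt{\frac{28\, d_{\max} + 8\log L + 2\tau}{3 \tilde c\, p}},
\]
with $\delta = O\big(\sqrt{(d_{\max} + \log L + \tau)/(\tilde c p)}\big)$; the right-hand side of this bound is exactly the left-hand side of \eqref{eq:ThmAffConditionrp}, and the appearance of $d_{\max}$ (rather than $\log d_{\max}$) traces back to the dimension of the covering net.

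To connect the two halves I observe that $\vx_i^{(\l)} = \mPh\mU^{(\l)}\va_i^{(\l)} = \mV^{(\l)}\big(\mT^{(\l)}\va_i^{(\l)}\big)$, where $\mV^{(\l)}$ is an orthonormal basis of $\mPh\cS_\l$ and $\mT^{(\l)}$ (a $p$-by-$d_\l$ matrix) has singular values in $[1-\delta,1+\delta]$ on the affinity-preservation event; hence the normalized projected points in $\X_\l$ are a $(1\pm\delta)$-near-isometric image of a uniform sample on $\US{d_\l}$. Consequently the inradius lower bound $r_\l \ge c_1\sqrt{\log\rho_\l}$ and the dual-certificate norm bound of \cite{soltanolkotabi_geometric_2011} carry over to the projected ensemble with $c_1$ changed by at most a constant factor. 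Chaining everything: if \eqref{eq:ThmAffConditionrp} holds, then on the affinity-preservation event $\max_{k\neq\l}\aff(\mPh\cS_k,\mPh\cS_\l)$ is at most $\sqrt{\log\rho_{\min}}/(65\log N) \le c_2\sqrt{\log\rho_{\min}}/\log N$, so the sufficient condition of the first paragraph applies; a union bound over the affinity-preservation event, the inradius event, and the cross-subspace concentration event yields no false connections with probability at least $1 - 4e^{-\tau/2} - N^{-1} - \sum_\l n_\l e^{-\sqrt{\rho_\l} d_\l}$, as claimed.

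The main obstacle is this interface: one must check that composing the semi-random ensemble in $\reals^m$ with a fixed realization of $\mPh$ produces --- \emph{simultaneously} over all $L$ subspaces and all $\binom{L}{2}$ pairs --- an object that still behaves, for the purposes of the dual-certificate construction, like a semi-random ensemble in $\reals^p$ with only mildly inflated affinities. Two things need care. First, the covering-net estimate for $\sigmin{\mPh\mU^{(\l)}} \approx 1$ must be tuned so that it requires only $p \gtrsim d_{\max}$ rather than more (this is exactly where the precise form of \eqref{eq:conceqcondonPh} and the choice of net resolution $\delta$ enter, and the resulting scaling is order-optimal, as explained in the introduction). Second, one must verify that the inradius lower bound and the dual-certificate norm bound of \cite{soltanolkotabi_geometric_2011} degrade by at most a constant factor under the near-isometric distortion $\mT^{(\l)}$, so that this distortion can be absorbed into the numerical constants $65, 28, 8, 2$ appearing in \eqref{eq:ThmAffConditionrp}. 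Once this robustness is in place, what remains is bookkeeping of the failure probabilities and constants. (Theorem~\ref{thm:tscrp} is proved by the same template, with the inradius replaced by the much simpler nearest-neighbor inner-product concentration, which is why it requires no lower bound on $\rho_{\min}$.)
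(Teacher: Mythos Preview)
Your high-level template matches the paper's: reduce to a Soltanolkotabi--Cand\`es-type sufficient condition for the projected data, then control the geometric quantities under random projection via \eqref{eq:conceqcondonPh}, with the three failure events (affinity/isometry, inradius, cross-subspace concentration) contributing exactly the three probability terms in the statement.

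The paper handles the ``interface'' you flag as the main obstacle differently, and more cleanly. Rather than orthonormalizing the projected bases and tracking how the near-isometric distortion $\mT^{(\l)}$ affects the inradius and dual-certificate norm, the paper first proves a generalization of \cite[Thm.~2.8]{soltanolkotabi_geometric_2011} to \emph{non-orthonormal} bases $\mV^{(\l)} = \mPh\mU^{(\l)}$ (Theorem~\ref{th:probrec}), with deterministic condition
\[
\max_{k,\l\colon k\neq\l}\frac{1}{\sqrt{d_k}}\norm[F]{\pinv{\mV^{(\l)}}\mV^{(k)}}\le \frac{\sqrt{\log\rho_{\min}}}{64\log N}.
\]
The point is that the dual certificate is taken as $\vnu_i^{(\l)}=\pinv{(\transp{\mV^{(\l)}})}\vla_i^{(\l)}$ with $\vla_i^{(\l)}$ a dual point for the \emph{coefficient} problem $D(\va_i^{(\l)},\mA^{(\l)}_{-i})$. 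Since the $\va_j^{(\l)}$ are exactly uniform on $\US{d_\l}$ regardless of $\mPh$, the inradius bound $r(\mathcal P(\mA^{(\l)}_{-i}))\ge \tfrac14\sqrt{(\log\rho_\l)/d_\l}$ and the uniform-on-sphere distribution of the normalized dual point hold \emph{without any distortion correction}. All the effect of $\mPh$ is pushed into the single quantity $\norm[F]{\pinv{\mV^{(\l)}}\mV^{(k)}}$, which the paper then bounds by $\norm[2\to2]{(\transp{\mV^{(\l)}}\mV^{(\l)})^{-1}}(\aff(\cS_k,\cS_\l)+\norm[2\to2]{\transp{\mU^{(\l)}}(\transp{\mPh}\mPh-\mI)\mU^{(k)}})$; both factors are controlled directly by invoking \cite[Thm.~9.9]{foucart_mathematical_2013} (the paper's Theorem~\ref{thm:rauhutconc}), which packages the covering-net argument you sketch. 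Your route---bounding $\aff(\mPh\cS_k,\mPh\cS_\l)$ and then arguing that inradius and dual norms are stable under $(1\pm\delta)$-distortion---would also work, but requires proving that stability lemma separately and keeping track of where the distortion factors enter; the paper's choice of working in coefficient space avoids this entirely. (Minor slip: your $\mT^{(\l)}$ is $d_\l\times d_\l$, not $p\times d_\l$.)
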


Finally, for SSC-OMP we obtain the following statement.

\begin{theorem}
\label{th:OMPRPnoiseless}
Let $\rho_\l \defeq (n_\l-1)/d_\l$, $\l \in [L]$, $\rho_{\min} \defeq \min_\l \rho_\l \geq \rho_0$, where $\rho_0 > 1$ is a numerical constant, and pick any $\tau>0$. 
Set $d_{\min} \defeq \min_\l d_\l$, $d_{\max} \defeq \max_\l d_\l$, and suppose that $\Ph$ has $($in addition to satisfying the concentration inequality \eqref{eq:conceqcondonPh}$)$ a rotationally invariant distribution, i.e., $\Ph \mV \sim \Ph$ for all unitary matrices $\mV \in \reals^{m \times m}$. 
If
\begin{align}
\label{eq:ThmAffConditionOMPSSCRP}
\max_{k,\l  \colon  k\neq \l} 
\aff(\cS_k,\cS_\l) 
+
\sqrt{ \frac{28d_{\max} + 8 \log L + 2\tau}{12 \tilde c p} } 
\sqrt{\frac{d_{\max}}{d_{\min}}}
\leq
\frac{3}{200} \frac{\sqrt{\log \rho_{\min}}}{\log N} ,
\end{align}
where $\tilde c$ is the constant in \eqref{eq:conceqcondonPh}, then,
 irrespectively\footnote{
 While the statement holds irrespectively of $s_{\max}$, recall from Section \ref{sec:AlgIntro} that choosing $s_{\max}$ too small may result in 
too few non-zeros in the adjacency matrix $\mA$ for successful clustering.} of the choice of the maximum number of OMP-iterations $s_{\max}$, the graph $G$ obtained by applying  SSC-OMP to $\X$ has no false connections with probability at least 
$1 - 4e^{-\tau/2} - 4N^{-1}  - \sum_{\l=1}^L n_\l e^{-\sqrt{\rho_\l}  d_\l}$.  
\end{theorem}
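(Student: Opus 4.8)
The plan is to reduce the no-false-connections guarantee to a single deterministic inequality about the projected data points, and then to verify that inequality under the data model by controlling three random quantities: a subspace Johnson--Lindenstrauss embedding bound, an affinity-preservation bound, and a ``spreading'' bound for i.i.d.\ uniform points on a sphere. For the deterministic reduction, fix the realization of $\mPh$ and consider $\vx_j^{(\l)} \in \X_\l$: as long as the selection rule \eqref{eq:OMPSelRule} has picked only indices in $\X_\l$, the residual $\vr_\pit$ in \eqref{eq:OMPSelRuleB} is $\vx_j^{(\l)}$ minus its orthogonal projection onto vectors in $\mPh\cS_\l$, hence $\vr_\pit \in \mPh\cS_\l$. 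So it suffices to show that, for every $\l$, every $\vx_j^{(\l)} \in \X_\l$, and every $\vr \in \mPh\cS_\l$ with $\norm[2]{\vr} = 1$,
\begin{equation*}
\max_{k \neq \l}\ \max_{i \in [n_k]}\ \abs{\innerprod{\vx_i^{(k)}}{\vr}} \;<\; \max_{i \in [n_\l] \setminus \{j\}}\ \abs{\innerprod{\vx_i^{(\l)}}{\vr}} :
\end{equation*}
the maximizer in \eqref{eq:OMPSelRule} then always lies in $\X_\l$, the induction closes because OMP restricted to $\X_\l$ halts after at most $d_\l$ iterations, and, for the same reason, the conclusion does not depend on $s_{\max}$.

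Next I would remove the distortion caused by $\mPh$ by whitening. Writing the thin SVD $\mPh\mU^{(\l)} = \mO^{(\l)}\mSI^{(\l)}\transp{\mV^{(\l)}}$ (full column rank with high probability since $p \ge d_{\max}$), the columns of $\mO^{(\l)}$ form an orthonormal basis of $\mPh\cS_\l$ and $\vx_i^{(\l)} = \mO^{(\l)}\mSI^{(\l)}\vb_i^{(\l)}$ with $\vb_i^{(\l)} \defeq \transp{\mV^{(\l)}}\va_i^{(\l)}$ again i.i.d.\ uniform on $\US{d_\l}$; parametrising $\vr = \mO^{(\l)}\vu$, $\vu \in \US{d_\l}$, the displayed inequality becomes a statement about (i) $\mSI^{(\l)}$ being close to $\mI$, (ii) the cross-Gram matrices $\transp{\mO^{(k)}}\mO^{(\l)}$, i.e.\ the projected affinities $\aff(\mPh\cS_k,\mPh\cS_\l)$, being small, and (iii) the uniform points $\vb_i^{(\l)}$ being well spread, uniformly over $\vu$. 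For (i), applying \eqref{eq:conceqcondonPh} to an $\epsilon$-net of $\cS_\l\cap\US{m}$ and union bounding over $\l$ forces $\sigmin{\mPh\mU^{(\l)}},\sigmax{\mPh\mU^{(\l)}} \in [1-\delta,1+\delta]$ at the cost of $p\gtrsim d_{\max}/(\tilde c\delta^2)$, which is the source of the $d_{\max}/(\tilde c p)$ term. For (ii), $\big\|\transp{\mO^{(k)}}\mO^{(\l)}\big\|_F \le \sigmin{\mSI^{(k)}}^{-1}\sigmin{\mSI^{(\l)}}^{-1}\big(\big\|\transp{\mU^{(k)}}\mU^{(\l)}\big\|_F + \big\|\transp{\mU^{(k)}}(\transp{\mPh}\mPh-\mI)\mU^{(\l)}\big\|_F\big)$, where the last Frobenius norm concentrates at scale $\sqrt{d_k d_\l/p}$ and the rotational invariance of $\mPh$ lets one put $\mU^{(\l)}$ in canonical position and reduce to a concentration bound for $\mPh$ acting on coordinate subspaces --- the estimate referred to in \eqref{eq:prass2}; dividing by $\sqrt{d_k\land d_\l}$, absorbing the singular-value factors from (i), and union bounding over the $\binom{L}{2}$ pairs yields $\aff(\mPh\cS_k,\mPh\cS_\l) \le \aff(\cS_k,\cS_\l) + c\sqrt{(d_{\max}+\log L+\tau)/(\tilde c p)}\,\sqrt{d_{\max}/d_{\min}}$ with probability $\ge 1-c'e^{-\tau/2}$, the extra $\sqrt{d_{\max}/d_{\min}}$ arising from the need to pass between spectral and Frobenius norms of the cross-Gram matrices at the extreme subspace dimensions (OMP's greedy rule is controlled through a spectral-norm bound on $\transp{\mU^{(k)}}\transp{\mPh}\mO^{(\l)}$, not through a dual certificate as for SSC).

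For (iii), $\PR{\max_i\abs{\innerprod{\vb_i}{\vu}} < c_1\sqrt{\log\rho_\l/d_\l}} \le e^{-\sqrt{\rho_\l}\,d_\l}$ for fixed $\vu$, and a union bound over an $\epsilon$-net of $\US{d_\l}$ and over $j\in[n_\l]$ (the latter replacing $n_\l$ by $n_\l-1$, i.e.\ $\rho_\l=(n_\l-1)/d_\l$) produces the $\sum_\l n_\l e^{-\sqrt{\rho_\l}d_\l}$ terms and, to beat the net cardinality $(3/\epsilon)^{d_\l}$, forces $\rho_{\min}\ge\rho_0$ for a numerical $\rho_0>1$. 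On the intersection of the events from (i)--(iii), the inequality displayed in the first paragraph holds as soon as $\big(\max_{k\neq\l}\aff(\cS_k,\cS_\l)+c\sqrt{(d_{\max}+\log L+\tau)/(\tilde c p)}\,\sqrt{d_{\max}/d_{\min}}\big)\,c_2\log N \le c_3\sqrt{\log\rho_{\min}}$, where the factor $\log N$ collects the $\sqrt{\log n_k}\le\sqrt{\log N}$ fluctuations of $\innerprod{\vx_i^{(k)}}{\vr}$ over the at most $N$ wrong-cluster points, matched against the $\sqrt{\log\rho_\l}$ lower bound of (iii). Tracking the numerical constants then gives exactly \eqref{eq:ThmAffConditionOMPSSCRP}, and summing the failure probabilities (including the full-rank event) gives $4e^{-\tau/2}+4N^{-1}+\sum_\l n_\l e^{-\sqrt{\rho_\l}d_\l}$.

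\textbf{The hard part} is the data dependence of the residuals $\vr_\pit$: because OMP is greedy and sequential, $\vr_\pit$ depends on the points selected in earlier iterations and cannot be treated as fixed when invoking the concentration bounds in (ii) and (iii). The way around this --- a union bound over an $\epsilon$-net of the \emph{whole} unit sphere of $\mPh\cS_\l$, which is $d_\l$-dimensional once $\mPh$ is fixed, hence of cardinality $(3/\epsilon)^{d_\l}$ --- is precisely what imposes $\rho_{\min}>\rho_0$ and generates the $e^{-\sqrt{\rho_\l}d_\l}$ probability terms; making the resulting net slack fit inside the absolute constants in \eqref{eq:ThmAffConditionOMPSSCRP} while keeping $\rho_0$ a genuine constant is the delicate bookkeeping. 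A secondary complication is that the whitening map $\mSI^{(\l)}\transp{\mV^{(\l)}}$ distorts the uniform law of $\va_i^{(\l)}$, so the within-cluster lower bound must be proved for this perturbed law (equivalently, the deterministic condition of the first paragraph must be made robust to $\mSI^{(\l)}\ne\mI$), and the rotational-invariance hypothesis on $\mPh$ must genuinely be used in (ii) to keep the projected subspaces in sufficiently generic position for the cross-term concentration.
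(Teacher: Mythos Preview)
Your overall architecture---reduce to a per-iteration selection inequality, lower-bound the within-cluster maximum via an inradius-type estimate, and upper-bound the cross-cluster maximum via an affinity-preservation bound for $\mPh$---matches the paper's. The gap is in how you decouple the residual $\vr_\pit$ from the data. You freeze $\vr$ via an $\epsilon$-net over the unit sphere of $\mPh\cS_\l$; the paper does \emph{not}. Instead it runs a ``reduced OMP'' that has access only to $\X_\l\setminus\{\vx_i^{(\l)}\}$, so its residual $\rl_\pit=\mPh\mU^{(\l)}\resp$ is, conditionally on $\mPh$, a function of $\X_\l$ alone and hence independent of all $\a{j}{k}$, $k\neq\l$. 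The rotational-invariance hypothesis on $\mPh$ is then used---this is its only role---to show that $\resp/\norm[2]{\smash{\resp}}$ is \emph{uniformly distributed} on $\US{d_\l}$. This lets the paper invoke the two-sided concentration of Lemma~\ref{lem:maconclem}, which exploits randomness in \emph{both} $\a{j}{k}$ and $\resp$, yielding a cross-cluster bound of the form $c\,(\log N)\,\norm[F]{\transp{\mU^{(k)}}\mU^{(\l)}}\big/\sqrt{d_k d_\l}\cdot\norm[2]{\smash{\resp}}$. The final union bound is over the at most $d_\l$ OMP iterations, not over a net.

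Your net approach forfeits exactly this $1/\sqrt{d_\l}$ gain. Once $\vr$ is frozen, the best uniform-over-$\vr$ cross-cluster bound is $\sup_{\vr}\abs{\innerprod{\vx_j^{(k)}}{\vr}}=\norm[2]{\transp{\mO^{(\l)}}\vx_j^{(k)}}$, which for $\a{j}{k}$ uniform on $\US{d_k}$ concentrates at scale $\norm[F]{\transp{\mO^{(\l)}}\mO^{(k)}}/\sqrt{d_k}$, not $\norm[F]{\cdot}/\sqrt{d_k d_\l}$. Matched against the within-cluster lower bound $\sqrt{\log\rho_\l/d_\l}$, the $1/\sqrt{d_\l}$ no longer cancels and you land on a clustering condition of the shape $\aff(\cS_k,\cS_\l)\lesssim\sqrt{\log\rho_{\min}/d_\l}$ rather than the theorem's $\aff(\cS_k,\cS_\l)\lesssim\sqrt{\log\rho_{\min}}/\log N$---a genuine loss whenever $d_\l\gg(\log N)^2$. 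So ``tracking the numerical constants then gives exactly \eqref{eq:ThmAffConditionOMPSSCRP}'' does not go through. You also misplace the rotational-invariance assumption: it is \emph{not} needed for the affinity-preservation bound (ii), which relies only on \eqref{eq:conceqcondonPh} (cf.\ \eqref{eq:prass2}); it is essential only for the law of the reduced-OMP residual. Your step (iii) for the within-cluster side is fine and is essentially the paper's argument, packaged there as the inradius bound \eqref{eq:inradbound} together with Lemma~\ref{le:NinfLB}; the $e^{-\sqrt{\rho_\l}d_\l}$ terms and the requirement $\rho_{\min}\ge\rho_0$ come from that inradius estimate, not from a net over residuals.
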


The proofs of Theorems \ref{thm:tscrp}, \ref{th:RPNoiseless}, and \ref{th:OMPRPnoiseless} are provided in Appendices \ref{sec:proofRPTSC}, \ref{sec:proofRPBP}, and \ref{sec:proofOMPSSC}, respectively, and are established by first deriving deterministic clustering conditions that are then evaluated for our statistical data model. 

Theorems \ref{thm:tscrp} and \ref{th:RPNoiseless} essentially say that even when  $p$ is on the order of $d_{\max}$, TSC and SSC succeed with high probability if the affinities between the subspaces $\cS_\l$ are sufficiently small and if $\X$ contains sufficiently many points from each subspace. 
The same conclusion applies to SSC-OMP provided that the 
term $\sqrt{d_{\max} / d_{\min}}$ is not too large, which is the case if the dimensions $d_\l$, $\l \in [L]$, of the subspaces are of the same order. 
This condition is satisfied in many practical applications, such as, e.g., for the face clustering and the handwritten digit clustering problems described in Section \ref{sec:numres}. 
We believe the occurrence of the factor $\sqrt{d_{\max}/d_{\min}}$ in \eqref{eq:ThmAffConditionOMPSSCRP} to be an artifact of our proof technique. 
Also note that Theorem \ref{th:OMPRPnoiseless} 
imposes more restrictive conditions on $\mPh$ than Theorems \ref{thm:tscrp} and \ref{th:RPNoiseless}, namely the distribution of $\Ph$ has to be rotationally invariant. 
This is a technical condition and it is not implied by \eqref{eq:conceqcondonPh}. Examples of rotationally invariant matrices satisfying  \eqref{eq:conceqcondonPh} include matrices with i.i.d. $\mathcal N(0,1/p)$ entries. 

Theorems \ref{thm:tscrp}--\ref{th:OMPRPnoiseless} apply even when the subspaces $\cS_\l$ span the ambient space $\reals^m$. 
This follows by virtue of our clustering conditions depending only on the pairwise affinities between subspaces, and pairwise affinities changing only moderately if the dimensionality is reduced down to no more than the order of the individual subspace dimensions. 

Theorems \ref{thm:tscrp}--\ref{th:OMPRPnoiseless} show that for all three algorithms, $p$ may be taken to be linear (up to $\log$-factors) in $d_{\max}$. 
We can therefore conclude that the dimensionality of the data set $\YS$ can be reduced down to the order of the largest subspace dimension without affecting  clustering performance significantly. 
This has important practical ramifications as, for all three algorithms considered, the computational complexity associated with the construction of the adjacency matrix is essentially linear in the dimension of the ambient space the data points ``live in''. 
To get an idea of the resulting overall complexity savings, let us consider the TSC algorithm 
and assume that the (high-dimensional) data set $\YS \subset \reals^m$ is projected down to $\reals^p$, with $p = O( d_{\max} \log^2(N) )$, via a Gaussian random projection; this choice of $p$ guarantees,  
by Theorem \ref{thm:tscrp}, that clustering performance is not affected significantly by dimensionality reduction. 
The complexity associated with the construction of the adjacency matrix for TSC 
is given by the cost of computing the inner products between all pairs of data points, 
and is therefore $O(m N^2)$ for the original data set $\YS \subset \reals^m$ and $O(p N^2)$ for the projected data set $\X \subset \reals^p$. 
Adding the cost for applying the Gaussian random projection results in an overall cost of 
$O( p N^2) + O(p N m) = O( d_{\max} \log^2(N)  N( N+ m))$ for building the adjacency matrix associated with $\X$. 
The resulting complexity savings for TSC are therefore given by $O( \min(m,N)/ (d_{\max} \log^2(N))$.   
The absolute run-time savings are even more pronounced for SSC-OMP and SSC, as the corresponding costs for building the adjacency matrix is larger than $O(m N^2)$. 
Further gains can be obtained by employing fast random projections \cite{ailon_almost_2013}. 

Dimensionality reduction affects the computational cost associated with the construction of the adjacency matrix only. 
The spectral clustering step, which when na\"ively implemented has complexity $O(N^3)$, may be the dominating factor in the overall computational cost, in particular when $m$ is small relative to $N^3$. 
Notwithstanding, dimensionality reduction can still lead to significant total run-time savings. Our numerical results in Section \ref{sec:numres} demonstrate this for SSC. 
To see savings on the same order for SSC-OMP and TSC, we
would have to consider problems with $N$ smaller relative to $m$. 

The probability lower bounds in Theorems \ref{thm:tscrp}--\ref{th:OMPRPnoiseless} are independent of $p$ and $m$ and require the total number of data points $N$ to be large in absolute terms in order to ensure a success probability close to one. 

Theorems \ref{thm:tscrp}--\ref{th:OMPRPnoiseless} are order-optimal in the following sense. 
If dimensionality is reduced to below $d_{\max}$, then, in general, there are points from different subspaces that are projected into the same lower-dimensional subspace, which renders the resulting clustering problem fundamentally ill-posed. 
To see this, take $d_\ell=d$, for all $\ell$, and assume that $p\leq d$. 
Next, note that the (randomly projected) points $\X_\l$ lie in the column span of $\mPh \mU^{(\ell)}$. As $\mU^{(\ell)}$ is a basis for the $d_\l$-dimensional subspace $\cS_\l \subset \reals^m$, the span of $\mPh \mU^{(\ell)}$ is $\reals^p$, for all $\ell$, and therefore all points in the projected data set $\X = \X_1 \cup ... \cup \X_L$ lie in the same $p$-dimensional subspace, which renders the clustering problem ill-posed. 


We next compare the clustering conditions \eqref{eq:adikqopol}, \eqref{eq:ThmAffConditionrp}, and \eqref{eq:ThmAffConditionOMPSSCRP} in Theorems \ref{thm:tscrp}, \ref{th:RPNoiseless}, and \ref{th:OMPRPnoiseless} with their counterparts for clustering of the original, high-dimensional data set $\YS$. Specifically, such reference conditions can be found in \cite[Thm.~2]{heckel_robust_2013} for TSC and in \cite[Thm.~2.8]{soltanolkotabi_geometric_2011} for SSC, 
but do not seem to be available for SSC-OMP for the statistical data model considered in this paper. 
However, setting $\Ph = \mI$ in the proof of Theorem \ref{th:OMPRPnoiseless}, we can easily get a reference condition for SSC-OMP. 
Rather than providing the details of this simple modification, we refer the reader to the proof in \cite[Chap.~4]{tschannen_sparse_2014}. 

\begin{corollary} \label{cor:ompssc}
Let $\rho_\l \defeq (n_\l-1)/d_\l$, $\l \in [L]$, and suppose that $\rho_{\min} \defeq \min_\l \rho_\l \geq \rho_0$, where $\rho_0 > 1$ is a numerical constant. 
If
\begin{align}
\max_{k,\l  \colon  k\neq \l} 
\aff(\cS_k,\cS_\l) 
\leq
\frac{\sqrt{\log \rho_{\min}}}{64 \log N},  \label{eq:ssompunprojcond}
\end{align}
then the graph $G$ obtained by applying SSC-OMP to the original, high-dimensional data set $\YS$ has no false connections with probability at least 
$1 - 2N^{-1}  - \sum_{\l=1}^L n_\l e^{-\sqrt{\rho_\l}  d_\l}$.
\end{corollary}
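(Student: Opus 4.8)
The statement follows by specializing the proof of Theorem~\ref{th:OMPRPnoiseless} to the trivial projection $\mPh = \mI \in \reals^{m\times m}$, so that $\X = \YS$, $\vx_j^{(\ell)} = \vy_j^{(\ell)}$, and $p = m$; in particular the rotational-invariance hypothesis on $\mPh$ becomes vacuous. Recall the structure of that proof: it suffices to show that, for every $\ell$ and every $\vx_j \in \X_\ell$, the OMP run in Step~1 of SSC-OMP with dictionary $\X \setminus \{\vx_j\}$ selects, in every iteration, a point from $\X_\ell$ only. As long as this holds, the residual $\vr_\pit$ stays in $\cS_\ell$, and after at most $d_\ell$ iterations it is zero (any $d_\ell+1$ points of $\cS_\ell$ are linearly dependent), so the per-point union bound over iterations is finite. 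A union bound over $j$ and $\ell$ then yields the no-false-connections property for the graph $G$.

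Fix $\ell$, $\vx_j \in \X_\ell$, and an iteration $\pit$ in which $\vr_\pit \in \cS_\ell \setminus \{\vect{0}\}$. Correct selection at step $\pit+1$ is guaranteed once
\[
\max_{\vx_i \in \X_\ell \setminus \{\vx_j\}} \abs{\innerprod{\vx_i}{\vr_\pit}} \;>\; \max_{\vx_i \in \X \setminus \X_\ell} \abs{\innerprod{\vx_i}{\vr_\pit}} .
\]
For the right-hand side (the ``outgoing'' term), for $\vx_i = \mU^{(k)} \va_i^{(k)} \in \X_k$ with $k\neq\ell$ write $\innerprod{\vx_i}{\vr_\pit} = \innerprod{\va_i^{(k)}}{\transp{\mU^{(k)}} \vr_\pit}$; since $\vr_\pit$ depends only on $\vx_j$ and the previously selected $\X_\ell$-points, $\va_i^{(k)}$ is uniform on $\US{d_k}$ and independent of it. Using $\norm[2]{\transp{\mU^{(k)}}\vr_\pit} \le \norm[F]{\transp{\mU^{(k)}}\mU^{(\ell)}}\norm[2]{\vr_\pit} = \sqrt{d_k \land d_\ell}\,\aff(\cS_k,\cS_\ell)\norm[2]{\vr_\pit}$, the subgaussian-type tail of $\abs{\innerprod{\va_i^{(k)}}{\vv}}$ for fixed $\vv$, and a union bound over all $i$, all $k\neq\ell$, and the at most $d_{\max}$ relevant iterations (polynomially many in $N$ summands in total), one obtains, on an event of probability at least $1 - 2N^{-1}$, that the outgoing term is at most $\mathrm{const} \cdot \max_{k\neq\ell}\aff(\cS_k,\cS_\ell)\,\log N\,\norm[2]{\vr_\pit}$. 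This is exactly the bound from the proof of Theorem~\ref{th:OMPRPnoiseless} with the affinity-perturbation term $\mathrm{const}\,\sqrt{d_{\max}/p}$ removed, since with $\mPh=\mI$ the affinities are exact and no Johnson--Lindenstrauss event on $\mPh$ is needed.

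For the left-hand side (the ``incoming'' term), one exploits that $\vr_\pit \in \cS_\ell$ and that the $n_\ell-1$ points $\va_i^{(\ell)}$, $i\neq j$, are i.i.d.~uniform on $\US{d_\ell}$: a standard lower bound on the largest order statistic of $\abs{\innerprod{\va_i^{(\ell)}}{\vu}}$, combined with the fact that OMP restricted to $\X_\ell$ coincides with OMP run on the smaller dictionary $\X_\ell \setminus \{\vx_j\}$ inside $\cS_\ell$, gives $\max_{\vx_i \in \X_\ell \setminus \{\vx_j\}} \abs{\innerprod{\vx_i}{\vr_\pit}} \ge \mathrm{const}\,\sqrt{\log\rho_\ell / d_\ell}\,\norm[2]{\vr_\pit}$ for all iterations simultaneously, on an event of probability at least $1 - n_\ell e^{-\sqrt{\rho_\ell}d_\ell}$. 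Comparing the two bounds, no false connection occurs provided $\max_{k\neq\ell}\aff(\cS_k,\cS_\ell)\,\log N \le \mathrm{const}\,\sqrt{\log\rho_{\min}}$; chasing the explicit constants of the proof of Theorem~\ref{th:OMPRPnoiseless} through this simplified chain yields the threshold $\tfrac{1}{64}$ in \eqref{eq:ssompunprojcond}. A union bound over $\ell$ accounts for the $2N^{-1}$ and the $\sum_\ell n_\ell e^{-\sqrt{\rho_\ell}d_\ell}$; the $4e^{-\tau/2}$ and two of the four $N^{-1}$ terms of Theorem~\ref{th:OMPRPnoiseless} are precisely the probabilities of the JL/affinity-preservation events for $\mPh$ and simply disappear here.

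\textbf{Main obstacle.} The delicate point is the incoming bound: $\vr_\pit$ is \emph{not} independent of the $\X_\ell$-points, so it cannot be treated as fixed. The resolution---due to Dyer et al.\ and Soltanolkotabi--Cand\`es, and already used in the proof of Theorem~\ref{th:OMPRPnoiseless}---is to analyze OMP entirely within $\cS_\ell$ (where it sees only the dictionary $\X_\ell\setminus\{\vx_j\}$) and to invoke an exchangeability / leave-one-out argument so that the lower bound on the largest inner product holds uniformly over iterations. The only other bookkeeping item is verifying that the iteration count is capped at $d_\ell$ once OMP is confined to $\cS_\ell$, which keeps all union bounds finite. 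Everything else is a verbatim transcription of the proof of Theorem~\ref{th:OMPRPnoiseless} with $\mPh$ set to the identity; see \cite[Chap.~4]{tschannen_sparse_2014} for the details.
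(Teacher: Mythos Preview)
Your proposal is correct and is exactly the paper's approach: set $\mPh=\mI$ in the proof of Theorem~\ref{th:OMPRPnoiseless}, so that the events $\Ea$, $\Eb$, $\Ee$ hold trivially (their failure probabilities---the $4e^{-\tau/2}$ and two of the $N^{-1}$ terms---drop out), leaving only $\Ec$ and $\Ed$, which give precisely $\sum_\l n_\l e^{-\sqrt{\rho_\l}d_\l}$ and $2N^{-1}$.

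One caveat on your sketch of the outgoing bound: treating $\vr_\pit$ as fixed and invoking only the subgaussian tail for $\va_i^{(k)}$ together with $\norm[2]{\transp{\mU^{(k)}}\vr_\pit}\le\norm[F]{\transp{\mU^{(k)}}\mU^{(\ell)}}\norm[2]{\vr_\pit}$ yields a bound of order $\sqrt{\log N/d_k}\,\norm[F]{\transp{\mU^{(k)}}\mU^{(\ell)}}\norm[2]{\vr_\pit}$, which after comparison with the incoming term leaves an extra $\sqrt{d_\ell}$ factor and does not deliver the $1/64$ threshold. The paper's bound on $\Ed$ instead uses Lemma~\ref{lem:maconclem}, which requires that $\resp/\norm[2]{\resp}$ be uniform on $\US{d_\ell}$; with $\mPh=\mI$ this still holds, but it follows from the rotational invariance of the $\va_j^{(\ell)}$ (the hypothesis on $\mPh$ is what becomes vacuous, not the rotational-invariance argument itself). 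Since you ultimately defer to ``chasing the constants of the proof of Theorem~\ref{th:OMPRPnoiseless}'' this is an imprecision in the narrative rather than a gap.
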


We conclude that for all three algorithms the impact of dimensionality reduction is essentially quantified through a term proportional to $\sqrt{d_{\max}/p}$ that adds to the maximum affinity between the subspaces $\cS_\l$ in the clustering conditions \eqref{eq:adikqopol}, \eqref{eq:ThmAffConditionrp}, and \eqref{eq:ThmAffConditionOMPSSCRP}. 
These clustering conditions nicely reflect the intuition that the smaller the affinities between the subspaces $\cS_\l$, the more aggressively we can reduce the dimensionality of the data set without compromising clustering performance. 

As the result in Corollary \ref{cor:ompssc} is new, a few comments on its relation to existing results, specifically those in \cite{dyer_greedy_2013} and \cite{you_sparse_2015}, are in order. 
Corollary \ref{cor:ompssc} imposes less restrictive conditions on the relative orientations of the subspaces than \cite[Thm.~3]{dyer_greedy_2013}, \cite[Thm.~2, Cor.~1]{you_sparse_2015}, but makes stronger assumptions on the data model. 
The result in \cite[Thm. 3]{you_sparse_2015} applies to subspaces with random orientations, and therefore does not allow for statements involving subspace affinities.
We refer the reader to the thesis \cite[Sec.~4.1]{tschannen_sparse_2014} for a more detailed comparison of Corollary \ref{cor:ompssc} above to \cite[Thm.~3]{dyer_greedy_2013}. Finally, numerical results corroborating the fundamental nature of the clustering condition \eqref{eq:ssompunprojcond} can be found in \cite[Sec.~5.1]{tschannen_sparse_2014}.


\section{Impact of noise}

In many practical applications the data points to be clustered are corrupted by noise, typically modeled as additive Gaussian noise. In this section, we 
study the interplay between dimensionality reduction and additive noise for the TSC algorithm. 
Specifically, we let the high-dimensional data points be corrupted by Gaussian noise according to
\[
\tilde \vy_i^{(\l)} = \vy_i^{(\l)} + \ve_i^{(\l)},
\]
where $\ve_i^{(\l)} \sim \mc N(0,(\sigma^2/m) \mI)$, and assume, as before, that $\vy_i^{(\l)}$ is drawn i.i.d.~uniformly from the intersection of the $d_\ell$-dimensional subspace $\cS_\l$ with the unit sphere. 
The dimensionality-reduced noisy data set $\tilde \X \subset \reals^p$ is obtained by applying the same realization of the random projection matrix $\mPh \in \reals^{p\times m}$ to all (noisy) data points $\tilde \vy_i^{(\l)}$. 
The elements of the sets $\tilde \X_\l$ in $\tilde \X = \tilde \X_1 \cup \ldots \cup \tilde \X_L$ are hence given by 
\begin{align}
\tilde \vx_j^{(\l)} = \mathbf{\Phi} (  \vy_i^{(\l)} + \ve_i^{(\l)}  ), \quad j \in [n_\l]. 
\label{eq:noisemodel}
\end{align}

\begin{theorem}
\label{thm:tscrpnoisy}
Choose $q$ such that $q \leq \min_\l n_\l /6$, and let $ m\geq 6\log N$. If
\begin{align}
\max_{k,\l \colon  k\neq \l}
\aff(\cS_k,\cS_\l)  
+ 
\frac{\sqrt{11}}{\sqrt{3 \tilde c}}  \frac{\sqrt{d_{\max}}}{\sqrt{p} } 
+ \frac{\sigma (1+\sigma)\sqrt{6} }{\sqrt{ \bar c \log N}} \frac{\sqrt{d_{\max}} }{\sqrt{p} }
\leq \frac{1}{15 \log N },
\label{eq:adikqopolnoisy}
\end{align}
where $d_{\max} = \max_\l d_\l$ and 
$\bar c = \min(6,\tilde c)$ with $\tilde c$  
 the constant in the concentration inequality \eqref{eq:conceqcondonPh},  then the graph $G$ obtained by applying TSC to $\tilde \X$ has no false connections with probability at least $1 - 14 N^{-1} - 2Ne^{-m}  - \sum_{\l=1}^L n_\l e^{-c(n_\l-1)}$, where $c>1/20$ is a numerical constant. 
\end{theorem}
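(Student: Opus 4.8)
The plan is to follow the two-step scheme behind Theorem~\ref{thm:tscrp}: first reduce ``no false connections'' to a deterministic condition on the inner products $\innerprod{\tilde\vx_j^{(\l)}}{\tilde\vx_i^{(k)}}$, then verify that condition under the noise model \eqref{eq:noisemodel}. For the deterministic reduction, note that, for fixed $j$, Step~1 of TSC selects the $q$ indices $i\neq j$ maximizing $\abs{\innerprod{\tilde\vx_j^{(\l)}}{\tilde\vx_i}}$; since $\tilde\vx_j^{(\l)}$ is a common factor, no false connection originates at $\tilde\vx_j^{(\l)}$ as soon as at least $q$ of the same-cluster quantities $\abs{\innerprod{\tilde\vx_j^{(\l)}}{\tilde\vx_i^{(\l)}}}$, $i\neq j$, exceed $\max_{k\neq\l}\max_i\abs{\innerprod{\tilde\vx_j^{(\l)}}{\tilde\vx_i^{(k)}}}$ (no normalization of the data points is needed). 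Writing $\tilde\vx=\mPh\vy+\mPh\ve$ and expanding bilinearly,
\begin{align}
\innerprod{\tilde\vx_j^{(\l)}}{\tilde\vx_i^{(k)}}
=\innerprod{\mPh\vy_j^{(\l)}}{\mPh\vy_i^{(k)}}
+\innerprod{\mPh\vy_j^{(\l)}}{\mPh\ve_i^{(k)}}
+\innerprod{\mPh\ve_j^{(\l)}}{\mPh\vy_i^{(k)}}
+\innerprod{\mPh\ve_j^{(\l)}}{\mPh\ve_i^{(k)}},
\label{eq:planexpand}
\end{align}
so it suffices to (i)~lower-bound the $q$-th largest of $\abs{\innerprod{\mPh\vy_j^{(\l)}}{\mPh\vy_i^{(\l)}}}$, (ii)~upper-bound $\abs{\innerprod{\mPh\vy_j^{(\l)}}{\mPh\vy_i^{(k)}}}$ for $k\neq\l$, and (iii)~bound, uniformly over the $O(N^2)$ relevant index pairs, the three \emph{noise cross-terms} in \eqref{eq:planexpand}.

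For (i) and (ii) I would reuse the estimates from the proof of Theorem~\ref{thm:tscrp} essentially verbatim: the vectors $\mPh\vy_i^{(\l)}$ lie in the $d_\l$-dimensional subspace $\mPh\cS_\l\subset\reals^p$, and \eqref{eq:conceqcondonPh} already yields both a positive lower bound on the $q$-th largest same-cluster inner product (contributing the term $\sum_\l n_\l e^{-c(n_\l-1)}$ to the failure probability) and the projected-affinity bound $\aff(\mPh\cS_k,\mPh\cS_\l)\le\aff(\cS_k,\cS_\l)+\tfrac{\sqrt{11}}{\sqrt{3\tilde c}}\sqrt{d_{\max}/p}$ with probability $1-O(N^{-1})$; together these are exactly what makes \eqref{eq:adikqopol} sufficient in the noiseless case, so the first two summands of \eqref{eq:adikqopolnoisy} are accounted for. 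For (iii) I would condition on $\mPh$ and on all the $\vy_i^{(k)}$: then $\mPh\ve_i^{(k)}$ is a centered Gaussian vector, so $\innerprod{\mPh\vy_j^{(\l)}}{\mPh\ve_i^{(k)}}$ is a centered Gaussian with variance $(\sigma^2/m)\norm[2]{\herm{\mPh}\mPh\vy_j^{(\l)}}^2$, while $\innerprod{\mPh\ve_j^{(\l)}}{\mPh\ve_i^{(k)}}$ is a bilinear form in two independent Gaussian vectors; both are controlled once $\sigmax{\mPh}$ and $\norm[2]{\mPh\vy}$ (handled by \eqref{eq:conceqcondonPh}) and the lengths $\norm[2]{\ve_i^{(k)}}^2$ are controlled — the latter are $(\sigma^2/m)$-scaled $\chi^2_m$ variables concentrating around $\sigma^2$, and this is where $m\ge6\log N$ enters: the corresponding $\chi^2$ tail produces the failure term $2Ne^{-m}$ and, through taking the slower of that rate and the rate $\tilde c$ in \eqref{eq:conceqcondonPh}, the constant $\bar c=\min(6,\tilde c)$. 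A union bound over all pairs, followed by division by the lower bound from (i), then converts these estimates into the additive term $\tfrac{\sigma(1+\sigma)\sqrt6}{\sqrt{\bar c\log N}}\sqrt{d_{\max}/p}$ of \eqref{eq:adikqopolnoisy}; collecting all failure events gives $1-14N^{-1}-2Ne^{-m}-\sum_\l n_\l e^{-c(n_\l-1)}$.

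I expect the main obstacle to be step (iii) for the mixed term $\innerprod{\mPh\vy}{\mPh\ve}$, whose conditional variance involves $\norm[2]{\herm{\mPh}\mPh\vy}^2$: \eqref{eq:conceqcondonPh} controls $\norm[2]{\mPh\vx}$ for a fixed $\vx$, but not $\norm[2]{\herm{\mPh}\vw}$ or $\sigmax{\mPh}$ directly. A clean way around this is the polarization identity $\innerprod{\mPh\vy}{\mPh\ve}=\tfrac14\big(\norm[2]{\mPh(\vy+\ve)}^2-\norm[2]{\mPh(\vy-\ve)}^2\big)$: conditioning on $\ve$, the vectors $\vy\pm\ve$ are fixed, so \eqref{eq:conceqcondonPh} applies and the estimate reduces to the Johnson--Lindenstrauss inequality plus the scalar Gaussian $\innerprod{\vy}{\ve}\sim\mathcal N(0,\sigma^2/m)$, which is negligible since $m$ is large. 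The remaining effort is bookkeeping: matching the precise constants in \eqref{eq:adikqopolnoisy} (notably the $1/\sqrt{\log N}$ scaling) against the noiseless lower bound from (i), and tracking which failure events feed into the $14N^{-1}$ and $2Ne^{-m}$ terms.
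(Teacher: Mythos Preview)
Your high-level plan---expand $\innerprod{\tilde\vx_j^{(\l)}}{\tilde\vx_i^{(k)}}$ bilinearly, reuse the noiseless argument for the signal--signal term, and control the three noise cross-terms by Gaussian tail bounds after conditioning---is exactly the paper's approach. The divergence is in how you propose to handle the conditional variance of $\innerprod{\mPh\vy}{\mPh\ve}$, and there your polarization workaround is both different from what the paper does and would not recover the statement as written.

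The paper does \emph{not} avoid bounding $\sigma_{\max}(\mPh)$. The concentration inequality \eqref{eq:conceqcondonPh}, via a net argument (Theorem~\ref{thm:rauhutconc} with $\mU=\mI$), yields directly $\norm[2\to2]{\transp{\mPh}\mPh-\mI}\le\sqrt{6m/(\tilde c\,p)}$ with probability at least $1-2e^{-m}$. Writing $\innerprod{\mPh\vy}{\mPh\ve}=\innerprod{\transp{\mPh}\mPh\vy}{\ve}$ and conditioning on $\mPh,\vy$ then gives a scalar Gaussian with variance $(\sigma^2/m)\norm[2]{\transp{\mPh}\mPh\vy}^2\le(\sigma^2/m)\norm[2\to2]{\transp{\mPh}\mPh}^2$, and it is this spectral bound that produces the $\sigma(1+\sigma)$ prefactor in \eqref{eq:adikqopolnoisy}. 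The failure term $2Ne^{-m}$ and the constant $\bar c=\min(6,\tilde c)$ both originate here (the latter is just what guarantees $\sqrt{6m/(\bar c p)}\ge1$ so that $\norm[2\to2]{\transp{\mPh}\mPh}\le 1+\delta'\le2\delta'$); they do \emph{not} come from the $\chi^2$ concentration of $\norm[2]{\ve}$. You are right that $m\ge6\log N$ is needed for the $\chi^2$ tail on $\norm[2]{\ve}$, but that event contributes only to the $14N^{-1}$ budget.

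Your polarization route has a genuine gap. Applying \eqref{eq:conceqcondonPh} to $\vy\pm\ve$ gives
\[
\abs{\innerprod{\mPh\vy}{\mPh\ve}-\innerprod{\vy}{\ve}}\le\tfrac{t}{2}\bigl(\norm[2]{\vy}^2+\norm[2]{\ve}^2\bigr)\approx\tfrac{t}{2}(1+\sigma^2),
\]
with $t$ of order $\sqrt{\log N/(\tilde c\,p)}$ after the union bound over pairs. The right-hand side does \emph{not} vanish as $\sigma\to0$, so the signal--noise cross-term would contribute a $\sigma$-free summand of order $\sqrt{\log N/p}$ to $\epsilon$. That cannot produce the factor $\sigma(1+\sigma)$ in the third summand of \eqref{eq:adikqopolnoisy}; at best you could try to absorb it into the second summand, but then you are proving a weaker theorem with different constants. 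The paper's spectral-norm approach avoids this because the conditional variance of $\innerprod{\transp{\mPh}\mPh\vy}{\ve}$ carries the $\sigma^2$ factor automatically.
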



Theorem \ref{thm:tscrpnoisy} states that in the noisy case---just as in the noiseless case---TSC succeeds for $p$ as small as $d_{\max}$, order-wise, provided that the affinities between the subspaces $\cS_\l$ are sufficiently small and $\tilde \X$ contains sufficiently many points from each subspace. 
More specifically, comparing the noiseless clustering condition \eqref{eq:adikqopol} to \eqref{eq:adikqopolnoisy}, we can see that the impact of noise is simply to add the offset $\frac{\sigma (1+\sigma)\sqrt{6} }{\sqrt{\bar c \log N}} \frac{\sqrt{d_{\max}} }{\sqrt{p} }$ to the LHS of the clustering condition. 
For fixed $\sigma$, owing to the factor $\sqrt{d_{\max}/p}$, the impact of noise on the effective affinity as quantified by the LHS of \eqref{eq:adikqopolnoisy} becomes more pronounced when the dimensionality is reduced more aggressively.

Theorem \ref{thm:tscrpnoisy} continues to hold (with $\bar c$ in the term $\frac{\sigma (1+\sigma)\sqrt{6} }{\sqrt{ \bar c \log N}} \frac{\sqrt{d_{\max}} }{\sqrt{p} }$ replaced by a numerical constant, and $e^{-m}$ in the success probability replaced by $e^{-m}$), if noise $\tilde \ve_i^{(\l)} \sim \mc N(0,(\sigma^2/p) \mI)$ 
is added \emph{after} random projection according to $\tilde \vx_j^{(\l)} = \mathbf{\Phi}   \vy_i^{(\l)} + \tilde \ve_i^{(\l)} $. 
This is not surprising, as the absolute amount of noise injected remains the same, i.e., $\EX{\norm[2]{ \tilde \ve_i^{(\l)} }^2 } = \EX{\norm[2]{ \ve_i^{(\l)} }^2 } = \sigma^2$. 

We finally note that an approach similar to that used for TSC can be applied to extend our result for SSC-OMP to the noisy case resulting in 
 clustering conditions analogous to those for TSC.  
The corresponding technical details are, however, significantly more involved and cumbersome. 
We therefore decided not to state the formal result. 
We expect that a similar result can be proven for (a robust version of) SSC as our simulation  results in Section \ref{sec:impnoise} indicate that  the qualitative behavior of all three algorithms in the presence of noise is essentially identical, and, in addition, is qualitatively accurately predicted by Theorem \ref{thm:tscrpnoisy}. 

\newcommand{\p}{p} 

\section{Numerical Results} \label{sec:numres}

%
%
%
%

We evaluate the impact of dimensionality reduction on
the clustering error (CE), i.e., the fraction of misclustered points, for TSC, SSC, and SSC-OMP applied to synthetic data as well as to publicly available standard data sets widely used in the subspace clustering literature. Specifically, we consider the problems of clustering faces, handwritten digits, and gene expression data. 
All three algorithms, TSC, SSC, and SSC-OMP, were observed to tolerate massive dimensionality reduction in all experiments. 
The performance ranking of the three algorithms  according to CE varies  considerably across data sets. Specifically, in order to demonstrate that none of the algorithms uniformly outperforms the others, we chose to report the results for all three data sets. 
We also compare the algorithms in terms of their running times on a PC with 32 GB RAM and 8-core Intel Core i7-3770K CPU clocked at 3.50 GHz. 

TSC and SSC-OMP were implemented in Matlab following the specifications in Section \ref{sec:AlgIntro}. For SSC, we used the Matlab implementation provided in \cite{elhamifar_sparse_2013}, which is based on Lasso (instead of $\ell_1$-minimization) and uses the Alternating Direction Method of Multipliers (ADMM). Code to reproduce the experiments in this section is available at \url{http://www.nari.ee.ethz.ch/commth/research/}. 
Information on the number of Monte Carlo runs used in our experiments is contained in this Matlab code. 

Unless stated otherwise, we select the Lasso parameter $\lambda$ in SSC from the set $\{0.001, 0.002,\allowbreak 0.004, 0.008, 0.01, 0.02, 0.04, 0.08, 0.1, 0.2\}$ such that the lowest clustering error is obtained on the original high-dimensional data set $\YS$. The parameters $q$ and $s_{\max}$ for TSC and SSC-OMP, respectively, are chosen analogously from the set $\{2, 4, \dots, 18\}$. Although these parameter selection procedures may not yield the optimum parameters for the projected data set $\X$ for all realizations of $\Ph$, we desist from selecting the parameters for every realization of $\Ph$ individually as this may lead to overly optimistic results.

As projection matrices we consider i.i.d.~$\mathcal N(0,1/p)$ Gaussian random matrices (referred to as GRP) and fast random projection (FRP) matrices \cite{ailon_almost_2013} 
given by the real part of $\mF \mD \in \complexset^{p \times m}$, where $\mD \in \reals^{m\times m}$ is diagonal with main diagonal elements drawn i.i.d.~uniformly from $\{-1,1\}$, and $\mF \in \complexset^{p\times m}$ is obtained by choosing a set of $p$ rows uniformly at random from the rows of an $m \times m$ discrete Fourier transform (DFT) matrix. 
In all experiments the dimensionality-reduced data set $\X$ is obtained 
by applying the (same) realization of either a GRP or an FRP matrix to all data points in $\YS$.  
The FRP can be implemented efficiently by premultiplying $\YS$ by $\mD$ and then applying the FFT to each data point. With regards to storage space, we note that the FRP only requires the storage of a binary $m$-dimensional vector (namely the diagonal entries of $\mD$), in contrast to $mp$ real numbers for GRPs.

\newcommand{\plotred}{red!90!white}
\newcommand{\plotgreen}{brown}
\newcommand{\plotblue}{blue!70!black}

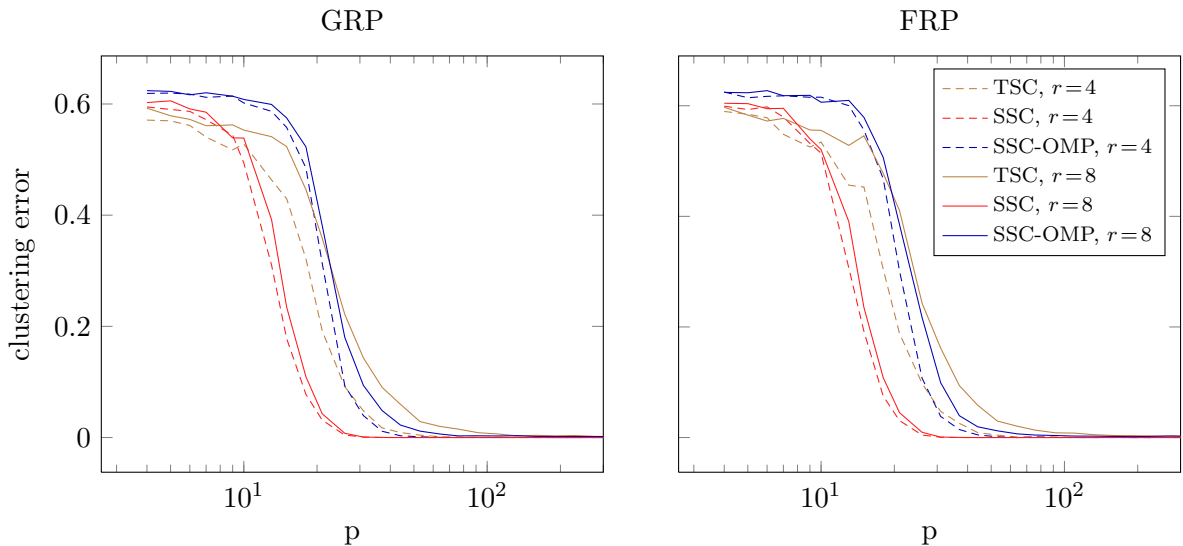
\begin{figure}[h!]

    \centering
\begin{tikzpicture}[scale=1] 

    \begin{groupplot}[%
      group style={%
        group size=2 by 1,
        xlabels at=edge bottom, 
      ylabels at=edge left,
      xticklabels at=edge bottom,
      yticklabels at=edge left,
      },
      xmax=300,
      xlabel = p,
      ylabel = {clustering error},
      width = 0.5\textwidth
      ]


\nextgroupplot[xmode=log, title=GRP]%
    \addplot +[mark=none,densely dashed,\plotgreen] table[x index=0,y index=2]{./data/CESynth0.2TSCb.dat};
    \addplot +[mark=none,densely dashed,\plotred] table[x index=0,y index=1]{./data/CESynth0.2TSCb.dat};
    \addplot +[mark=none,densely dashed,\plotblue] table[x index=0,y index=3]{./data/CESynth0.2TSCb.dat};
    \addplot +[mark=none,solid,\plotgreen] table[x index=0,y index=2]{./data/CESynth0.4TSCb.dat};
    \addplot +[mark=none,solid,\plotred] table[x index=0,y index=1]{./data/CESynth0.4TSCb.dat};
    \addplot +[mark=none,solid,\plotblue] table[x index=0,y index=3]{./data/CESynth0.4TSCb.dat};

\nextgroupplot[xmode=log, title={FRP},
 legend entries={ {TSC, $r\! =\! 4$}, {SSC, $r\!=\! 4$}, {SSC-OMP, $r\!=\! 4$}, {TSC, $r\! = \! 8$}, {SSC, $r\!=\!8$},  {SSC-OMP, $r\!=\!8$}}, 
    legend style={
            cells={anchor=west},
                    legend pos= north east,
                    font=\scriptsize,}
]%
    \addplot +[mark=none,densely dashed,\plotgreen] table[x index=0,y index=5]{./data/CESynth0.2TSCb.dat};
    \addplot +[mark=none,densely dashed,\plotred] table[x index=0,y index=4]{./data/CESynth0.2TSCb.dat};
    \addplot +[mark=none,densely dashed,\plotblue] table[x index=0,y index=6]{./data/CESynth0.2TSCb.dat};
    \addplot +[mark=none,solid,\plotgreen] table[x index=0,y index=5]{./data/CESynth0.4TSCb.dat};
    \addplot +[mark=none,solid,\plotred] table[x index=0,y index=4]{./data/CESynth0.4TSCb.dat};
    \addplot +[mark=none,solid,\plotblue] table[x index=0,y index=6]{./data/CESynth0.4TSCb.dat};
    
\end{groupplot}    
    
\end{tikzpicture}
\caption[CE for clustering synthetic data]{\label{fig:intersectrpce} Clustering error for synthetic data
 as a function of $p$ using GRP (left) and FRP (right). Recall that for $r = 4$ and $r = 8$ we have $\aff(\cS_k,\cS_\l) \geq \sqrt{1/5}$ and $\aff(\cS_k,\cS_\l) \geq \sqrt{2/5}$, respectively, for all $k,\l \in [L]$, $k \neq \l$.}
\end{figure}
    
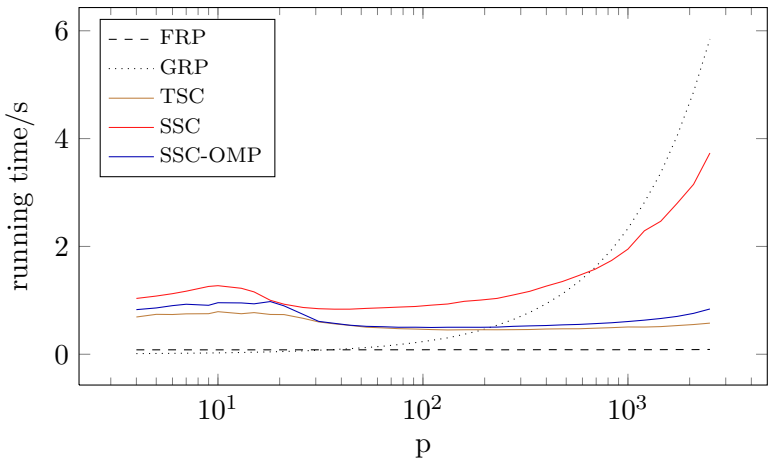
\begin{figure}[h!]
\centering
\begin{tikzpicture}[scale=1] 
    \begin{semilogxaxis}[name=plot3,
            xlabel=p,
    ylabel = {running time/s},
    width=0.65\textwidth,
    height = 0.4\textwidth,
    legend entries={ {FRP}, {GRP}, {TSC}, {SSC}, {SSC-OMP}},
    legend style={
            cells={anchor=west},
                    legend pos= north west,
                    font=\scriptsize,}
    ]
    \addplot +[mark=none,dashed,black] table[x index=0,y index=2]{./data/TCompSynth0.4TSCb.dat};
    \addplot +[mark=none,dotted,black] table[x index=0,y index=1]{./data/TCompSynth0.4TSCb.dat};
    \addplot +[mark=none,solid,\plotgreen] table[x index=0,y index=4]{./data/TCompSynth0.4TSCb.dat};
    \addplot +[mark=none,solid,\plotred] table[x index=0,y index=3]{./data/TCompSynth0.4TSCb.dat};
    \addplot +[mark=none,solid,\plotblue] table[x index=0,y index=5]{./data/TCompSynth0.4TSCb.dat};
    \end{semilogxaxis}
    
\end{tikzpicture}
\caption[Running times for clustering synthetic data]{\label{fig:intersectrprt} Running times (in seconds) for clustering synthetic data.}
\end{figure}

\subsection{Synthetic data}

\subsubsection{Comparison of TSC, SSC, and SSC-OMP}

We use the data model described in Section \ref{sec:perfguar} with $m=2^{15}=32768$ and generate $L=3$ subspaces $\cS_\l$ of $\reals^{m}$ of dimension $d=20$ at random such that every pair of subspaces intersects in at least $r$ dimensions; this implies $\aff(\cS_k,\cS_\l) \geq \sqrt{r/d}$, for all $k,\l \in [L], k\neq \l$. More specifically, we take the basis matrices to be given by $\mU^{(\l)} = [\mU \;\, \tilde \mU^{(\l)}]$, where $\mU \in \reals^{m \times r}$ and the $\tilde \mU^{(\l)} \in \reals^{m \times (d-r)}$, $\l \in [L]$, are chosen uniformly at random among all orthonormal matrices of dimensions $m \times r$ and $m \times (d-r)$, respectively.
We sample $n_\l = 80$ data points, for each $\l \in [L]$, resulting in a total of $N=240$ data points.

In Figure \ref{fig:intersectrpce}, we plot the CE as a function of $p$ for TSC, SSC, and SSC-OMP applied to the dimensionality-reduced data set $\X$ with $r=4$ and $r=8$. Figure \ref{fig:intersectrprt} shows the running times corresponding to the application of the FRP and the GRP matrix to the (entire) data set $\YS$ along with the running times of the clustering algorithms alone. 

The results show, as predicted by Theorems  \ref{thm:tscrp}--\ref{th:OMPRPnoiseless}, 
 that TSC, SSC, and SSC-OMP, indeed, succeed provided that $\sqrt{d/p}$ is sufficiently small. 
Specifically, we observe a transition to $\text{CE} \approx 0$ for $p$ between $20$ and $100$. 
As the subspaces $\cS_\l$ are of dimension $20$ this corroborates the fact that the dimensionality of the data can be reduced down to the dimension of the subspaces without compromising clustering performance significantly. Equivalently, we accomplish a dimensionality reduction by a factor of about $1600$--$320$.

For all three algorithms the numerical results further confirm the tradeoff between the affinities of the $\cS_\l$ and the amount of dimensionality reduction possible as quantified by the clustering conditions  \eqref{eq:adikqopol}, \eqref{eq:ThmAffConditionrp}, and \eqref{eq:ThmAffConditionOMPSSCRP}. 
Specifically, the CE increases as $r$ and hence $\aff(\cS_k,\cS_\l)$ increases.
In this example, 
SSC consistently outperforms TSC and SSC-OMP, albeit at the cost of significantly longer running time (see Figure \ref{fig:intersectrprt}). While the running time of SSC exhibits very pronounced increasing behavior in $p$, that of SSC-OMP shows much less pronounced increases, and that of TSC does not increase notably in $p$.
It is furthermore interesting to see that the clustering performance is essentially identical for FRP and GRP. This is remarkable as the application of FRP requires only $O(m \log m)$ operations (per data point) and therefore its running time does not depend on $p$. Application of the GRP, in contrast, requires $O(mp)$ operations (per data point), which results in a running time that is linear in $p$. 

%

\subsubsection{\label{sec:impnoise}Impact of noise}

In the next experiment we study the interplay between noise and dimensionality reduction.  
We use the data model described in Section \ref{sec:perfguar} with $m=100$ and generate $L=2$ orthogonal subspaces $\cS_\l$ of $\reals^{m}$ of dimension $d=10$. This ensures that the affinity between the subspaces equals $0$ (fixing the affinity to some other constant would not change the qualitative conclusions).
We generate the noisy data set $\tilde \YS$ by sampling $n_\ell=30$ points from each of the two subspaces
and adding $\mc N(0, (\sigma^2/m) \mI)$ noise. 
 Figure \ref{fig:phasediag} shows the CE as a function of $\sqrt{d/p}$ and $\sigma$ for dimensionality reduction via GRP. 
 
The clustering condition in Theorem \ref{thm:tscrpnoisy} guarantees that TSC succeeds as long as  $\sqrt{d/p} (c_1 + \sigma (c_2 + \sigma)) \leq c_3$, where $c_1,c_2,c_3$ are independent of $d,p,m$, and $\sigma^2$. 
In order to find out whether this sufficient condition predicts the fundamental clustering behavior qualitatively correctly, we test whether a phase transition, separating the region where clustering succeeds from that where it fails, indeed, occurs at 
\begin{align}
\sqrt{d/p} (c_1 + \sigma (c_2 + \sigma)) = c_3.
\label{eq:phastransline}
\end{align} 
To this end, we fit \eqref{eq:phastransline}---by choosing $c_1,c_2,c_3$---into the plots in  Figure \ref{fig:phasediag} and observe that the answer is in the affirmative. Moreover, our numerical results show that the phase transition behavior of SSC and SSC-OMP is essentially identical to that of TSC, which provides evidence for SSC and SSC-OMP behaving similarly to TSC in the noisy case.

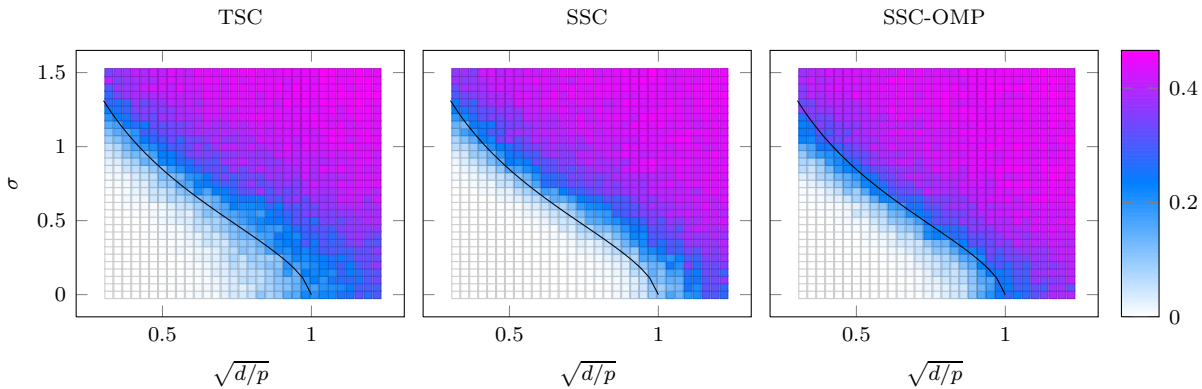
\begin{figure}
\centering
{
\pgfplotsset{colormap/cool} 
\begin{tikzpicture}[scale=1,font=\scriptsize]
\begin{groupplot}[%
      group style={%
        group size=3 by 1,
                horizontal sep=0.25cm,vertical sep=2.2cm,
        ylabels at=edge left,yticklabels at=edge left,
      },
      xlabel = $\sqrt{d/p}$,
      ylabel = $\sigma$,
      width = 0.36\textwidth
      ]

   \nextgroupplot[title=TSC,clip marker paths=true]
 \addplot[mark=square*,only marks, scatter, scatter src=explicit,
 mark size=1.5]
  file {./data/PhaseSynthGRP_TSC.dat};

\addplot[mark=none, black]
    gnuplot[id=log,domain=0.302:1]{ 
    (-0.1 +sqrt( 0.1^2 - 4*(0.8 - 0.8/x ) ))/2
    };
  \nextgroupplot[title=SSC,clip marker paths=true]
  \addplot[mark=square*,only marks, scatter, scatter src=explicit,mark size=1.5]
  file {./data/PhaseSynthGRP_SSC.dat};
    \addplot[mark=none, black]
    gnuplot[id=log,domain=0.302:1]{ 
    (-0.1 +sqrt( 0.1^2 - 4*(0.8 - 0.8/x ) ))/2
    };
  \nextgroupplot[title=SSC-OMP,colorbar,clip marker paths=true]
  \addplot[mark=square*,only marks, scatter, scatter src=explicit,mark size=1.5]
  file {./data/PhaseSynthGRP_SSC-OMP.dat};
  \addplot[mark=none, black]
    gnuplot[id=log,domain=0.302:1]{ 
    (-0.1 +sqrt( 0.1^2 - 4*(0.8 - 0.8/x ) ))/2
    };
\end{groupplot}
\end{tikzpicture}
}
\caption{\label{fig:phasediag}
CE (color coded) as a function of $\sqrt{d/p}$ and $\sigma$ for $L=2$ orthogonal subspaces of $\reals^{100}$.
The black lines correspond to the curve $\sqrt{d/p}(0.8 + \sigma(0.1+\sigma)) = 0.8$, and roughly separate the regimes where clustering succeeds from that where it fails. 
}
\end{figure}

\subsubsection{Dimensionality reduction when the subspaces span the ambient space}

As noted in Section \ref{sec:perfguar}, Theorems \ref{thm:tscrp}--\ref{thm:tscrpnoisy} 
indicate that dimensionality reduction down to the order of the subspace dimensions is possible even when the subspaces $\cS_\l$ span the ambient space $\reals^m$. To verify this observation empirically, we perform the following experiment. We draw a random Gaussian matrix $\mV \in \reals^{200\times 200}$. With probability one, the columns of $\mV$ span $\reals^{200}$. We then extract the $200\times 20$ matrices $\mV^{(\ell)}$ from $\mV$ according to $[\mV^{(1)} \; \ldots \; \mV^{(10)}] = \mV$, and let the subspace $\cS_\ell$ be given by the span of $\mV^{(\ell)}, \ell=1,...,10$.  
This guarantees that the union of the $\cS_\ell$ span $\reals^{200}$. 
Note, however, that the affinities between pairs of the resulting subspaces will be small with high probability. We again use the data model described in Section \ref{sec:perfguar} and sample $n_\l = 60$ points on $\cS_\l \cap \US{d_\l}$, for all $\l \in [L]$, to obtain a data set $\YS$ with a total of $N = 600$ points. 
We select the values for $q$, $\lambda$, and $s_{\max}$ that  yield the lowest CE for the majority of values for $p$. 

Figure \ref{fig:ambspan} shows the CE as a function of $p$ for TSC, SSC, and SSC-OMP. 
The CE starts to be non-zero for $p < 60$ for TSC and SSC-OMP, and for $p < 40$ for SSC. We therefore conclude that the dimensionality can, indeed, be reduced, quite significantly, even when the subspaces span the ambient space, as indicated by Theorems \ref{thm:tscrp}--\ref{thm:tscrpnoisy}. 

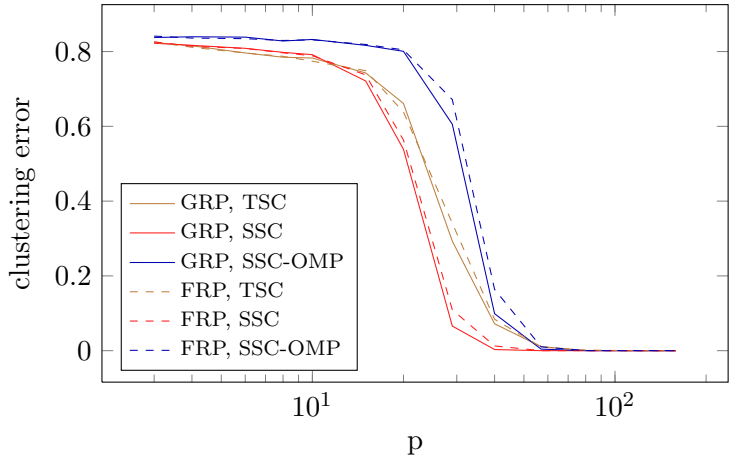
\begin{figure}[h!]
    
    \centering
\begin{tikzpicture}[scale=1] 
    \begin{semilogxaxis}[ name=plot1,
            xlabel=p,
            ylabel=clustering error,
    width =0.6\textwidth,
    height = 0.4\textwidth,
    legend entries={ {GRP, TSC}, {GRP, SSC}, {GRP, SSC-OMP}, { FRP, TSC}, { FRP, SSC}, { FRP, SSC-OMP}}, 
    legend style={
            cells={anchor=west},
                    legend pos= south west,
                    font=\scriptsize,}
    ]
    \addplot +[mark=none,solid,\plotgreen] table[x index=0,y index=2]{./data/CESynthFR.dat};
    \addplot +[mark=none,solid,\plotred] table[x index=0,y index=1]{./data/CESynthFR.dat};
    \addplot +[mark=none,solid,\plotblue] table[x index=0,y index=3]{./data/CESynthFR.dat};
    \addplot +[mark=none,dashed,\plotgreen] table[x index=0,y index=5]{./data/CESynthFR.dat};
    \addplot +[mark=none,dashed,\plotred] table[x index=0,y index=4]{./data/CESynthFR.dat};
    \addplot +[mark=none,dashed,\plotblue] table[x index=0,y index=6]{./data/CESynthFR.dat};
    \end{semilogxaxis}
    
\end{tikzpicture}
\caption[Subspaces spanning the ambient space]{\label{fig:ambspan}CE as a function of $p$ for $L=10$ subspaces that collectively span the ambient space $\reals^{200}$.}
\end{figure}

\subsection{Clustering faces}

We next evaluate the impact of dimensionality reduction 
in the problem of clustering face
 images taken from the Extended Yale B data set \cite{georghiades_illumination_2001,lee_acquiring_2005}, 
which contains $192 \times 168$ pixel ($m = 32256$) 
frontal face images of $38$ individuals, with $64$ images per individual, each acquired under different illumination conditions. 
The motivation for applying subspace clustering algorithms to this problem stems from the insight that the vectorized images of a given face taken under varying illumination conditions lie approximately in a 9-dimensional linear subspace \cite{basri_lambertian_2003}. Each 9-dimensional subspace $\cS_\l$ would then contain the images corresponding to a given person.

We generate $\YS$ by first selecting a subset of $L=2$ individuals uniformly at random from the set of all $38 \choose 2$ pairs 
and then collecting all images corresponding to the two selected individuals.
In Figure \ref{fig:fracesrprt}, we plot the corresponding CE and the running times as a function of $p$. 
Again, for each $p$, the CE and the running times are obtained by averaging over $500$ problem instances generated by randomly choosing $100$ instances of $\YS$ and $5$ realizations of the projection matrix per chosen data set $\YS$. In contrast to the preceding experiment, here, SSC-OMP consistently outperforms TSC and SSC. 
For all three algorithms the dimensionality of the data can be reduced by a factor of about $100$ without notably increasing the CE. 
Note, however, that in this experiment the dimensionality cannot be reduced as aggressively as in the preceding synthetic data experiment. Specifically, here the data points lie in $9$-dimensional subspaces and dimensionality reduction by a factor of $100$ corresponds to $p \approx 322$. 
One possible explanation for this observation is that the principal angles between the subspaces spanned by the face images of different subjects are typically small (see \cite[Sec. 7]{elhamifar_sparse_2013}), which means that the subspace affinities in this data set are large. 
The conclusions regarding running times and choice of the random projection matrix are analogous to those reported for synthetic data above.

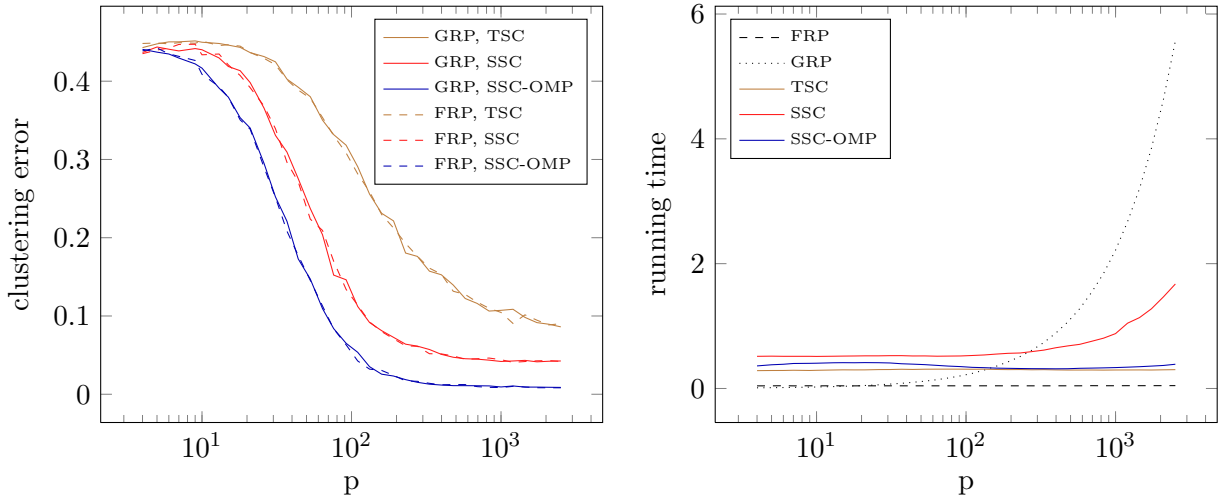
\begin{figure}[h!]
    
    \centering
\begin{tikzpicture}[scale=1]

   \begin{groupplot}[%
      group style={%
        group size=2 by 1,
           horizontal sep=1.5cm,vertical sep=0.93cm,
      },
      xlabel = p,
      ylabel = {clustering error},
      width = 0.5\textwidth
      ]

\nextgroupplot[xmode=log, ylabel= clustering error,
legend entries={ {GRP, TSC}, {GRP, SSC}, {GRP, SSC-OMP}, { FRP, TSC}, { FRP, SSC}, { FRP, SSC-OMP}}, 
    legend style={
            cells={anchor=west},
                    legend pos= north east,
                    font=\tiny,}
]%
    \addplot +[mark=none,solid,\plotgreen] table[x index=0,y index=2]{./data/CEFacesTSCb.dat};
    \addplot +[mark=none,solid,\plotred] table[x index=0,y index=1]{./data/CEFacesTSCb.dat};
    \addplot +[mark=none,solid,\plotblue] table[x index=0,y index=3]{./data/CEFacesTSCb.dat};
    \addplot +[mark=none,dashed,\plotgreen] table[x index=0,y index=5]{./data/CEFacesTSCb.dat};
    \addplot +[mark=none,dashed,\plotred] table[x index=0,y index=4]{./data/CEFacesTSCb.dat};
    \addplot +[mark=none,dashed,\plotblue] table[x index=0,y index=6]{./data/CEFacesTSCb.dat};

    \nextgroupplot[xmode=log, ylabel= running time,
    legend entries={ {FRP}, {GRP}, {TSC}, {SSC}, {SSC-OMP}}, 
    legend style={
            cells={anchor=west},
                    legend pos= north west,
                    font=\tiny,}
    ]%
    \addplot +[mark=none,dashed,black] table[x index=0,y index=2]{./data/TCompFacesTSCb.dat};
    \addplot +[mark=none,dotted,black] table[x index=0,y index=1]{./data/TCompFacesTSCb.dat};
    \addplot +[mark=none,solid,\plotgreen] table[x index=0,y index=4]{./data/TCompFacesTSCb.dat};
    \addplot +[mark=none,solid,\plotred] table[x index=0,y index=3]{./data/TCompFacesTSCb.dat};
    \addplot +[mark=none,solid,\plotblue] table[x index=0,y index=5]{./data/TCompFacesTSCb.dat};

\end{groupplot}    
   
\end{tikzpicture}
\caption[Running times and CE for clustering faces; singular values of data submatrices]{\label{fig:fracesrprt} Clustering error and running times (in seconds) for clustering $L=2$ faces from the Extended Yale B data set. 
}
\end{figure}

\subsection{Clustering handwritten digits}

In this experiment, we investigate the impact of dimensionality reduction in the context of clustering 
images of handwritten digits. We use the MNIST data set \cite{mnist_2013} containing $10,000$ images of (horizontally and vertically) aligned handwritten digits of size $28 \times 28$ pixels ($m= 784$). The motivation for employing subspace clustering in this context 
stems from the observation that vectorized images of different handwritten versions of the same digit tend to lie near a low-dimensional subspace \cite{hastie_metrics_1998}. 

We generate the data sets $\YS$ by selecting $250$ images (out of $1000$) uniformly at random from each of the sets corresponding to the digits $2$, $4$, and $8$.
There is no specific reason for our choice of the digits $2$, $4$, and 8; other combinations of three digits yield similar results.  However, some combinations of digits are more difficult to cluster than others, e.g.,  1 and 7 are ``closer'' (in terms of the affinities between the subspaces the corresponding images approximately lie in) than 1 and 8; clustering $1$ and $7$ therefore typically results in a larger error than clustering $1$ and $8$. 
The results depicted in Figure \ref{fig:digitsrprt} show that the dimensionality of the data set can be reduced from $m= 784$ to $p=200$, i.e., by a factor of $3.9$, without notably increasing the CE incurred by TSC and SSC. 
For sufficiently large $p$, TSC yields a slightly lower clustering error than SSC. SSC-OMP is outperformed considerably by the other two algorithms.
\begin{figure}[h]
    
    \centering
\begin{tikzpicture}[scale=1] 
    \begin{semilogxaxis}[name=plot1,anchor=west, title = clustering error, 
            xlabel=p,
    width =0.65\textwidth,
    height = 0.4\textwidth,
    legend entries={ {GRP, TSC}, {GRP, SSC}, {GRP, SSC-OMP}, { FRP, TSC}, { FRP, SSC},  { FRP, SSC-OMP}}, 
    legend style={
            cells={anchor=west},
                    legend pos= north east,
                    font=\scriptsize,}
    ]
    \addplot +[mark=none,solid,\plotgreen] table[x index=0,y index=2]{./data/CEDigitsTSCb.dat};
    \addplot +[mark=none,solid,\plotred] table[x index=0,y index=1]{./data/CEDigitsTSCb.dat};
    \addplot +[mark=none,solid,\plotblue] table[x index=0,y index=3]{./data/CEDigitsTSCb.dat};
    \addplot +[mark=none,dashed,\plotgreen] table[x index=0,y index=5]{./data/CEDigitsTSCb.dat};
    \addplot +[mark=none,dashed,\plotred] table[x index=0,y index=4]{./data/CEDigitsTSCb.dat};
    \addplot +[mark=none,dashed,\plotblue] table[x index=0,y index=6]{./data/CEDigitsTSCb.dat};
    \end{semilogxaxis}

\end{tikzpicture}
\caption[CE for clustering handwritten digits; singular values of data submatrices]{\label{fig:digitsrprt} Clustering error for handwritten digits 2,4, and 8 from the MNIST data set.
}
\end{figure}
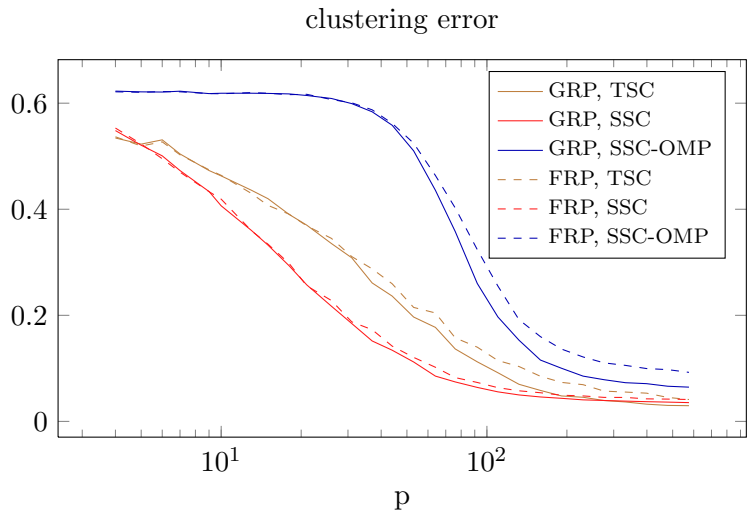

\subsection{Clustering gene expression data}

Finally, we consider clustering of 
gene expression level data--originating from different types of cancer cells--according to cancer type. 
This problem is of significant practical relevance as it helps, inter alia, to identify genes that are involved in the same cellular process \cite{jiang2004cluster}. The use of subspace clustering in this context was suggested in \cite{jiang2004cluster}.
We use the publicly available Novartis multi-tissue data set from the Broad Institute Cancer Program database \cite{broad_2013}. This data set contains the $1000$-dimensional gene expression level data of $n=103$ tissue samples taken from $L=4$ different cancer types.
In order to illustrate that the gene expression level vectors of a single cancer type, indeed, lie near a low-dimensional subspace, we plot, in Figure \ref{fig:cancerrprt}, the singular values of the data matrices corresponding to a single cancer type. We observe that the singular values decay rapidly and for every cancer type, more than $94\%$ of the energy of the corresponding data vectors is concentrated in a 6-dimensional subspace of the $1000$-dimensional ambient space.

We cluster all $n=103$ available samples. 
The CE obtained by averaging, for each $p$, over $200$ realizations of the random projection matrix 
is shown in Figure \ref{fig:cancerrprt}.
For $p \approx 100$, which corresponds to dimensionality reduction by a factor of $10$, the CEs of TSC and SSC are comparable to those obtained when operating on the original high-dimensional data set. SSC is seen to consistently (across $p$) perform best, followed by TSC and SSC-OMP. As in previous experiments the CEs observed for GRP and FRP, for each of the three algorithms, are virtually identical.

\begin{figure}[h!]
    
    \begin{tikzpicture}
   \begin{groupplot}[%
      group style={%
        group size=2 by 1,
           horizontal sep=1.5cm,vertical sep=0.93cm,
      },
      xlabel = p,
      ylabel = {clustering error},
      width = 0.5\textwidth
      ]

\nextgroupplot[xmode=log, ylabel= clustering error,
legend entries={ {GRP, TSC}, {GRP, SSC}, {GRP, SSC-OMP}, { FRP, TSC}, { FRP, SSC}, { FRP, SSC-OMP}}, 
    legend style={
            cells={anchor=west},
                    legend pos= north east,
                    font=\tiny,}
    ]
]%

    \addplot +[mark=none,solid,\plotgreen] table[x index=0,y index=2]{./data/CECancerTSCb.dat};
    \addplot +[mark=none,solid,\plotred] table[x index=0,y index=1]{./data/CECancerTSCb.dat};
    \addplot +[mark=none,solid,\plotblue] table[x index=0,y index=3]{./data/CECancerTSCb.dat};
    \addplot +[mark=none,dashed,\plotgreen] table[x index=0,y index=5]{./data/CECancerTSCb.dat};
    \addplot +[mark=none,dashed,\plotred] table[x index=0,y index=4]{./data/CECancerTSCb.dat};
    \addplot +[mark=none,dashed,\plotblue] table[x index=0,y index=6]{./data/CECancerTSCb.dat};

 \nextgroupplot[xmode=log, ylabel= magnitude of sigular value, label = index of singular value]%
	\foreach \idx in {1,...,4}
	{
		\addplot +[mark=none,solid,black,opacity=0.5] table[x index=0,y index=\idx]{./data/SingularValuesCancer.dat};
		}

 \end{groupplot}   
    
\end{tikzpicture}
\caption[CE for clustering gene expression level data; singular values of data submatrices]{\label{fig:cancerrprt}Clustering error for gene expression level data of $L=4$ cancer types (left). Singular values of data matrices corresponding to a single cancer type (right).}
\end{figure}
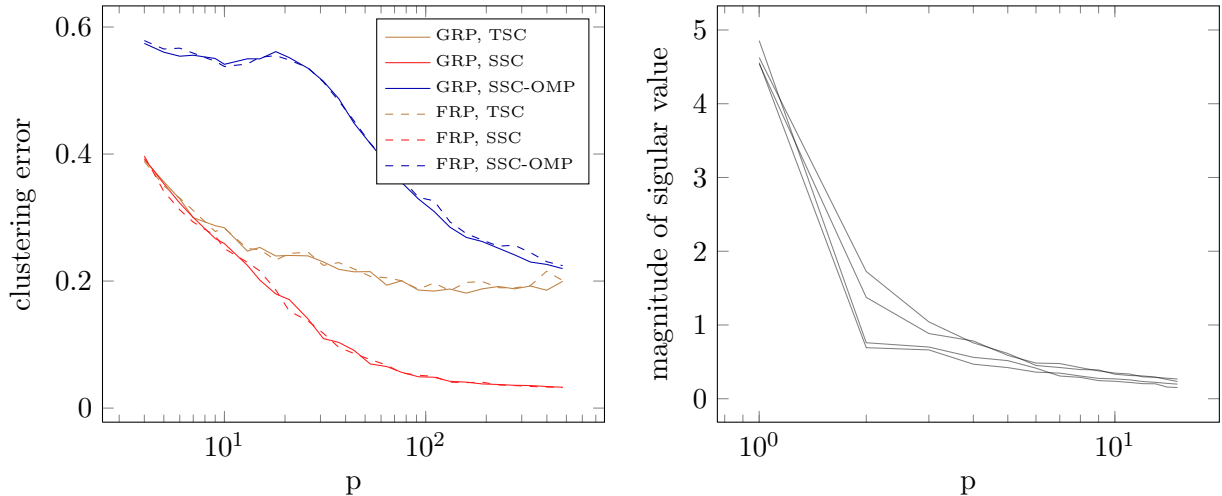

\renewcommand{\d}{d}

\section*{Acknowledgments}
The authors would like to thank Robert Calderbank for very helpful comments on an earlier version of this manuscript. 

\appendix



\section{\label{sec:proofRPTSC}Proof of Theorems \ref{thm:tscrp} and  \ref{thm:tscrpnoisy} }

The proof idea for Theorem \ref{thm:tscrp} is to turn the effect of the random projection into an additive perturbation and to show that this perturbation is small for all values of $p$ down to the order of $d_{\max}$. 
In the noisy case, addressed by Theorem \ref{thm:tscrpnoisy},  we have an additional perturbation due to noise. 
We detail the proof of the more general Theorem \ref{thm:tscrpnoisy} below, and explain in Appendix  \ref{sec:adthm:tscrp} the simple changes that yield Theorem \ref{thm:tscrp}. 
The proof of Theorem \ref{thm:tscrpnoisy} follows closely that of \cite[Thm.~3]{heckel_robust_2013}, which quantifies the performance of TSC under additive Gaussian noise alone. 
We therefore elaborate only on the steps that are new relative to \cite{heckel_robust_2013} and encourage the interested reader to consult \cite{heckel_robust_2013} for the arguments not repeated here. 

The graph $G$ obtained by applying TSC to the dimensionality-reduced noisy data set $\tilde \X$ has no false connections, i.e., each $\tilde \vx_i^{(\l)}$ is connected to points in $\tilde \X_\l$ only,  if for each $\tilde \vx_i^{(\l)} \in \tilde \X_\l$ the associated set $\S_i$ corresponds to points in $\tilde \X_\l$ only, for all $\l$. This is the case if 
\begin{align}
z_{(n_\l - \q)}^{(\l)} > \max_{k\neq \l, j} 
z_{j}^{(k)},
\label{eq:tscsdpfox2}
\end{align}
where $z_{j}^{(k)} \defeq \big| \big< \tilde \vx_j^{(k)} ,  \tilde \vx_i^{(\l)} \big> \big|$ and $z_{(1)}^{(\l)} \leq z_{(2)}^{(\l)} \leq \ldots \leq z_{(n_\l-1)}^{(\l)}$ are the order statistics of $\{z_{j}^{(\l)}\}_{j \in [n_\l] \setminus \{i\}}$ and $\max_{k\neq \l, j}$ denotes maximization over $k \in [L]$, $k \neq \l$, and over the indices $j$ of the corresponding points $\tilde \vx_j^{(k)} \in \tilde \X_k$. 
Note that, for simplicity of exposition, the notation $z_j^{(k)}$ does not reflect dependence on $\tilde \vx_i^{(\l)}$. 
The proof is established by upper-bounding the probability of \eqref{eq:tscsdpfox2} being violated for a given data point $\tilde \vx_i^{(\l)}$. A union bound over all $N$ points $\tilde \vx_i^{(\l)}, i\in [n_\l], \l\in [L]$, then yields the final result. 
We start by setting 
$
\bar z_j^{(k)} \defeq  \left| \innerprod{  \vy_j^{(k)} }{ \vy_i^{(\l)}  } \right|
$, 
where $\vy_j^{(k)} = \mU^{(k)} \va_j^{(k)}$ are the original data points in the (high-dimensional) space $\reals^m$, 
and noting that
$
z_j^{(k)} = \left| \innerprod{\tilde \vx_j^{(k)}}{ \tilde \vx_i^{(\l)} } \right| =   \left|  \innerprod{\vy_j^{(k)}}{ \vy_i^{(\l)} } +  e_j^{(k)}  \right|
$, 
where we defined the ``distortion''
\begin{align}
e_j^{(k)}
&\defeq  
\innerprod{ \Ph \tilde \vy_j^{(k)}}{ \Ph \tilde \vy_i^{(\l)} }   -  \innerprod{\vy_j^{(k)}}{ \vy_i^{(\l)} }  \nonumber \\
&= \innerprod{ (\transp{\Ph} \Ph - \mI )\vy_j^{(k)}}{\vy_i^{(\l)} } 
+ \innerprod{\mPh \vy_j^{(k)}}{\mPh \ve_i^{(\l)} }
+ \innerprod{\mPh \ve_j^{(k)}}{\mPh \vy_i^{(\l)} }
+ \innerprod{\mPh \ve_j^{(k)}}{\mPh \ve_i^{(\l)} }
\nonumber \\
&=  
\underbrace{
\innerprod{ \transp{\mU^{(\l)}} (\transp{\Ph} \Ph - \mI )\mU^{(k)} \va_j^{(k)}}{\va_i^{(\l)}}
}_{\bar e_j^{(k)}} \nonumber \\
&\hspace{0.5cm}+ 
\underbrace{
\innerprod{ \transp{\mPh}\mPh \vy_j^{(k)}}{ \ve_i^{(\l)} }
+ \innerprod{ \ve_j^{(k)}}{ \transp{\mPh}\mPh \vy_i^{(\l)} }
+ \innerprod{\transp{\mPh}\mPh \ve_j^{(k)}}{\ve_i^{(\l)} } 
}_{\tilde e_j^{(k)}}.
\label{eq:pertejk}
\end{align}
Here, the term $\bar e_j^{(k)}$ accounts for the perturbation caused by  random projection, and $\tilde e_j^{(k)}$ corresponds to the perturbation caused by noise. 
The probability of \eqref{eq:tscsdpfox2} being violated can now be upper-bounded according to 
\begin{align}
\PR{z_{(n_\l-\q)}^{(\l)} \leq \max_{k\neq \l, j} 
z_{j}^{(k)} } 
&\leq 
\PR{\bar z_{(n_\l-\q)}^{(\l)} \leq \frac{2}{3\sqrt{d_\l}}   }  \nonumber \\
&+\PR{ \max_{k\neq \l, j} 
\bar z_{j}^{(k)}  \geq \alpha  }  
+  \PR{   \max_{(j,k)\neq (i,\l)} \big|  e_j^{(k)} \big|   \,\geq \, \epsilon }\!,  
\label{eq:labprostobubb}
\end{align}
where we set 
\begin{align}
\alpha \defeq \frac{4\sqrt{6} \log N }{\sqrt{d_\l}} \max_{k \neq \l}
\frac{1}{\sqrt{d_k}}  \norm[F]{ \herm{\mU^{(k)}} \mU^{(\l)} },
\quad 
\epsilon \defeq 
\frac{\beta}{\sqrt{d_\l}} \delta 
+
\beta \frac{2\sigma(1+\sigma)}{\sqrt{m}} \delta'
\label{eq:defalep}
\end{align}
with $\beta \defeq \sqrt{6\log N}$, $\delta \defeq \allowbreak \frac{\sqrt{28 d_{\max} + 8 \log L + 8\log N }}{ \sqrt{ 3 \tilde c  p}}$, and $\delta' \defeq \sqrt{\frac{6m}{\bar c p}}$, 
and assumed 
\begin{equation}
\alpha + 2 \epsilon \leq \frac{2}{3\sqrt{d_\l}}. \label{eq:AssAlpha}
\end{equation} 
We refer the reader to \cite[Proof of Thm.~3, Eq.~(40)]{heckel_robust_2013} for an explanation of the steps leading to \eqref{eq:labprostobubb} (while \cite[Eq.~(40)]{heckel_robust_2013} is not completely equivalent to \eqref{eq:labprostobubb}, the steps leading to \eqref{eq:labprostobubb} are essentially identical). 
Resolving the assumption \eqref{eq:AssAlpha} leads to  
\[
\max_{k \neq \l}  
\frac{1}{\sqrt{d_k}}  \norm[F]{ \herm{\mU^{(k)}} \mU^{(\l)} } 
+ \frac{\delta}{2\sqrt{\log N}}  
+ \sigma (1+\sigma) \sqrt{\frac{6 \d_\l }{ \bar c p \log N }}
\leq \frac{2}{3\cdot 4 \sqrt{6}   \log N },
\]
which is implied by \eqref{eq:adikqopolnoisy}  (using that $\sqrt{28 d_{\max} + 8 \log L + 8 \log N }/\sqrt{\log N} \leq \sqrt{44 d_{\max}}$ because \linebreak $\log L / \log N \leq 1$, $d_{\max} \geq 1$, and $\log N > 1$ for $N \geq 3$). 
With $\epsilon$ as defined in \eqref{eq:defalep}, and the triangle inequality, 
it follows that 
$\max_{(j,k)\neq (i,\l)} \big|  e_j^{(k)} \big|   \,\geq \, \epsilon$ implies that 
either 
$\max_{(j,k)\neq (i,\l)} \big|  \bar e_j^{(k)} \big|   \,\geq \,   \frac{\beta}{\sqrt{d_\l}} \delta$
or 
$\max_{(j,k)\neq (i,\l)} \big|  \tilde e_j^{(k)} \big|   \,\geq \,  \beta \frac{2\sigma(1+\sigma)}{\sqrt{m}} \delta' $, or both. Therefore, by a union bound argument 
\begin{align}
\PR{   \max_{(j,k)\neq (i,\l)} \big|  e_j^{(k)} \big|   \,\geq \, \epsilon }
&\leq 
\PR{   \max_{(j,k)\neq (i,\l)} \big|  \bar e_j^{(k)} \big|   \,\geq \,   \frac{\beta}{\sqrt{d_\l}} \delta }
+
\PR{   \max_{(j,k)\neq (i,\l)} \big|  \tilde e_j^{(k)} \big|   \,\geq \,  \beta \frac{2\sigma(1+\sigma)}{\sqrt{m}} \delta' }. 
\label{eq:probsnoisuin} 
\end{align}
Here, the first and second term on the RHS of \eqref{eq:probsnoisuin} correspond to the perturbation caused by random projection and by noise, respectively. As established in Sections \ref{sec:bproj} and \ref{sec:protons} these terms can be upper-bounded by $\frac{4}{N^2}$ and  $2e^{-m} + \frac{7}{N^2}$, respectively, which yields
\begin{align}
\PR{   \max_{(j,k)\neq (i,\l)} \big|  e_j^{(k)} \big|   \,\geq \, \epsilon }
&\leq  \frac{4}{N^2} + 2e^{-m} + \frac{7}{N^2}. 
\label{eq:probbontotper}
\end{align}
The remaining terms on the RHS of \eqref{eq:labprostobubb} are upper-bounded as shown in Steps 3 and 2 in \cite[Proof of Thm.~3]{heckel_robust_2013}, respectively, using standard concentration of measure results,  according to
\[
\PR{\bar z_{(n_\l-\q)}^{(\l)} \leq \frac{2}{3\sqrt{d_\l}}   } 
\leq 
e^{-c (n_\l-1)}
\]
and 
\[
\PR{ \max_{k\neq \l, j} 
\bar z_{j}^{(k)}  \geq \alpha  }
\leq 
3N^{-2},
\]
where $c>1/20$ is a numerical constant, and we employed the assumption $n_\l \geq 6  q$, for all $\ell$. 

With \eqref{eq:probbontotper} we thus get 
that \eqref{eq:tscsdpfox2} is violated with probability at most $e^{-c(n_\l-1)} + 2e^{-m} + 14N^{-2}$. 
Taking the union bound over all points $\vx_i^{(\l)}, i \in [n_\l], \l \in [L]$, finishes the proof. 



\subsection{\label{sec:bproj} Perturbation caused by random projection}

We next show that the first term on the RHS of  \eqref{eq:probsnoisuin} is upper-bounded by  $4/N^2$. 
This term corresponds to the perturbation caused by random projection. 
For notational convenience, we set $\mB_{k,\ell} = \transp{\mU^{(\l)}} (\transp{\Ph} \Ph - \mI )\mU^{(k)}$ and note that  
\begin{align}
&\PR{   \max_{(j,k)\neq (i,\l)} \big|  \bar e_j^{(k)} \big|   \,\geq \, \frac{\beta}{\sqrt{d_\l}} \delta }
=
\PR{
\max_{(j,k)\neq (i,\l)}
\left| \innerprod{ \mB_{k,\ell} \va_j^{(k)}}{\va_i^{(\l)} } \right|
\geq
 \frac{ \beta }{\sqrt{d_\l}} \delta
} \nonumber \\
&=
\PR{
\bigcup_{(j,k)\neq (i,\l)}\left\{
\left| \innerprod{ \mB_{k,\ell} \va_j^{(k)}}{\va_i^{(\l)} } \right|
\geq
 \frac{ \beta }{\sqrt{d_\l}} \delta
\right\}
} \nonumber \\
&\leq 
\PR{
\bigcup_{(j,k)\neq (i,\l)}
\left\{ 
\left| \innerprod{ \mB_{k,\ell} \va_j^{(k)}}{\va_i^{(\l)} } \right|
\geq
 \norm[2]{ \mB_{k,\ell} \va_j^{(k)} }   \frac{ \beta }{\sqrt{d_\l}}
\right\}
\cup 
\left\{\norm[2]{ \mB_{k,\ell} \va_j^{(k)} } \geq \delta \right\}
} \nonumber \\
&\leq 
\PR{
\bigcup_{(j,k)\neq (i,\l)}
\left\{ 
\left| \innerprod{ \mB_{k,\ell} \va_j^{(k)}}{\va_i^{(\l)} } \right|
\geq
 \norm[2]{ \mB_{k,\ell} \va_j^{(k)} }   \frac{ \beta }{\sqrt{d_\l}}
\right\}
\cup 
\left\{\norm[2\to 2]{ \mB_{k,\ell} } \geq \delta \right\}
} \label{eq:usenoblale} \\
&\leq 
\PR{
\max_k
\norm[2\to 2]{\mB_{k,\ell}} \geq \delta
}
+
\sum_{(j,k)\neq (i,\l)}
\PR{ 
\left| \innerprod{ \mB_{k,\ell} \va_j^{(k)}}{\va_i^{(\l)} } \right|
\geq
 \norm[2]{ \mB_{k,\ell} \va_j^{(k)} }   \frac{ \beta }{\sqrt{d_\l}}
}
\label{eq:usenoblale2} \\
&\leq  2e^{- \tau/2} + N 2 e^{-\frac{6 \log N}{2}} = \frac{4}{N^2}, 
\label{eq:boundonerr}
\end{align}
where \eqref{eq:usenoblale} follows from $\norm[2]{ \mB_{k,\ell} \va_j^{(k)} }  \leq \norm[2 \to 2]{ \mB_{k,\ell}}$, \eqref{eq:usenoblale2} 
is by the union bound, and 
\eqref{eq:boundonerr} follows from \eqref{eq:prass2} in Appendix \ref{sec:proofRPBP} with $\tau = 4 \log N$ and Proposition \ref{thm:hoeffsphere} below with $\va = \va_i^{(\l)}$, $\vb = \mB_{k,\ell} \va_j^{(k)}$, $d=d_\ell$, and $\beta = \sqrt{6\log N}$. 

\begin{proposition}[{E.g.,~\cite[Ex.~5.25]{vershynin_introduction_2012}}]
Let $\va$ be uniformly distributed on $\US{d}$ and fix $\vb \in \reals^d$. Then, for $\beta\geq 0$, we have
\[
\PR{ \left|\innerprod{\va}{ \vb }\right|  > \frac{\beta}{\sqrt{d}} \norm[2]{ \vb} } 
\leq 2 e^{-\frac{ \beta^2}{2}}. 
\]
\label{thm:hoeffsphere}
\end{proposition}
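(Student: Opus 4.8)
The plan is to realize the uniform point on $\US d$ through a Gaussian vector and to reduce the tail bound to an exact moment-generating-function evaluation for a chi-square random variable. Since the right-hand side does not depend on $\vb$ and the event $\{\, \abs{\innerprod{\va}{\vb}} > (\beta/\sqrt d)\,\norm[2]{\vb} \,\}$ is unchanged when $\vb$ is rescaled, we may assume $\norm[2]{\vb} = 1$ (the case $\vb = \vect 0$ being trivial). If $\beta \geq \sqrt d$ then $(\beta/\sqrt d)\,\norm[2]{\vb} \geq 1 \geq \abs{\innerprod{\va}{\vb}}$ and the probability on the left is $0$, while for $\beta = 0$ the bound reads $\PR{\,\cdot\,} \leq 1 < 2$; so we may assume $s \defeq \beta/\sqrt d \in (0,1)$. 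Write $\va = \vg / \norm[2]{\vg}$ with $\vg \sim \mathcal N(0,\mI_d)$, so that $\innerprod{\va}{\vb} = g/\norm[2]{\vg}$ with $g \defeq \transp{\vg}\vb \sim \mathcal N(0,1)$; splitting $\vg$ into its components along $\vb$ and along $\vb^{\perp}$ gives $\norm[2]{\vg}^2 = g^2 + h^2$, where $h^2 \defeq \norm[2]{\vg - g\vb}^2$ is $\chi^2_{d-1}$-distributed and independent of $g$.

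Next I would rewrite the target event: $\abs{g}/\norm[2]{\vg} > s$ is equivalent to $g^2(1-s^2) > s^2 h^2$, i.e.\ to $\abs{g} > \tfrac{s}{\sqrt{1-s^2}}\, h$. Conditioning on $h$ and applying the standard Gaussian tail bound $\PR{\abs{Z} \geq t} \leq e^{-t^2/2}$ (for $Z \sim \mathcal N(0,1)$, $t \geq 0$),
\[
\PR{ \abs{\innerprod{\va}{\vb}} > s }
\;\leq\; \EX{ \exp\!\Big( -\tfrac{s^2}{2(1-s^2)}\, h^2 \Big) }
\;=\; \Big( 1 + \tfrac{s^2}{1-s^2} \Big)^{-(d-1)/2}
\;=\; (1-s^2)^{(d-1)/2} ,
\]
where the first equality is the exact Laplace transform $\EX{e^{-\lambda h^2}} = (1+2\lambda)^{-(d-1)/2}$ of a $\chi^2_{d-1}$ variable (valid for $\lambda \geq 0$), evaluated at $2\lambda = s^2/(1-s^2)$.

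Finally, $(1-s^2)^{(d-1)/2} \leq e^{-(d-1)s^2/2}$, and substituting $s^2 = \beta^2/d$ turns this into $e^{-\beta^2/2}\, e^{\beta^2/(2d)} \leq e^{1/2}\, e^{-\beta^2/2} \leq 2\, e^{-\beta^2/2}$, using $\beta^2/d = s^2 < 1$ and $e^{1/2} < 2$; this is the asserted bound. The only step requiring care is keeping the constants tight enough to land on exactly $2e^{-\beta^2/2}$: a direct Chernoff estimate on $\innerprod{\va}{\vb}$, or a split of the event into ``$\norm[2]{\vg}^2$ atypically small'' plus ``$\abs{g}$ atypically large'', introduces either a polynomial-in-$\beta$ prefactor or a spurious constant, whereas conditioning on the orthogonal component $h$ and integrating the $\chi^2_{d-1}$ contribution exactly leaves only the harmless factor $e^{\beta^2/(2d)} < 2$. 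I expect this bookkeeping to be the only (mild) obstacle; everything else is routine.
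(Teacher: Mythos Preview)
Your argument is correct. The paper does not prove this proposition at all; it simply cites it from \cite[Ex.~5.25]{vershynin_introduction_2012} as a known concentration fact for coordinates of a random point on the sphere. Your proof is therefore a self-contained addition rather than a comparison target.

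Two small remarks. First, the inequality $\PR{|Z|\geq t}\leq e^{-t^2/2}$ that you invoke is correct but is sharper than the ``standard'' Chernoff bound $\PR{|Z|\geq t}\leq 2e^{-t^2/2}$; it deserves a one-line justification (e.g., set $f(t)=2\PR{Z\geq t}-e^{-t^2/2}$, note $f(0)=0$, $f'(t)=e^{-t^2/2}(t-\sqrt{2/\pi})$, so $f$ decreases then increases back to $0$, hence $f\leq 0$). With the cruder bound $2e^{-t^2/2}$ your final constant would become $4$ rather than $2$, so this sharpening is exactly what buys you the stated constant. Second, your closing paragraph already anticipates this bookkeeping issue, so the exposition is internally consistent; just make the justification explicit so a reader does not mistake the factor-of-two improvement for an oversight.
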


%
%
%
%
%
%
%

\subsection{\label{sec:protons}Perturbation caused by noise}

In this section, we deal with the perturbation caused by noise. 
Specifically, we establish that the second term on the RHS of \eqref{eq:probsnoisuin} satisfies  
\begin{align}
\PR{
\max_{(j,k)\neq (i,\l)} \big|  \tilde e_j^{(k)} \big|   \,\geq \, \beta \frac{2\sigma(1+\sigma)}{\sqrt{m}} \delta'  
}
\leq 
2e^{-m} + \frac{7}{N^2}. 
\label{eq:noisypart}
\end{align}
For notational convenience, we set $\bar \vy_j^{(k)} = \transp{\mPh}\mPh \vy_j^{(k)}$ and  drop the indices $i$ and $\l$ to write $\vy = \vy_i^{(\l)}$, $\bar \vy = \bar \vy_i^{(\l)}$, $\ve = \ve_i^{(\l)}$. 
We first note that 
\begin{align}
&\left\{
\max_{(j,k)\neq (i,\l)} \big|  \tilde e_j^{(k)} \big|   \,\geq \, \beta \frac{2\sigma(1+\sigma)}{\sqrt{m}} \delta'  
\right\}
=
\bigcup_{(j,k)\neq (i,\l)}
\left\{ \big|  \tilde e_j^{(k)} \big|   \,\geq \, \beta \frac{2\sigma(1+\sigma)}{\sqrt{m}} \delta'  \right\}
 \nonumber \\
&\subseteq  
\bigcup_{(j,k)\neq (i,\l)}
\left\{ \abs{ \innerprod{\bar \vy_j^{(k)}}{ \ve } }  \geq \beta \frac{\sigma}{\sqrt{m}} \delta' \right\}
\cup 
\left\{ \abs{ \innerprod{ \ve_j^{(k)}}{ \bar \vy } } \geq \beta \frac{\sigma}{\sqrt{m}} \delta' \right\}  
\cup 
\left\{ \abs{ \innerprod{\transp{\mPh}\mPh \ve_j^{(k)}}{\ve } } \geq  \beta \frac{2\sigma^2}{\sqrt{m}} 
\delta' \right\} \label{eq:usetrioncag} \\
&\subseteq 
\left\{ \norm[2\to 2]{\transp{\mPh} \mPh} \geq \delta'  \right\} \cup 
\bigcup_{(j,k)\neq (i,\l)}
\left[
\left\{ 
\abs{\innerprod{ \bar \vy_j^{(k)} }{ \ve }}   \geq \beta \frac{\sigma}{\sqrt{m}} \norm[2]{\bar \vy_j^{(k)} }   
\right\} 
\cup 
\left\{ 
\abs{\innerprod{ \ve_j^{(k)} }{ \bar \vy }}   \geq \beta \frac{\sigma}{\sqrt{m}} \norm[2]{ \bar \vy }
\right\}
\right.   
\nonumber \\
&\hspace{0.5cm}\cup 
\left. \left\{
\abs{\innerprod{ \transp{\mPh}\mPh \ve_j^{(k)} }{ \ve }}   \geq \beta \frac{\sigma}{\sqrt{m}} \norm[2]{ \transp{\mPh}\mPh \ve_j^{(k)} }  
\right\} 
\cup 
\left\{
\norm[2]{ \ve_j^{(k)} } \geq 2 \sigma
\right\} \right].  \label{eq:eventinclus}
\end{align}
Here, \eqref{eq:usetrioncag} follows from the triangle inequality. 
To verify \eqref{eq:eventinclus}, consider the first event in \eqref{eq:usetrioncag} and note that 
\begin{align}
\left\{ \abs{ \innerprod{\bar \vy_j^{(k)}}{ \ve } }  \geq \beta \frac{\sigma}{\sqrt{m}} \delta' \right\}
\subseteq 
\left\{ \norm[2\to 2]{\transp{\mPh} \mPh} \geq \delta'  \right\}
\cup 
\left\{ 
\abs{\innerprod{ \bar \vy_j^{(k)} }{ \ve }} \geq \beta \frac{\sigma}{\sqrt{m}} \norm[2]{ \bar \vy_j^{(k)} }
\right\}. 
\label{eq:eventexte}
\end{align}
To see this, simply take the complement of \eqref{eq:eventexte}  according to 
\[
\left\{ \norm[2\to 2]{\transp{\mPh} \mPh} < \delta'  \right\}
\cap 
\left\{ 
\abs{\innerprod{ \bar \vy_j^{(k)} }{ \ve }} < \beta \frac{\sigma}{\sqrt{m}} \norm[2]{ \bar \vy_j^{(k)} }
\right\}
\subseteq 
\left\{ 
\abs{ \innerprod{\bar \vy_j^{(k)}}{ \ve } }  < \beta \frac{\sigma}{\sqrt{m}} \delta'
\right\}
\]
where we used
\[
\norm[2]{\bar \vy_j^{(k)}} 
=
\norm[2]{\transp{\mPh}\mPh \vy_j^{(k)}} 
\leq \norm[2\to 2]{\transp{\mPh}\mPh} \norm[2]{\vy_j^{(k)} } 
= \norm[2\to 2]{\transp{\mPh}\mPh}. 
\]
Treating the second and the third event in \eqref{eq:usetrioncag} similarly establishes  \eqref{eq:eventinclus}. 
A union bound argument now yields  
\begin{align}
\PR{
\max_{(j,k)\neq (i,\l)} \big|  \tilde e_j^{(k)} \big|   \,\geq \, 
\beta \frac{2\sigma(1+\sigma)}{\sqrt{m}} \delta'
}
\hspace{-1cm}&\hspace{1cm}\leq 
\PR{ \norm[2]{\transp{\mPh} \mPh} \geq \delta'}  \label{eq:ss0}\\
&+ \sum_{(j,k)\neq (i,\l)}
\PR{
\abs{\innerprod{ \bar \vy_j^{(k)} }{ \ve }}   \geq \beta \frac{\sigma}{\sqrt{m}} \norm[2]{\bar \vy_j^{(k)} }
}  \label{eq:ss1}\\
&+ \sum_{(j,k)\neq (i,\l)}
\PR{
\abs{\innerprod{ \ve_j^{(k)} }{ \bar \vy }}   \geq \beta \frac{\sigma}{\sqrt{m}} \norm[2]{ \bar \vy }
}   \label{eq:ss2}\\
&+ \sum_{(j,k)\neq (i,\l)}
\PR{
\abs{\innerprod{ \transp{\mPh}\mPh \ve_j^{(k)} }{ \ve }}   \geq \beta \frac{\sigma}{\sqrt{m}} \norm[2]{ \transp{\mPh}\mPh \ve_j^{(k)} }  
}  \label{eq:ss3}\\
&+ \sum_{(j,k)\neq (i,\l)}
\PR{
\norm[2]{ \ve_j^{(k)} } \geq 2 \sigma
} \label{eq:ss4} \\
&\leq     2 e^{-m} + 6 N e^{- \frac{\beta^2}{2}} + N e^{- \frac{\beta^2}{2}}. 
\label{eq:lastineq}
\end{align}
To get \eqref{eq:lastineq} we upper-bounded the terms on the RHSs of \eqref{eq:ss0}-\eqref{eq:ss4} as follows. 
For the RHS of \eqref{eq:ss0} we note that 
\[
\PR{ \norm[2]{\transp{\mPh} \mPh} \geq \delta'} \leq 2 e^{-m},
\]
 which is a consequence of Theorem \ref{thm:rauhutconc} stated in Appendix \ref{sec:proofRPBP} below. 
Specifically, with $1\leq  \sqrt{ \frac{6 m}{\bar c p} }$, which follows from $\bar c = \min(6, \tilde c) \leq 6$ and $p\leq m$, both by assumption, we have 
\begin{align}
\PR{\norm[2\to 2]{ \transp{\Ph} \Ph } \geq \sqrt{ \frac{24 m}{\bar c p} } }
&\leq 
\PR{\norm[2\to 2]{ \transp{\Ph} \Ph } \geq 1 + \sqrt{ \frac{6 m}{\tilde c p} } }  \nonumber \\
&\leq 
\PR{
\norm[2\to 2]{ \transp{\Ph} \Ph  - \mI } \geq \sqrt{ \frac{6 m}{\tilde c p} } 
}
\label{eq:useevinc} \\
&\leq 2 e^{- m } \label{eq:bythrauhuco}
%
\end{align}
where \eqref{eq:bythrauhuco} is by Theorem \ref{thm:rauhutconc} (with $t=\sqrt{2m}$). 
To establish \eqref{eq:useevinc}, first note that  $\norm[2\to 2]{ \transp{\Ph} \Ph  - \mI } \leq \delta'$ (with $\delta'=\sqrt{ \frac{6 m}{\bar c p}}$) implies $\sigmax{ \transp{\Ph} \Ph } \leq 1+\delta'$, which in turn is equivalent to   
$
\norm[2\to 2]{ \transp{\Ph} \Ph }  \leq 1 + \delta' 
$. 
We can therefore conclude that 
$
\norm[2\to 2]{ \transp{\Ph} \Ph }  \geq 1 + \delta' 
$
implies
$
\norm[2\to 2]{ \transp{\Ph} \Ph  - \mI } \geq \delta'
$. 

The terms inside the sums on the RHSs of \eqref{eq:ss1}, \eqref{eq:ss2}, and \eqref{eq:ss3}, were upper-bounded by applying Lemma \ref{lem:qfunction}, stated below. 
Specifically, we note that 
$\big< \bar \vy_j^{(k)} ,  \ve \big> \sim \mc N(0, \sigma^2  \big\| \bar \vy_j^{(k)} \big\|_2^2 )$,
 $\big<\ve_j^{(k)}  , \bar \vy \big> \sim \mc N(0, \sigma^2 \big\|  \bar \vy \big\|_2^2 )$, 
 and 
 $\big< \transp{\mPh}\mPh \ve_j^{(k)} ,  \ve \big> \sim \mc N(0, \sigma^2 \big\| \transp{\mPh}\mPh \ve_j^{(k)} \big\|_2^2 )$,
 where $\vy_j^{(k)}$, $\bar \vy$, and $\transp{\mPh}\mPh \ve_j^{(k)}$, respectively, 
 can be regarded as fixed, and we used $\beta =\sqrt{6\log N} \geq \frac{1}{\sqrt{2\pi}}$, as $N\geq 1$. 
\begin{lemma}[{\cite[Prop.~19.4.2]{lapidoth_foundation_2009}}]
Let $x\sim\mathcal N(0,1)$. For $\beta \geq \frac{1}{\sqrt{2\pi}}$, we have
\begin{equation}
\PR{x \geq \beta} \leq e^{- \frac{\beta^2}{2}}.
\label{eq:qfunctionb}
\end{equation}
\label{lem:qfunction}
\end{lemma}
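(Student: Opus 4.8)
The plan is to derive the classical Mills-ratio bound on the standard Gaussian tail and then use the hypothesis $\beta \geq 1/\sqrt{2\pi}$ to absorb the polynomial prefactor. Write $\phi(t) = \frac{1}{\sqrt{2\pi}} e^{-t^2/2}$ for the standard normal density, so that $\PR{x \geq \beta} = \int_\beta^\infty \phi(t)\, dt$. First I would exploit that on the integration range $t \geq \beta > 0$ one has $t/\beta \geq 1$, and hence
\[
\PR{x \geq \beta} = \int_\beta^\infty \phi(t)\, dt \;\leq\; \frac{1}{\beta} \int_\beta^\infty t\,\phi(t)\, dt .
\]
Next, since $\frac{d}{dt}\phi(t) = -t\,\phi(t)$, the remaining integral evaluates in closed form, $\int_\beta^\infty t\,\phi(t)\, dt = \phi(\beta)$, which gives the Mills-ratio estimate
\[
\PR{x \geq \beta} \;\leq\; \frac{\phi(\beta)}{\beta} \;=\; \frac{1}{\sqrt{2\pi}\,\beta}\, e^{-\beta^2/2} .
\]
Finally, the assumption $\beta \geq \frac{1}{\sqrt{2\pi}}$ is precisely what forces the prefactor to satisfy $\frac{1}{\sqrt{2\pi}\,\beta} \leq 1$, so that $\PR{x \geq \beta} \leq e^{-\beta^2/2}$, as claimed.

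I do not expect a genuine obstacle here: this is a one-paragraph textbook computation, and the only real decision is which standard tail estimate to invoke. An equally short route is the Chernoff bound: for any $\lambda > 0$, $\PR{x \geq \beta} \leq e^{-\lambda\beta}\,\EX{e^{\lambda x}} = e^{\lambda^2/2 - \lambda\beta}$, and taking $\lambda = \beta$ yields $\PR{x \geq \beta} \leq e^{-\beta^2/2}$ for every $\beta > 0$; this even dispenses with the restriction $\beta \geq 1/\sqrt{2\pi}$. Since the lemma is only ever applied with $\beta = \sqrt{6 \log N} \geq 1/\sqrt{2\pi}$, however, the Mills-ratio argument is the cleaner match to the stated hypothesis, and that is the version I would write out in full; the Chernoff alternative could be relegated to a remark. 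In either case nothing beyond the fundamental theorem of calculus (respectively, the Gaussian moment-generating function) is needed, so the write-up is two or three lines.
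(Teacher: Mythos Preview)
Your proof is correct; the Mills-ratio argument you give is the standard textbook derivation and matches the cited source. Note that the paper itself does not supply a proof of this lemma but simply quotes it from \cite[Prop.~19.4.2]{lapidoth_foundation_2009}, so there is no paper-specific argument to compare against.
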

Finally, to upper-bound the terms inside the sum in \eqref{eq:ss4}, we used \cite[Eq.~(51)]{heckel_robust_2013}
\begin{align}
\PR{  \norm[2]{ \ve_j^{(k)} }   \geq 2 \sigma  } 
\leq e^{-\frac{\beta^2}{2}}
\label{eq:adfj2}. 
\end{align}

\subsection{Proof of Theorem \ref{thm:tscrp}
\label{sec:adthm:tscrp}}

The proof of Theorem \ref{thm:tscrp} is obtained from the proof of Theorem \ref{thm:tscrpnoisy} by noting that in the noise-free case (i.e., $\sigma=0$), the perturbation caused by noise satisfies $\tilde e_j^{(k)} = 0$, rendering the second term on the RHS of \eqref{eq:probsnoisuin}  void. 
Finally, we remark that the assumption $6\log N \leq m$ is not needed in the noise-free case as it is involved only in establishing \eqref{eq:noisypart}, which is void here.

\section{Proof of Theorem \ref{th:RPNoiseless} \label{sec:proofRPBP}}


We first note that the data points in $\X_\ell$ can be written as $\vx_j^{(\l)} = \mV^{(\l)} \va^{(\l)}_j, j \in [n_\l]$, where the $\va^{(\l)}_j$ are i.i.d.~uniform on $\US{\d_\l}$, and $\mV^{(\l)} \defeq \mPh \mU^{(\l)}$ is a basis for the $d_\l$-dimensional subspace of $\reals^p$ containing the points in $\X_\ell$ ($\mV^{(\l)}$ has full column rank with high probability, which follows from \eqref{eq:prass1} as a consequence of the concentration inequality \eqref{eq:conceqcondonPh}). 
For the case where the $\mV^{(\ell)}$ are orthonormal bases a sufficient condition for successful clustering was derived by Soltanolkotabi and Cand\`es \cite[Thm.~2.8]{soltanolkotabi_geometric_2011}. 
However, owing to the projection $\mPh$, the $\mV^{(\l)} = \mPh \mU^{(\l)}$ will in general not be orthonormal. We will therefore need the following generalization of \cite[Thm.~2.8]{soltanolkotabi_geometric_2011} 
to arbitrary bases $\mV^{(\l)}$ for $d_\l$-dimensional subspaces of $\reals^p$. 

\begin{theorem}
\label{th:probrec}
Suppose that the elements of the sets $\X_\l$ in $\X = \X_1 \cup \ldots  \cup  \X_L$ 
are obtained by choosing $n_\l$ points at random according to $\vx_j^{(\l)} = \mV^{(\l)} \va^{(\l)}_j, j \in [n_\l]$, where the $\mV^{(\l)} \in \reals^{\p\times d_\l}$ have full rank, and the $\va^{(\l)}_j$ are i.i.d.~uniform on $\US{d_\l}$. Assume that $\rho_\l = (n_\l-1)/d_\l\geq \rho_0$, for all $\l$, where $\rho_0>1$ is a numerical constant, and let $\rho_{\min} = \min_\l \rho_\l$. If
\begin{equation}
\label{eq:ThmAffCondition2}
\max_{k, \l \colon k \neq \ell}  \frac{1}{\sqrt{ d_k}  }   \norm[F]{\pinv{\mV^{(\l)}}  \mV^{(k)}}
  \leq \frac{ \sqrt{\log \rho_{\min}} }{ 64 \log N},
\end{equation}
where $\pinv{\mV^{(\l)}} = \inv{(\transp{\mV^{(\l)}} \mV^{(\l)}  )}\transp{\mV^{(\l)}}$ is the pseudo-inverse of $\mV^{(\l)}$, 
then the graph $G$ with adjacency matrix obtained by applying SSC to $\X$ has no false connections with probability at least 
$1 -  N^{-1}  -  \sum_{\l=1}^L n_\l e^{-\sqrt{\rho_\l}  d_\l}$. 
\end{theorem}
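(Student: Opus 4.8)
Theorem \ref{th:probrec} generalizes \cite[Thm.~2.8]{soltanolkotabi_geometric_2011} from orthonormal bases $\mU^{(\l)}$ to arbitrary full-rank bases $\mV^{(\l)}$, and the plan is to retrace that proof while isolating the single place where orthonormality is actually invoked. Fix a data point $\vx_i^{(\l)}\in\X_\l$. Following \cite{soltanolkotabi_geometric_2011}, the graph $G$ has no false connections at $\vx_i^{(\l)}$ once we produce a dual certificate $\vnu\in\reals^p$ together with the minimizer $\vz^\star$ of $\min\norm[1]{\vz}$ subject to $\vx_i^{(\l)}=\mX^{(\l)}_{-i}\vz$ (where $\mX^{(\l)}_{-i}$ has columns $\vx_j^{(\l)}$, $j\in[n_\l]\setminus\{i\}$), such that $\innerprod{\vx_j^{(\l)}}{\vnu}=\sgn((\vz^\star)_j)$ on $\supp(\vz^\star)$, $\abs{\innerprod{\vx_j^{(\l)}}{\vnu}}\leq1$ for the remaining $j$ with $\vx_j^{(\l)}\in\X_\l$, and $\abs{\innerprod{\vx_j^{(k)}}{\vnu}}<1$ for all $j$ and all $k\neq\l$. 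The first two conditions certify optimality of $\vz^\star$ for the restricted program, and the last (being strict) forces every optimal solution of the full SSC program to be supported on $\X_\l$.

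The key observation is a change of coordinates. Since $\mV^{(\l)}$ has full column rank, $\vx\mapsto\pinv{\mV^{(\l)}}\vx$ is a bijection from the subspace spanned by $\mV^{(\l)}$ onto $\reals^{d_\l}$ sending $\vx_j^{(\l)}=\mV^{(\l)}\va_j^{(\l)}$ to $\va_j^{(\l)}$, and $\vx_i^{(\l)}=\mX^{(\l)}_{-i}\vz$ is equivalent to $\va_i^{(\l)}=\mA^{(\l)}_{-i}\vz$, with $\mA^{(\l)}_{-i}$ collecting the $\va_j^{(\l)}$, $j\neq i$. Hence the restricted program is \emph{verbatim} the one analyzed in \cite{soltanolkotabi_geometric_2011} for $n_\l-1$ points drawn i.i.d.~uniformly from $\US{d_\l}$; in particular the dual certificate $\vla\in\reals^{d_\l}$ constructed there, and all of its properties---dual feasibility $\abs{\innerprod{\va_j^{(\l)}}{\vla}}\leq1$ for all $j$, and the control of $\vla$ through the inradius of $\mathrm{conv}\{\pm\va_j^{(\l)}\colon j\in[n_\l]\setminus\{i\}\}$, which for uniform points on $\US{d_\l}$ is bounded below by $\mathrm{const}\cdot\sqrt{\log\rho_\l}/\sqrt{d_\l}$ with probability at least $1-n_\l e^{-\sqrt{\rho_\l}d_\l}$---carry over unchanged. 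We then lift $\vla$ to $\vnu\defeq\big(\pinv{\mV^{(\l)}}\big)^{T}\vla=\mV^{(\l)}\inv{(\transp{\mV^{(\l)}}\mV^{(\l)})}\vla$, which lies in the subspace spanned by $\mV^{(\l)}$ and satisfies $\transp{\mV^{(\l)}}\vnu=\vla$, so that $\innerprod{\vx_j^{(\l)}}{\vnu}=\innerprod{\va_j^{(\l)}}{\vla}$ for all $j$; thus the in-subspace certificate conditions are automatically inherited.

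It remains to bound the out-of-subspace inner products. For $k\neq\l$,
\[
\innerprod{\vx_j^{(k)}}{\vnu}=\innerprod{\mV^{(k)}\va_j^{(k)}}{\vnu}=\innerprod{\va_j^{(k)}}{\big(\pinv{\mV^{(\l)}}\mV^{(k)}\big)^{T}\vla},
\]
so the matrix $\transp{\mU^{(k)}}\mU^{(\l)}$ that appears at this point in \cite{soltanolkotabi_geometric_2011} is now replaced throughout by $\pinv{\mV^{(\l)}}\mV^{(k)}$ (and the two coincide when $\mV^{(\l)}=\mU^{(\l)}$). Since $\va_j^{(k)}$ is uniform on $\US{d_k}$ and independent of $\vla$ (which depends only on $\{\va_j^{(\l)}\}_{j\in[n_\l]}$), applying Proposition \ref{thm:hoeffsphere} conditionally on $\vla$ with $\beta=\sqrt{6\log N}$, and a union bound over all $(i,\l)$ and $(j,k)$ (at most $N^2$ of them), bounds $\max_{k\neq\l,\,j}\abs{\innerprod{\vx_j^{(k)}}{\vnu}}$ away from $1$ with probability at least $1-N^{-1}$ exactly when $\frac{1}{\sqrt{d_k}}\norm[F]{\pinv{\mV^{(\l)}}\mV^{(k)}}$ obeys \eqref{eq:ThmAffCondition2}; here one checks that maximizing over both orderings of the pair $(k,\l)$ reproduces $\aff(\cS_k,\cS_\l)$ in the orthonormal case, recovering \cite[Thm.~2.8]{soltanolkotabi_geometric_2011}. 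Combining the inradius bound (union over the $N$ in-subspace subproblems) with the cross-term bound yields the asserted success probability $1-N^{-1}-\sum_\l n_\l e^{-\sqrt{\rho_\l}d_\l}$.

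The main obstacle is that the crude estimate $\norm[2]{\big(\pinv{\mV^{(\l)}}\mV^{(k)}\big)^{T}\vla}\leq\norm[F]{\pinv{\mV^{(\l)}}\mV^{(k)}}\norm[2]{\vla}$ together with $\norm[2]{\vla}\lesssim\sqrt{d_\l/\log\rho_\l}$ is too lossy, producing a spurious $\sqrt{d_\l}$ factor in the clustering condition; the sharp $1/\log N$ dependence in \eqref{eq:ThmAffCondition2} rests on the more refined dual-certificate analysis of \cite{soltanolkotabi_geometric_2011}, which controls $\innerprod{\va_j^{(k)}}{\big(\pinv{\mV^{(\l)}}\mV^{(k)}\big)^{T}\vla}$ by exploiting the random structure of $\vla$. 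Re-deriving that estimate therefore requires checking that its constituent lemmas are genuinely statements about the (unchanged) in-subspace data $\{\va_j^{(\l)}\}_{j\in[n_\l]}$ and the deterministic cross-Gram matrix $\pinv{\mV^{(\l)}}\mV^{(k)}$, so that they transfer without modification---and, along the way, confirming that SSC (unlike TSC and SSC-OMP) needs no normalization, so that the non-unit norms $\norm[2]{\vx_j^{(\l)}}=\norm[2]{\mV^{(\l)}\va_j^{(\l)}}$ are immaterial.
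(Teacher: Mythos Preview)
Your overall architecture---change of coordinates to reduce the restricted program to the one on $\US{d_\l}$, lifting the dual point via $\vnu=(\pinv{\mV^{(\l)}})^T\vla$, and identifying $\pinv{\mV^{(\l)}}\mV^{(k)}$ as the replacement for $\transp{\mU^{(k)}}\mU^{(\l)}$---is exactly the paper's approach. However, your third and fourth paragraphs are in tension: you first claim that Proposition~\ref{thm:hoeffsphere} applied conditionally on $\vla$ yields the bound in terms of $\frac{1}{\sqrt{d_k}}\norm[F]{\pinv{\mV^{(\l)}}\mV^{(k)}}$, and then correctly observe that this fails because it only controls the inner product in terms of $\norm[2]{(\pinv{\mV^{(\l)}}\mV^{(k)})^T\vla}$, and bounding that by $\norm[F]{\pinv{\mV^{(\l)}}\mV^{(k)}}\norm[2]{\vla}$ costs a spurious $\sqrt{d_\l}$.

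The resolution the paper uses, and which you gesture at but do not name, is specific: the \emph{normalized} dual point $\vla/\norm[2]{\vla}$ is itself uniformly distributed on $\US{d_\l}$ (this is \cite[Sec.~7.2.2, Step~2]{soltanolkotabi_geometric_2011}, and it is a statement purely about the in-subspace coefficients $\{\va_j^{(\l)}\}$, hence unaffected by the passage from $\mU^{(\l)}$ to $\mV^{(\l)}$). This lets one invoke the two-sided concentration Lemma~\ref{lem:maconclem} (i.e., \cite[Lem.~7.5]{soltanolkotabi_geometric_2011}), which for $\mL$ with i.i.d.~uniform columns on $\US{d_\l}$ and $\va$ uniform on $\US{d_k}$ gives $\norm[\infty]{\transp{\mL}\mB\va}\leq \frac{c}{\sqrt{d_\l d_k}}\norm[F]{\mB}$ with high probability---producing the Frobenius norm of $\mB=\pinv{\mV^{(\l)}}\mV^{(k)}$ directly, with the correct scaling. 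Combined with the inradius bound $\norm[2]{\vla}\leq 1/r(\mc P(\mA^{(\l)}_{-i}))$, this gives \eqref{eq:ThmAffCondition2} without the lossy $\sqrt{d_\l}$ factor. So your proposal is correct in outline, but the step you flag as the ``main obstacle'' is precisely where the paper's argument lives, and you should make explicit that it is the rotational invariance of the dual point---not just a generic ``refined analysis''---that closes the gap.
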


\begin{proof}
See Appendix \ref{sec:pfnonorthbases}. 
\end{proof}
We now detail how Theorem \ref{th:RPNoiseless} follows from Theorem \ref{th:probrec}.  
Specifically, we will show that \eqref{eq:ThmAffConditionrp} implies \eqref{eq:ThmAffCondition2} with probability at least $1-4e^{-\tau/2}$, which, when combined with the probability bound in Theorem \ref{th:probrec} via a union bound yields the final probability estimate in Theorem \ref{th:RPNoiseless}, and thereby concludes the proof.

We start filling in the details by showing how \eqref{eq:ThmAffConditionrp} implies \eqref{eq:ThmAffCondition2}. The LHS of \eqref{eq:ThmAffCondition2} can be upper-bounded as follows  
\begin{align}
&\frac{1}{\sqrt{ d_k}} 
 \norm[F]{\pinv{\mV^{(\l)}}  \mV^{(k)}} 
 = \frac{1}{\sqrt{ d_k}  } \norm[F]{\big(\transp{\mV^{(\l)}} \mV^{(\l)}\big)^{-1} \transp{\mV^{(\l)}}  \mV^{(k)}   } \nonumber \\
 &\leq \norm[2\to 2]{\big(\transp{\mV^{(\l)}} \mV^{(\l)}\big)^{-1}}   \frac{1}{\sqrt{ d_k}  }   \norm[F]{\transp{\mV^{(\l)}}  \mV^{(k)}   } \label{eq:trivialnormineq} \\
 &\leq \frac{\norm[2\to 2]{\big(\transp{\mV^{(\l)}} \mV^{(\l)}\big)^{-1}}}{\sqrt{ d_k}  }\left( \norm[F]{\transp{\mU^{(\l)}}  \mU^{(k)}}  \!\!\! + \! \norm[F]{ \transp{\mU^{(\l)}} ( \transp{\mPh} \mPh -\mI)  \mU^{(k)}}   \right) \nonumber \\
  &\leq \norm[2\to 2]{\big(\transp{\mV^{(\l)}} \mV^{(\l)}\big)^{-1}} \left( \mathrm{aff}(\cS_k,\cS_\l)  + \norm[2\to 2]{ \transp{\mU^{(\l)}} ( \transp{\mPh} \mPh -\mI)  \mU^{(k)}}   \right) \label{eq:fnbopb} \\
&\leq \frac{1}{1-\delta} ( \mathrm{aff}(\cS_k,\cS_\l) + \delta ) \label{eq:withpradf} \\
&\leq \frac{65}{64} ( \mathrm{aff}(\cS_k,\cS_\l) + \delta ) \leq   \frac{ \sqrt{\log \rho_{\min}} }{64 \log N  }, \label{eq:byassth:RPNoiseless}
\end{align}
where 
\eqref{eq:trivialnormineq} follows from 
$\norm[F]{\mA \mB}^2 \leq \norm[2\to 2]{\mA}^2 \norm[F]{\mB}^2$, 
 \eqref{eq:fnbopb} is a consequence of $\norm[F]{\mB} \leq \sqrt{m \wedge n} \norm[2\to 2]{\mB}$, for $\mB \in \reals^{m\times n}$ \cite[Sec.~5.6, p.~365]{horn_matrix_2012},   
and \eqref{eq:withpradf} holds with 
\begin{align}
\delta \defeq \sqrt{ \frac{28d_{\max} + 8 \log L + 2\tau}{3\tilde c p} }, 
\label{eq:defdelta}
\end{align}
with probability at least $1-4e^{-\tau/2}$ (here, $\tau>0$ is the numerical constant in the statement of Theorem \ref{th:RPNoiseless}). 
Eq.~\eqref{eq:withpradf} holds with probability at least $1-4e^{-\tau/2}$ by 
\begin{align}
\PR{\max_{\l} \norm[2\to 2]{\big(\transp{\mV^{(\l)}} \mV^{(\l)}\big)^{-1}} \geq \frac{1}{1-\delta} } \leq 2e^{- \tau/2} 
\label{eq:prass1}
\end{align}
and 
\begin{align}
\PR{\max_{k, \ell} \norm[2\to 2]{ \transp{\mU^{(\l)}} ( \transp{\mPh} \mPh -\mI)  \mU^{(k)}}  \geq \delta } \leq 2e^{- \tau/2},
\label{eq:prass2}
\end{align}
both proven below. 
Finally, to get \eqref{eq:byassth:RPNoiseless} we invoked \eqref{eq:ThmAffConditionrp} twice, 
first we used $\mathrm{aff}(\cS_k,\cS_\l) \geq 0$ and $\frac{ \sqrt{\log \rho_{\min}} }{\log N} =\frac{ \sqrt{\log \rho_{\min}} }{\log\left(\sum_{\l=1}^L (\rho_\l d_\l+1) \right)}
 \leq 1$ in \eqref{eq:ThmAffConditionrp} to conclude that $\delta \leq 1/65$, i.e., $\frac{1}{1-\delta} \leq \frac{65}{64}$, and second, we applied \eqref{eq:ThmAffConditionrp} straight to upper-bound $\mathrm{aff}(\cS_k,\cS_\l)$.

It remains to prove \eqref{eq:prass1} and \eqref{eq:prass2}.  
For the special case of a Gaussian random matrix $\mPh$, the probability bounds \eqref{eq:prass1} and \eqref{eq:prass2} can be obtained using standard results on the extremal singular values of Gaussian random matrices. For general $\mPh$ satisfying the concentration inequality \eqref{eq:conceqcondonPh}, the proofs of \eqref{eq:prass1} and \eqref{eq:prass2} rely on Theorem \eqref{thm:rauhutconc} below.

\begin{theorem}[{\cite[Thm.~9.9, Rem.~9.10]{foucart_mathematical_2013}}]
Suppose that the random matrix $\mPh \in \reals^{p\times m}$ satisfies the concentration inequality \eqref{eq:conceqcondonPh}, i.e.,  
\[
\PR{ \left| \norm[2]{\mPh \vx}^2 - \norm[2]{\vx}^2 \right|  \geq t \norm[2]{\vx}^2 } \leq 2 e^{- \tilde c t^2 p},
\]
for all $t>0$ and for all $\vx \in \reals^{m}$, 
where $\tilde c$ is a constant. 
Then, for an orthonormal matrix $\mU \in \reals^{m\times d}$ and all $t>0$, we have 
\[
\PR{ \norm[2\to 2]{\transp{\mU} \transp{\mPh} \mPh \mU  - \mI } \geq  \sqrt{ \frac{14 d + 2t^2}{3\tilde c p} } } \leq 2 e^{-\frac{t^2}{2}}.
\]
Additionally, for all $t>0$, we have 
\[
\PR{ \norm[2\to 2]{ \transp{\mPh} \mPh - \mI } \geq  \sqrt{ \frac{14 m + 2t^2}{3\tilde c p} } } \leq 2 e^{-\frac{t^2}{2}}.
\]
\label{thm:rauhutconc}
\end{theorem}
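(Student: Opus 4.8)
This statement is \cite[Thm.~9.9 and Rem.~9.10]{foucart_mathematical_2013}; for completeness I outline the argument, which is a textbook covering-number (net) estimate. The plan is to prove the first inequality by a net argument on the unit sphere $\US{d}$, and then to obtain the second inequality as the special case in which $\mU$ is the $m\times m$ identity and $d=m$.

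First I would observe that the matrix $\mathbf{A} \defeq \transp{\mU}\transp{\mPh}\mPh\mU - \mI$ is symmetric, so that $\norm[2\to 2]{\mathbf{A}} = \sup_{\vx\in\US{d}}\abs{\transp{\vx}\mathbf{A}\vx}$, and that for every $\vx\in\US{d}$ one has $\transp{\vx}\mathbf{A}\vx = \norm[2]{\mPh\mU\vx}^2 - \norm[2]{\mU\vx}^2 = \norm[2]{\mPh(\mU\vx)}^2 - 1$, using orthonormality of $\mU$ in the last step. Hence the assumed concentration inequality \eqref{eq:conceqcondonPh}, applied with the fixed unit vector $\mU\vx\in\reals^m$, yields the pointwise tail bound $\PR{\abs{\transp{\vx}\mathbf{A}\vx}\ge u}\le 2e^{-\tilde c u^2 p}$ for all $u>0$.

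Next I would fix an $\epsilon$-net $\mathcal N$ of $\US{d}$ of cardinality at most $(1+2/\epsilon)^d$ (standard volumetric bound, see e.g.~\cite{foucart_mathematical_2013}), and use the elementary fact that for a symmetric matrix and $\epsilon<1/2$ one has $\norm[2\to 2]{\mathbf{A}}\le (1-2\epsilon)^{-1}\max_{\vx\in\mathcal N}\abs{\transp{\vx}\mathbf{A}\vx}$, which follows from the identity $\transp{\vx}\mathbf{A}\vx - \transp{\vy}\mathbf{A}\vy = \transp{\vx}\mathbf{A}(\vx-\vy) + \transp{(\vx-\vy)}\mathbf{A}\vy$ together with $\norm[2]{\vx-\vy}\le\epsilon$. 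A union bound over $\mathcal N$ then gives, for every $u>0$,
\[
\PR{\norm[2\to 2]{\mathbf{A}} \ge \frac{u}{1-2\epsilon}}
\le 2(1+2/\epsilon)^d e^{-\tilde c u^2 p}
= 2\exp\!\big(d\log(1+2/\epsilon) - \tilde c u^2 p\big).
\]

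Finally, one specializes the net parameter. Taking $\epsilon = 1/16$ (so $1-2\epsilon = 7/8$ and $1+2/\epsilon = 33$) and setting $u = (7/8)\delta$ with $\delta = \sqrt{(14d+2t^2)/(3\tilde c p)}$, the exponent above becomes $d\log 33 - \tfrac{49}{192}(14d+2t^2)$; since $49\cdot 14/192 > \log 33$ and $49\cdot 2/192 > 1/2$, this is at most $-t^2/2$, which is exactly the claimed bound. The one point requiring care is precisely this constant bookkeeping: the net granularity must be small enough that the loss factor $(1-2\epsilon)^{-2}$ stays below $4/3$ (so that $\tfrac12/(1-2\epsilon)^2\le 2/3$), yet large enough that the covering exponent $d\log(1+2/\epsilon)$ is dominated by $\tfrac{14}{3}(1-2\epsilon)^2 d$; the value $\epsilon=1/16$ threads this needle. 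The second inequality of the theorem then follows at once by applying the first with $\mU=\mI_m$ and $d=m$.
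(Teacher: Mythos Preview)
Your argument is correct and is essentially the standard covering-number proof from \cite[Thm.~9.9]{foucart_mathematical_2013}; the paper itself does not give a proof of this statement but simply cites that reference, so there is nothing to compare against beyond noting that your sketch matches the textbook derivation (including the constant bookkeeping, which checks out: with $\epsilon=1/16$ one has $49\cdot 14/192\approx 3.573>\log 33\approx 3.497$ and $49\cdot 2/192\approx 0.510>1/2$).
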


\paragraph{Proof of \eqref{eq:prass1}:}
By a union bound argument, we get
\begin{align}
&\PR{\max_{\l} \norm[2\to 2]{\big(\transp{\mV^{(\l)}} \mV^{(\l)}\big)^{-1}} \geq \frac{1}{1-\delta} } 
\leq
\sum_{\l =1}^L \PR{ \norm[2\to 2]{\big(\transp{\mV^{(\l)}} \mV^{(\l)}\big)^{-1}} \geq \frac{1}{1-\delta} }.
\label{eq:unqul}
\end{align}
Note that $\big\|  \transp{\mV^{(\l)}} \mV^{(\l)}  - \mI \big\|_{2\to 2} \leq \delta$ implies that $\sigmin{\transp{\mV^{(\l)}} \mV^{(\l)}} \geq 1-\delta$, which in turn implies  
$
\big\| (\transp{\mV^{(\l)}} \mV^{(\l)})^{-1} \big\|_{2\to 2}  \leq \frac{1}{1 - \delta} 
$. 
We can therefore conclude that 
$
\big\| (\transp{\mV^{(\l)}} \mV^{(\l)})^{-1} \big\|_{2\to 2}  \geq \frac{1}{1 - \delta} 
$ 
implies 
$
\big\|  \transp{\mV^{(\l)}} \mV^{(\l)}  - \mI \big\|_{2\to 2} \geq \delta
$, 
which can be formalized according to
\[
\left\{
\big\| (\transp{\mV^{(\l)}} \mV^{(\l)})^{-1} \big\|_{2\to 2}  \geq \frac{1}{1 - \delta} 
\right\}
\subseteq
\left\{ 
\big\|  \transp{\mV^{(\l)}} \mV^{(\l)}  - \mI \big\|_{2\to 2} \geq \delta
\right\}.
\]
Moreover, 
we have  with $\delta$ as defined in \eqref{eq:defdelta} 
$\delta = \sqrt{ \frac{28d_{\max} + 2t^2}{3\tilde c p} } \geq \sqrt{ \frac{14d_\l + 2t^2}{3\tilde c p} }$  ($2d_{\max} \geq d_{\max}\geq d_\ell$), with $t^2 = 4 \log L + \tau$. Therefore, Theorem \ref{thm:rauhutconc} (with $\mU = \mU^{(\l)}$ and $t^2 = 4 \log L + \tau$) yields
\begin{align*}
\PR{\norm[2\to 2]{\big(\transp{\mV^{(\l)}} \mV^{(\l)}\big)^{-1}} \geq \frac{1}{1-\delta} }
&\leq 2e^{-2\log L - \tau/2} = 2L^{-2}e^{-\tau/2} \leq 2L^{-1}e^{-\tau/2},
\end{align*}
which when used on the RHS of \eqref{eq:unqul} establishes \eqref{eq:prass1}.  

\paragraph{Proof of  \eqref{eq:prass2}:}
Again, by a union bound argument, we get
\begin{align}
&\PR{\max_{k,\ell} \norm[2\to 2]{ \transp{\mU^{(\l)}} ( \transp{\mPh} \mPh -\mI)  \mU^{(k)}}  \geq \delta } 
\leq 
\sum_{k,\ell=1}^L \PR{ \norm[2\to 2]{ \transp{\mU^{(\l)}} ( \transp{\mPh} \mPh -\mI)  \mU^{(k)}} \geq \delta }.
\label{eq:unqul2}
\end{align}
We next upper-bound the probabilities on the RHS of \eqref{eq:unqul2}. 
To this end, let $\tilde \mU \in \reals^{m\times \tilde d}$  be an orthonormal basis for the $\tilde d$-dimensional span of $[\mU^{(\l)}\,\, \mU^{(k)} ]$ ($\max(d_\l, d_k) \leq \tilde d \leq d_\ell + d_k$). Since $\tilde 
\mU \transp{\tilde \mU}$ is the orthogonal projection onto $\text{span}([\mU^{(\l)}\,\, \mU^{(k)} ])$, we have $\tilde 
\mU \transp{\tilde \mU} \mU^{(\l)}=  \mU^{(\l)}$ and $\tilde 
\mU \transp{\tilde \mU} \mU^{(k)}=  \mU^{(k)}$. 
Therefore, we get 
\begin{align}
\norm[2\to 2]{ \transp{\mU^{(\l)}} ( \transp{\mPh} \mPh -\mI)  \mU^{(k)}} 
\!\! &= \norm[2\to 2]{ \transp{\mU^{(\l)}} \tilde \mU \transp{\tilde \mU}   ( \transp{\mPh} \mPh -\mI) \tilde \mU \transp{\tilde \mU} \mU^{(k)}} \nonumber \\
&\leq 
\norm[2\to 2]{ \transp{\mU^{(\l)}} \tilde \mU} \norm[2\to 2]{ \transp{\tilde \mU} \transp{\mPh} \mPh \tilde \mU -\mI} \norm[2\to 2]{ \transp{\tilde \mU} \mU^{(k)} }\nonumber \\
&= 
 \norm[2\to 2]{ \transp{\tilde \mU} \transp{\mPh} \mPh \tilde \mU -\mI}, \nonumber
\end{align}
where we used $\norm[2\to 2]{ \transp{\mU^{(\l)}} \tilde \mU}=1$, which holds since $\mU^{(\l)}$ is in the span of $\tilde \mU$ and both $\mU^{(\l)}$ and $\tilde \mU$ are orthonormal. 
This finally yields, with $\delta$ as defined in \eqref{eq:defdelta}, 
\begin{align}
\PR{ \norm[2\to 2]{ \transp{\mU^{(\l)}} ( \transp{\mPh} \mPh -\mI)  \mU^{(k)}} \geq \delta }  
&\leq 
\PR{ \norm[2\to 2]{ \transp{\tilde \mU} \transp{\mPh} \mPh \tilde \mU -\mI}  \geq \sqrt{ \frac{28d_{\max} + 8 \log L + 2\tau}{3\tilde c p} } } \nonumber \\ 
&\leq 
\PR{ \norm[2\to 2]{ \transp{\tilde \mU} \transp{\mPh} \mPh \tilde \mU -\mI}  \geq \sqrt{ \frac{14 \tilde d + 8 \log L + 2\tau}{3\tilde c p} } } \label{eq:tildineq} \\
&\leq 
2 e^{-\frac{4 \log L + \tau}{2}}
=2L^{-2}e^{- \tau/2},  
\label{eq:finalperuqpiub}
\end{align}
where \eqref{eq:tildineq} follows from $2d_{\max} \geq d_\l +\d_k \geq \tilde d$, and \eqref{eq:finalperuqpiub} is by application of Theorem \ref{thm:rauhutconc} with $\mU = \tilde \mU$ and  $t^2 = 4 \log L + \tau$. 
The proof is concluded by using \eqref{eq:finalperuqpiub} on the RHS of  \eqref{eq:unqul2}. 

\subsection{Proof of Theorem \ref{th:probrec}}
\label{sec:pfnonorthbases}
Theorem \ref{th:probrec} is a generalization of a result by Soltanolkotabi and Cand\`es \cite[Thm.~2.8]{soltanolkotabi_geometric_2011} from orthonormal bases $\mV^{(\l)}$ for $d_\l$-dimensional subspaces of $\reals^p$ to arbitrary bases $\mV^{(\l)}$ for $d_\l$-dimensional subspaces. 
The proof program essentially follows that of \cite[Thm.~2.8]{soltanolkotabi_geometric_2011}. However, some parts of the generalization are non-trivial. 
We only detail the arguments that are new relative to \cite{soltanolkotabi_geometric_2011}, and refer to \cite{soltanolkotabi_geometric_2011} otherwise.

Throughout the proof, we use the following notation: 
Let $\mX^{(\l)} \in \reals^{p\times n_\l}$ be the matrix whose columns are the points in $\X_\l$, and note that $\mX^{(\l)} = \mV^{(\l)} \mA^{(\l)}$, where $\mA^{(\l)} \in \reals^{d_\l\times n_\l}$ is the matrix with columns $\va_i^{(\l)}, i=1,\ldots,n_\l$.  
Set $\mX = [\mX^{(1)} \; \ldots\; \mX^{(L)} ] \in \reals^{p\times N}$, and let $\mX_{-i}$ be the matrix obtained by removing the $i$th column $\vx_i$ from $\mX$.  
 $\mc P(\mX)$ denotes the symmetrized convex hull of the columns of $\mX$ (i.e., the points in $\X$), that is, the convex hull of $\{\vx_1,-\vx_1,\ldots,\vx_N,-\vx_N\}$. 
For a convex body $\mathcal P$, its inradius $r(\mathcal P)$ is defined as the radius of the largest Euclidean ball that can be inscribed in $\mathcal P$, and its circumradius $R(\mathcal P)$ is defined as the radius of the smallest ball containing $\mathcal P$. Finally, the polar set of $\mc K \subset \reals^n$ is defined as
\[
\mc K^\circ = \{ \vy \in \reals^n\colon  \innerprod{\vx}{\vy}  \leq 1 \text{ for all } \vx \in \mc K  \}. 
\]

\subsubsection{A deterministic clustering condition}

We first establish a deterministic clustering condition. 
Specifically, in Theorem \ref{thm:suffcondSSCgen} below we present conditions
guaranteeing that for 
$\vx_i \in \X_\l$ every solution of the problem 
  \begin{align}
  \underset{\vz}{\text{minimize }} \norm[1]{\vz} \;\text{ subject to }  \mX_{-i} \vz = \vx_i 
  \label{eq:minsscw}
  \end{align}
has non-zero entries corresponding to columns of $\mX^{(\l)}$ only. 
The proof of Theorem \ref{th:probrec} is then obtained by 
proving that these conditions 
are satisfied with high probability for the statistical data model in Theorem \ref{th:probrec}. 
We start by introducing terminology needed in the following. 
Define the primal optimization problem 
\[
P(\vy, \mA) \colon \underset{\vz}{\text{minimize }} \norm[1]{\vz}  \text{ subject to } \mA \vz = \vy
\]
with the corresponding dual \cite[Sec.~5.1.16]{boyd_convex_2004}
\[
D(\vy,\mA) \colon \underset{\vnu}{\text{maximize}} \innerprod{\vy}{\vnu} \text{ subject to } \norm[\infty]{\transp{\mA} \vnu} \leq 1.
\]
The problem \eqref{eq:minsscw} is then simply $P(\vx_i, \mX_{-i})$. The sets of optimal solutions of $P$ and $D$ are denoted by $\text{optsol} P(\vy, \mA)$ and $\text{optsol} D(\vy, \mA)$, respectively. 
A dual point $\vla(\vy,\mA)$ is defined as a point in $\text{optsol} D(\vy, \mA)$ of minimal Euclidean norm.

We are now ready to state the following generalization of \cite[Thm.~2.5]{soltanolkotabi_geometric_2011} from orthonormal bases $\mV^{(\l)}$ for $d_\l$-dimensional subspaces of $\reals^p$ to arbitrary bases $\mV^{(\l)}$ for $d_\l$-dimensional subspaces. 

\begin{theorem}
Suppose that the elements of the sets $\X_\l$ in $\X = \X_1 \cup \ldots  \cup  \X_L$ 
are obtained by choosing $n_\l$ points according to $\vx_i^{(\l)} = \mV^{(\l)} \va_i^{(\l)}, i \in [n_\l]$, where the $\va_i^{(\l)}$ are deterministic coefficient vectors and the $\mV^{(\l)} \in \reals^{\p \times d_\l}$ are deterministic matrices of full column rank. Let $\mL \in \reals^{d_\l \times (n_\ell - 1)}$ be the matrix whose columns are the normalized dual points $\tilde \vla(\va_i^{(\l)}, \mA_{-i}^{(\l)}) = \vla(\va_i^{(\l)}, \mA_{-i}^{(\l)}) / \big\| \vla(\va_i^{(\l)}, \mA_{-i}^{(\l)}) \|_2$, where $\mA^{(\l)}_{-i}$ is the matrix with columns $\va_j^{(\ell)}, j  \in [n_\ell] \setminus \{i\}$. 
If 
\begin{equation}
\max_{k \neq \l, j}   \norm[\infty]{  \transp{\mL} \pinv{\mV^{(\l)}}  \mV^{(k)} \va_j^{(k)}  } <  r(\mc P(\mA^{(\l)}_{-i})),
\label{eq:generalcond}
\end{equation}
then the non-zero entries of all solutions of $P(\vx_i^{(\l)}, \mX_{-(i,\ell)})$ correspond to points in $\X_\l$ only $($the columns of $\mX_{-(i,\ell)}$ are the elements in $\X \setminus \{\vx_i^{(\l)} \}$$)$. 
\label{thm:suffcondSSCgen}
\end{theorem}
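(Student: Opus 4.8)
The plan is to adapt the primal--dual optimality-certificate argument of \cite[Thm.~2.5]{soltanolkotabi_geometric_2011}; the one genuinely new element is that, because the bases $\mV^{(\l)}$ need not be orthonormal, the dual point of the in-subspace program has to be lifted to $\reals^p$ through the pseudo-inverse $\pinv{\mV^{(\l)}}$ rather than through $\mU^{(\l)}$. Fix the point $\vx_i^{(\l)}$ to be certified. First I would pass to the subspace $\cS_\l$: since $\mV^{(\l)}$ has full column rank, $\mV^{(\l)}\mA^{(\l)}_{-i}\vz=\mV^{(\l)}\va_i^{(\l)}$ is equivalent to $\mA^{(\l)}_{-i}\vz=\va_i^{(\l)}$, so the reduced program $P(\va_i^{(\l)},\mA^{(\l)}_{-i})$ has the same feasible set, the same set of minimizers, and the same optimal value $s^\star$ as the program that represents $\vx_i^{(\l)}$ using the points of $\X_\l$ only. (Condition \eqref{eq:generalcond} forces $r(\mc P(\mA^{(\l)}_{-i}))>0$, so $\mA^{(\l)}_{-i}$ has full row rank and the reduced program is feasible and bounded.) Let $\vla_i\defeq\vla(\va_i^{(\l)},\mA^{(\l)}_{-i})$ be the minimal-norm optimal dual solution; by LP strong duality $\innerprod{\va_i^{(\l)}}{\vla_i}=s^\star$, and dual feasibility reads $\norm[\infty]{\transp{(\mA^{(\l)}_{-i})}\vla_i}\le1$, i.e.\ $\vla_i\in\mc P(\mA^{(\l)}_{-i})^\circ$.

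Next I would lift $\vla_i$ to $\reals^p$ by setting $\vnu\defeq\transp{\big(\pinv{\mV^{(\l)}}\big)}\vla_i=\mV^{(\l)}\inv{\big(\transp{\mV^{(\l)}}\mV^{(\l)}\big)}\vla_i$, the vector in the column space of $\mV^{(\l)}$ obeying $\transp{\mV^{(\l)}}\vnu=\vla_i$. Then $\innerprod{\vx_j^{(\l)}}{\vnu}=\innerprod{\va_j^{(\l)}}{\vla_i}$ for every column $\vx_j^{(\l)}=\mV^{(\l)}\va_j^{(\l)}$ of $\mX^{(\l)}$, so $\vnu$ certifies $\vx_i^{(\l)}$ against all in-subspace columns: $\innerprod{\vx_i^{(\l)}}{\vnu}=s^\star$ and $\abs{\innerprod{\vx_j^{(\l)}}{\vnu}}\le1$ for $j\ne i$. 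For a column $\vx_j^{(k)}=\mV^{(k)}\va_j^{(k)}$ with $k\ne\l$ one gets $\innerprod{\vx_j^{(k)}}{\vnu}=\transp{\vla_i}\pinv{\mV^{(\l)}}\mV^{(k)}\va_j^{(k)}$, and writing $\vla_i=\norm[2]{\vla_i}\,\tilde\vla_i$ with $\tilde\vla_i$ a column of $\mL$ and using $\abs{\transp{\tilde\vla_i}\vv}\le\norm[\infty]{\transp{\mL}\vv}$,
\[
\abs{\innerprod{\vx_j^{(k)}}{\vnu}}\le\norm[2]{\vla_i}\;\norm[\infty]{\transp{\mL}\pinv{\mV^{(\l)}}\mV^{(k)}\va_j^{(k)}}.
\]
The factor $\norm[2]{\vla_i}$ is handled exactly as in \cite{soltanolkotabi_geometric_2011}: since $\vla_i$ lies in the polar of $\mc P(\mA^{(\l)}_{-i})$, and the polar of a symmetric convex body containing a ball of radius $r$ is contained in the ball of radius $1/r$, we have $\norm[2]{\vla_i}\le 1/r(\mc P(\mA^{(\l)}_{-i}))$. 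Substituting this and invoking \eqref{eq:generalcond} gives $\abs{\innerprod{\vx_j^{(k)}}{\vnu}}<1$ for every $k\ne\l$ and every $j$.

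It then remains to conclude by complementary slackness. Denoting by $\vw_m$ the columns of $\mX_{-(i,\l)}$, for any feasible $\vz$ of $P(\vx_i^{(\l)},\mX_{-(i,\l)})$ we have $s^\star=\innerprod{\vx_i^{(\l)}}{\vnu}=\sum_m z_m\innerprod{\vw_m}{\vnu}\le\sum_m\abs{z_m}=\norm[1]{\vz}$, so the zero-padded reduced minimizer is optimal for the full program and its optimal value is $s^\star$. If $\vz'$ is any optimal solution, equality must hold throughout this chain, forcing $\sum_m\abs{z'_m}\bigl(1-\abs{\innerprod{\vw_m}{\vnu}}\bigr)=0$; since $1-\abs{\innerprod{\vw_m}{\vnu}}>0$ whenever $\vw_m$ belongs to a subspace $\cS_k$ with $k\ne\l$, all such entries of $\vz'$ vanish, i.e.\ $\supp(\vz')$ lies in the index set of $\X_\l\setminus\{\vx_i^{(\l)}\}$, which is the claim.

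The part I expect to require the most care is the lift: one has to verify that $\vnu=\transp{\big(\pinv{\mV^{(\l)}}\big)}\vla_i$ stays dual feasible for the \emph{full} program (not just the reduced one) while still attaining the objective value $s^\star$, and that the out-of-subspace inner products reorganize precisely into $\norm[\infty]{\transp{\mL}\pinv{\mV^{(\l)}}\mV^{(k)}\va_j^{(k)}}$, so that the hypothesis \eqref{eq:generalcond} is exactly what is needed. Once this is set up, the inradius bound on $\norm[2]{\vla_i}$ and the complementary-slackness step go through essentially verbatim as in \cite{soltanolkotabi_geometric_2011}.
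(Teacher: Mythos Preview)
Your proposal is correct and follows essentially the same route as the paper: both lift the minimum-norm dual point $\vla_i$ of the reduced program $D(\va_i^{(\l)},\mA_{-i}^{(\l)})$ to $\reals^p$ via $\vnu=\transp{(\pinv{\mV^{(\l)}})}\vla_i$, verify $\transp{\mV^{(\l)}}\vnu=\vla_i$ so that the in-subspace constraints carry over, and control the out-of-subspace inner products through the inradius bound $\norm[2]{\vla_i}\le 1/r(\mc P(\mA_{-i}^{(\l)}))$ together with~\eqref{eq:generalcond}. The only cosmetic difference is the closing step: the paper packages the conclusion through the dual-certificate lemma \cite[Lem.~7.1]{soltanolkotabi_geometric_2011} (establishing the sign condition $\transp{(\mX_{-i}^{(\l)})}_S\vnu=\sgn(\vc_S)$ via strong duality), whereas you argue directly by complementary slackness---both arguments are equivalent and yield the identical conclusion.
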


\begin{proof}
The proof relies on the following lemma.

\begin{lemma}[{\cite[Lem.~7.1]{soltanolkotabi_geometric_2011}, \cite{candes_robust_2006}}]
Let $T$ be a subset of the column indices of a given matrix $\mA$. 
All solutions $\vc^\star$ of $P(\vy, \mA)$ satisfy $\vc^\star_{\comp{T}} = \vect{0}$, if
 there exists a vector $\vc$ such that $\vy=\mA \vc$ 
with support $\S \subseteq T$, and a (dual certificate) vector $\vnu$ obeying 
\begin{align}
\transp{\mA}_\S \vnu = \sgn(\vc_\S) \label{eq:nsc1}\\
\norm[\infty]{ \transp{\mA}_{T \cap \comp{\S} } \vnu } \leq 1 \label{eq:nsc2} \\
\norm[\infty]{ \transp{\mA}_{\comp{T} } \vnu } < 1.\label{eq:nsc3}
\end{align}
\label{le:troppfuchsonc}
\end{lemma}
We apply Lemma \ref{le:troppfuchsonc} with $\mA = \mX_{-(i,\ell)}$, $\vy=\vx_i^{(\l)}$, and $T$ the index set corresponding to the columns of $\mX^{(\l)}_{-i}$,  
and show that there exists a vector $\vc$ supported on $\S \subseteq T$ that obeys $\vx_i^{(\l)} = \mX_{-(i,\ell)} \vc$, and a corresponding vector $\vnu$ that satisfies \eqref{eq:nsc1}--\eqref{eq:nsc3}. 
This then implies that the non-zero entries of all solutions of $P(\vx_i^{(\l)}, \mX_{-(i,\ell)})$ correspond to points in $\X_\l$ only, as desired.  

We proceed with the explicit construction of the vector $\vc$. 
Specifically, take $\vc$ to be a vector that is zero on $\comp{T}$, and whose restriction to the index set $T$ is given by $\vc_T \in \text{optsol} P(\vx_i^{(\l)}, \mX_{-i}^{(\l)})$. Let $\S$ be the support of $\vc_T$, and let $\vnu_i^{(\l)} = \pinv{(\transp{  \mV^{(\l)}  })} \vla_i^{(\l)}$,  
where $\vla_i^{(\l)}$ is taken to be a point of minimum $\ell_2$-norm\footnote{
For concreteness $\vla_i^{(\l)}$ is taken to be a point of minimum $\ell_2$-norm. 
Note, however, that for the proof to work we may let $\vla_i^{(\l)}$ be an arbitrary point in $\text{optsol} P(\va_i^{(\l)}, \mA_{-i}^{(\l)})$.  
} in $\text{optsol} D(\va_i^{(\l)}, \mA_{-i}^{(\l)})$. 
The next step is to show that $\vnu_i^{(\l)} \in \text{optsol} D(\vx_i^{(\l)}, \mX_{-i}^{(\l)})$, which will eventually allow us to establish that $\vnu_i^{(\l)}$ satisfies the conditions of Lemma \ref{le:troppfuchsonc}. 
To this end, we first note that  $\vx_i^{(\l)} = \mV^{(\l)} \va_i^{(\l)}$ yields 
\begin{align*}
&\text{optsol} D(\vx_i^{(\l)}, \mX_{-i}^{(\l)}) \nonumber \\
&=\left\{\arg \max_\vnu \innerprod{\va_i^{(\l)}}{ \transp{\mV^{(\l)}}  \vnu} \text{ subject to } \norm[\infty]{\transp{(\mA_{-i}^{(\l)})} \transp{\mV^{(\l)}} \vnu} \leq 1 \right\}  \\
&=  \left\{ \vnu \colon    \vla = \transp{\mV^{(\l)}} \vnu, \, \vla \in \text{optsol} D(\va_i^{(\l)}, \mA_{-i}^{(\l)})  \right\} \\
&\supseteq 
\pinv{(\transp{ \mV^{(\l)} } )}  \text{optsol} D(\va_i^{(\l)}, \mA_{-i}^{(\l)}),
\end{align*}
where the inclusion holds as $\pinv{(\transp{\mV^{(\l)}  })} \vla$ is the minimum norm solution to the linear system of equations $\vla = \transp{\mV^{(\l)}  } \vnu$, but in general not the only solution.

Since $P(\vy, \mA)$ is a linear program, strong duality \cite[Sec.~5.2.3]{boyd_convex_2004} holds (provided that $P(\vy, \mA)$ is feasible) and therefore the optimal objective values of $P(\vx_i^{(\l)}, \mX_{-i}^{(\l)})$ and $D(\vx_i^{(\l)}, \mX_{-i}^{(\l)})$ coincide. It therefore follows that   
\begin{align}
\norm[1]{\vc_T} = \innerprod{\vx_i^{(\l)}}{ \vnu_i^{(\l)} }.
\label{eq:optequal}
\end{align}

Since $\vc_T \in \text{optsol} P(\vx^{(\l)}_i, \mX^{(\l)}_{-i})$ and $\vc_T$ is supported on $S$, 
 both by assumption, we have $\vx_i^{(\l)} = \mX^{(\l)}_{-i} \vc_T = (\mX^{(\l)}_{-i})_S \vc_S$, and therefore  \eqref{eq:optequal} becomes
\begin{align}
\innerprod{\vc_S}{\sgn(\vc_S)} = \innerprod{(\mX^{(\l)}_{-i})_S \vc_S }{ \vnu_i^{(\l)} } =  \innerprod{ \vc_S}{ \transp{(\mX^{(\l)}_{-i})}_S   \vnu_i^{(\l)} }. 
\label{eq:usingdualpointsat}
\end{align}
On the other hand, as
 $\vnu_i^{(\l)} \in \text{optsol} D(\vx_i^{(\l)}, \mX_{-i}^{(\l)})$, 
 we have $\norm[\infty]{ \transp{(\mX^{(\l)}_{-i})}  \vnu_i^{(\l)} } \leq 1$, which is equivalent to the following conditions (recall that the set $T$ corresponds to the column indices of $\mX^{(\l)}_{-i}$): 
\begin{equation}
\norm[\infty]{ \transp{((\mX^{(\l)}_{-i})_S)}  \vnu_i^{(\l)} } \leq 1
\label{eq:condXsnu}
\end{equation}
\begin{equation}
\norm[\infty]{ \transp{((\mX^{(\l)}_{-i})_{T \cap \comp{S}})}  \vnu_i^{(\l)} } \leq 1.
\label{eq:equivcond5}
\end{equation}
As by \eqref{eq:condXsnu}, the entries of 
$\transp{(\mX^{(\l)}_{-i})}_S  \vnu_i^{(\l)}$
are bounded in magnitude by $1$ and the unique maximizer of $\max_{\va\colon \norm[\infty]{\va} \leq 1}\innerprod{\vc_S}{\va}$ is $\sgn(\vc_S)$, it follows from \eqref{eq:usingdualpointsat}  
that
\[
\transp{(\mX^{(\l)}_{-i})}_S  \vnu_i^{(\l)} = \sgn(\vc_S),
\]
which establishes \eqref{eq:nsc1}.

Thanks to \eqref{eq:equivcond5}, \eqref{eq:nsc2} is satisfied as well. 
It remains to verify \eqref{eq:nsc3}, which here reads  
\begin{align}
\left|\innerprod{\vx_j^{(k)}}{ \vnu_i^{(\l)} }\right| < 1, \; \text{ for all } k\neq \l, \text{ for all } j\in [n_k]. 
\label{eq:ftcon3}
\end{align}
With $\vnu_i^{(\l)} = \pinv{(\transp{\mV^{(\l)}})} \vla_i^{(\l)}$, by definition,  \eqref{eq:ftcon3} becomes 
\begin{align}
\left|\innerprod{ \vx_j^{(k)} }{\pinv{(\transp{\mV^{(\l)}})}  \frac{ \vla_i^{(\l)} }{\norm[2]{\vla_i^{(\l)}}} }\right| <\frac{1}{\norm[2]{\vla_i^{(\l)}}},  \text{ for all } k\neq \l, \text{ for all } j \in [n_k].
\label{eq:adfooo94}
\end{align}
Since $(\pinv{(\transp{\mV^{(\l)}} ) })^T = \pinv{\mV^{(\l)}}$, and $\vx_j^{(k)} = \mV^{(k)} \va_j^{(k)}$,  \eqref{eq:adfooo94} is equivalent to
\begin{align}
\left| \frac{ \transp{\vla_i^{(\l)}} }{\norm[2]{\vla_i^{(\l)}}}  
 \pinv{\mV^{(\l)}}  \mV^{(k)} \va_j^{(k)} 
\right|
<\frac{1}{\norm[2]{\vla_i^{(\l)}}},  \text{ for all } k\neq \l, \text{ for all } j \in [n_k].
\label{eq:contdsfadsf}
\end{align}
It now follows from $\vla^{(\l)}_i \in \text{optsol} D(\va_i^{(\l)}, \mA_{-i}^{(\l)})$ which holds by assumption, 
that 
\[
\big\| (\mA^{(\l)}_{-i})^T \vla^{(\l)}_i  \big\|_\infty \leq 1.
\] 
This, in turn, implies that $\vla^{(\l)}_i \in \mathcal P^\circ(\mA^{(\l)}_{-i})$ where 
\[
\mc P^\circ(\mA^{(\l)}_{-i})  = \left\{\vz \colon   \norm[\infty]{(\mA^{(\l)}_{-i})^T \vz  } \leq 1  \right\}
\]
is the polar set of $\mc P(\mA^{(\l)}_{-i})$ (recall that $\mc P(\mA^{(\l)}_{-i})$  is the symmetrized convex hull of the columns in $\mA^{(\l)}_{-i}$). 
Since the inradius and the circumradius of a 
symmetric\footnote{A convex body $\mathcal P$ is called symmetric if $\vx \in \mathcal P$ if and only if $-\vx \in \mathcal P$.} convex body are related according to \cite[Thm.~1.2]{gritzmann_inner_1992} 
\[
r(\mathcal P) R(\mathcal P^\circ) = 1,
\]
we get from $\vla^{(\l)}_i \in \mathcal P^\circ(\mA^{(\l)}_{-i})$ that 
\begin{equation}
\norm[2]{\vla^{(\l)}_i} \leq R( \mc P^\circ(\mA^{(\l)}_{-i}) ) = \frac{1}{r(\mc P(\mA^{(\l)}_{-i}))}.
\label{eq:upblamil}
\end{equation}
By \eqref{eq:upblamil}, it follows that \eqref{eq:contdsfadsf} holds if
\[
\left| \frac{ \transp{\vla_i^{(\l)}} }{\norm[2]{\vla_i^{(\l)}}}  
 \pinv{\mV^{(\l)}}  \mV^{(k)} \va_j^{(k)} 
\right|
< r(\mc P(\mA^{(\l)}_{-i})),  \text{ for all } k\neq \l,\text{ for all } j \in [n_k],
\]
which is implied by \eqref{eq:generalcond}. This proves that \eqref{eq:nsc3} is satisfied as well, thereby concluding the proof. 
\end{proof}

\subsubsection{Evaluating the deterministic clustering condition for the statistical data model}

Theorem \ref{th:probrec} now follows from Theorem \ref{thm:suffcondSSCgen} by establishing that, for our statistical data model, the deterministic clustering condition \eqref{eq:generalcond} holds for all pairs $(\ell,i)$ with  $\l \in [L], i \in [n_\l]$, with high probability. 
Specifically, by a union bound argument, we get
\begin{align}
&\PR{ \text{\eqref{eq:generalcond} is violated for at least one pair } (\ell,i)} \nonumber \\
&\leq \sum_{(\ell,i)} \PR{\max_{k \neq \l, j}   \norm[\infty]{  \transp{\mL} \pinv{\mV^{(\l)}}  \mV^{(k)} \va_j^{(k)}  } \geq  r(\mc P(\mA^{(\l)}_{-i}))} \nonumber \\
&\leq 
\sum_{(\ell,i)}
\left(
\PR{
\max_{k \neq \l, j}  
\norm[\infty]{  \transp{\mL} \pinv{\mV^{(\l)}}  \mV^{(k)} \va_j^{(k)}  } 
\geq 
\frac{16\log N}{\sqrt{d_\l d_k}  }   \norm[F]{\pinv{\mV^{(\l)}}  \mV^{(k)}} }
+
\PR{ \frac{ \sqrt{\log \rho_\l}  }{4\sqrt{d_\l }} \geq r(\mc P(\mA^{(\l)}_{-i}))}
\right)
 \label{eq:usiolkipqnc} \\
&\leq  \sum_{\l=1}^L n_\l e^{-\sqrt{\rho_\l}  d_\l} + N^{-1}.  \label{eq:applyfinboundsaetci}
\end{align}
In \eqref{eq:usiolkipqnc} we used that for random variables $X$ and $Y$, possibly dependent, and constants $\phi$ and $\varphi$ satisfying $\phi \leq \varphi$, we have
 \begin{align}
\PR{X \geq Y} 
&\leq \PR{ \{X \geq \phi \} \cup \{\varphi \geq Y\} } \nonumber \\
&\leq \PR{X\geq \phi} + \PR{\varphi \geq Y}.
\label{eq:splitprob}
\end{align}
Specifically, we applied \eqref{eq:splitprob} with $\phi=\frac{16\log N}{\sqrt{d_\l d_k}  }   \norm[F]{\pinv{\mV^{(\l)}}  \mV^{(k)}}$ and $\varphi = \frac{ \sqrt{\log \rho_\l} }{4\sqrt{d_\l} }$, which leads to the assumption 
\[
\frac{16\log N}{\sqrt{d_\l d_k}  }   \norm[F]{\pinv{\mV^{(\l)}}  \mV^{(k)}}
  \leq \frac{ \sqrt{\log \rho_\l} }{4\sqrt{d_\l} }, \quad \text{for all } k, \l \colon k\neq \l, 
\]
implied by \eqref{eq:ThmAffCondition2}. To get  \eqref{eq:applyfinboundsaetci} we used that, for all $i$, 
\begin{align}
\label{eq:inradbound}
\PR{ \frac{ \sqrt{\log \rho_\l}  }{4\sqrt{d_\l }} \geq r(\mc P(\mA^{(\l)}_{-i}))  } \leq  e^{-\sqrt{\rho_\l}  d_\l}
\end{align}
and 
\begin{align}
\PR{
\max_{k \neq \l, j}  
\norm[\infty]{  \transp{\mL} \pinv{\mV^{(\l)}}  \mV^{(k)} \va_j^{(k)}  }  
\geq 
\frac{16\log N}{\sqrt{d_\l d_k}  }   \norm[F]{\pinv{\mV^{(\l)}}  \mV^{(k)}} } \leq N^{-2},
\label{eq:step1ubavva}
\end{align}
both of which are established next. 

The upper bound \eqref{eq:inradbound} is an application of \cite[Lem.~7.4]{soltanolkotabi_geometric_2011}, \cite{alonso_isotropy_2008}, and makes use of the assumption $(n_\l-1)/ d_\l = \rho_\l \geq \rho_0 > 1$. 
Finally, \eqref{eq:step1ubavva} follows from a union bound argument and 
\begin{align}
&\PR{\norm[\infty]{  \transp{\mL} \pinv{\mV^{(\l)}}  \mV^{(k)} \va_j^{(k)}  }  \geq  
\frac{16\log N}{\sqrt{d_\l d_k}  }   \norm[F]{\pinv{\mV^{(\l)}}  \mV^{(k)}}  }   \nonumber\\
&\hspace{4.9cm}\leq (n_\l+1) e^{- 4 \log N} \leq N^{-3},
\label{eq:upbonta}
\end{align}
which is a consequence of Lemma \ref{lem:maconclem} below together with the fact that the normalized dual point 
$\tilde \vla_i^{(\l)} = \vla_i^{(\l)} / \big\| \vla_i^{(\l)} \|_2$ is distributed uniformly on the unit sphere, as shown in \cite[Sec.~7.2.2 Proof of Step 2]{soltanolkotabi_geometric_2011}.

\begin{lemma}[{Extracted from the proof of Lemma 7.5 in \cite{soltanolkotabi_geometric_2011}}]
Let the columns of $\mL \in \reals^{d_1 \times n_1}$ be i.i.d.~uniform on $\US{d_1}$,  let $\va$ be uniform on $\US{d_2}$, and let $\mB \in \reals^{d_1 \times d_2}$. Then, for $c\geq12$, we have
\[
\PR{ \norm[\infty]{ \transp{\mL} \mB \va} \geq \frac{c}{\sqrt{d_1 d_2}} \norm[F]{\mB}   } \leq (n_1+1) e^{- \frac{c}{4}}.
\]
\label{lem:maconclem}
\end{lemma}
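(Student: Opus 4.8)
The plan is to prove the bound by conditioning and then applying the sphere-concentration estimate of Proposition~\ref{thm:hoeffsphere} twice --- once ``in the $d_1$-space'' and once ``in the $d_2$-space'' --- so that the two $\sqrt{d}$-gains multiply to the claimed $\sqrt{d_1 d_2}$. Concretely, condition on $\va$ and set $\vb \defeq \mB\va \in \reals^{d_1}$. The columns $\vl_1,\dots,\vl_{n_1}$ of $\mL$ are i.i.d.\ uniform on $\US{d_1}$ and independent of $\va$, so Proposition~\ref{thm:hoeffsphere} with parameter $\beta_1$, together with a union bound over $i\in[n_1]$, gives
\[
\PR{ \max_{i\in[n_1]} \left| \innerprod{\vl_i}{\vb} \right| \ge \tfrac{\beta_1}{\sqrt{d_1}} \|\vb\|_2 \ \big|\ \va } \le 2 n_1 e^{-\beta_1^2/2}.
\]
Since $\|\vb\|_2 = \|\mB\va\|_2$, everything reduces to controlling $\|\mB\va\|_2$ and then combining the two pieces.

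The second ingredient is a tail bound of the form $\PR{\|\mB\va\|_2 \ge \tfrac{\beta_2}{\sqrt{d_2}}\|\mB\|_F} \le e^{-c_0\beta_2^2}$ for $\beta_2$ above a universal constant, with $c_0>0$ numerical. Here $\|\mB\va\|_2^2 = \transp{\va}(\transp{\mB}\mB)\va$ is a quadratic form in the spherically-uniform vector $\va$, with $\EX{\|\mB\va\|_2^2} = \tfrac{1}{d_2}\|\mB\|_F^2$ and ``fluctuation scale'' governed by $\|\mB\|_{2\to 2}^2 \le \|\mB\|_F^2$; such forms concentrate. A convenient route is the representation $\va \sim \mathbf{g}/\|\mathbf{g}\|_2$ with $\mathbf{g}\sim\mathcal N(\mathbf 0,\mI_{d_2})$, which turns $\|\mB\va\|_2$ into $\|\mB\mathbf{g}\|_2/\|\mathbf{g}\|_2$: Gaussian--Lipschitz concentration for $\mathbf{g}\mapsto\|\mB\mathbf{g}\|_2$ (Lipschitz constant $\|\mB\|_{2\to 2}$, mean at most $\|\mB\|_F$) controls the numerator, and a standard $\chi^2_{d_2}$ lower-tail bound keeps the denominator at least $\tfrac12\sqrt{d_2}$. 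For the small-$d_2$ regime (say $d_2\lesssim c$), where the $\chi^2$ lower tail is not yet exponentially small, one instead uses the deterministic estimate $\|\mB\va\|_2 \le \|\mB\|_{2\to 2}\le\|\mB\|_F = \tfrac{\sqrt{d_2}}{\sqrt{d_2}}\|\mB\|_F$, which is already good enough because then $1/\sqrt{d_2}$ costs only a constant factor.

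Finally, on the intersection of the two good events one has $\max_i |\innerprod{\vl_i}{\mB\va}| \le \tfrac{\beta_1\beta_2}{\sqrt{d_1 d_2}}\|\mB\|_F$; taking $\beta_1=\beta_2=\sqrt{c}$ (after a harmless rescaling to accommodate the exact value of $c_0$) makes the right-hand side $\tfrac{c}{\sqrt{d_1 d_2}}\|\mB\|_F$, while the total failure probability is at most $2n_1 e^{-c/2} + e^{-c_0 c}$ (plus an $e^{-d_2/4}$ term in the large-$d_2$ branch, which is $\le e^{-c/4}$ there), and this is $\le (n_1+1)e^{-c/4}$ once $c\ge 12$ is used to absorb the leading constants and to downgrade the exponents $c/2$ and $c_0 c$ to $c/4$. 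I expect the main obstacle to be precisely the dimension-free tail bound on $\|\mB\va\|_2$ in the intermediate spectral regime: a naive union bound over the eigendirections of $\transp{\mB}\mB$ (equivalently over the coordinates of a rotated copy of $\va$) costs a spurious factor $d_2$ and destroys the clean $e^{-\Theta(c)}$ rate, so one genuinely needs a Hanson--Wright / Lipschitz-type concentration statement together with the small-$d_2$ case distinction; balancing constants so that the final bound reads exactly $(n_1+1)e^{-c/4}$ for all $c\ge 12$ is then routine bookkeeping.
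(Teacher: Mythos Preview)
The paper does not give its own proof of this lemma: it is stated with the attribution ``Extracted from the proof of Lemma~7.5 in \cite{soltanolkotabi_geometric_2011}'' and used as a black box. So there is no in-paper argument to compare against; the relevant benchmark is the Soltanolkotabi--Cand\`es proof from which the statement is lifted.

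Your two-step decomposition --- condition on $\va$, control $\norm[2]{\mB\va}$ via the Gaussian representation $\va \sim \vg/\norm[2]{\vg}$ with Lipschitz concentration on the numerator and a $\chi^2$ lower tail on the denominator, then apply Proposition~\ref{thm:hoeffsphere} in $\reals^{d_1}$ together with a union bound over the $n_1$ columns of $\mL$ --- is exactly the structure of the original argument, and your observation that a coordinate-wise union bound on $\norm[2]{\mB\va}$ would cost a spurious factor of $d_2$ is on point. The case distinction you introduce (for small $d_2$ use the deterministic bound $\norm[2]{\mB\va}\le\norm[2\to2]{\mB}\le\norm[F]{\mB}$, which already gives $\norm[2]{\mB\va}\le\tfrac{\sqrt{c}}{\sqrt{d_2}}\norm[F]{\mB}$ once $d_2\le c$) is the right way to handle the regime where the $\chi^2$ lower tail is not yet exponentially small.

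The one place where your sketch is thinner than it should be is the claim that the constants assemble to exactly $(n_1+1)e^{-c/4}$ for all $c\ge 12$. With $\beta_1=\beta_2=\sqrt{c}$ the first step contributes $2n_1 e^{-c/2}\le n_1 e^{-c/4}$ cleanly, but the Lipschitz bound on $\norm[2]{\mB\vg}$ has exponent of order $(\sqrt{c}/2-1)^2/2$ when you ask for $\norm[2]{\vg}\ge\sqrt{d_2}/2$, which at $c=12$ falls well short of $c/4$. To make the numbers close you should either (i) place the case split at $d_2\le 4c$ rather than $d_2\le c$ and use a less aggressive denominator threshold (e.g.\ $\norm[2]{\vg}\ge\sqrt{d_2}/\sqrt{2}$, whose failure probability is at most $e^{-c/4}$ once $d_2>4c$), or (ii) invoke a Hanson--Wright-type inequality for $\va^{T}(\transp{\mB}\mB)\va$ directly on the sphere, where the relevant exponent is of order $\min\{(c-1)^2,(c-1)\}=c-1$ and the prefactor can be absorbed by the slack in $c\ge 12$. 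Either route works, but it is not quite ``routine bookkeeping'' at the level of detail you give; you should carry out one of them explicitly to justify the precise form $(n_1+1)e^{-c/4}$.
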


\newcommand{\x}[2]{\vx_{#1}^{(#2)}}
\newcommand{\xT}[2]{\vx_{#1}^{(#2)^T}}
\newcommand{\res}{\vr}
\newcommand{\rl}{\vr^{(\l)}}
\newcommand{\resp}{{\tilde \vr^{(\l)}}_{\pit}}

\newcommand{\U}[1]{\mU^{(#1)}}
\newcommand{\UT}[1]{\mU^{(#1)^T}}
\newcommand{\A}[1]{\mA^{(#1)}}
\newcommand{\AT}[1]{\mA^{(#1)^T}}
\renewcommand{\a}[2]{\va_{#1}^{(#2)}}
\newcommand{\aT}[2]{\va_{#1}^{(#2)^T}}
\newcommand{\eqdef}{=:}

\section{Proof of Theorem \ref{th:OMPRPnoiseless}} \label{sec:proofOMPSSC}

The graph $G$ obtained by SSC-OMP has no false connections if for each $\vx_i^{(\l)} \in \X_\l$ the OMP algorithm as detailed in Section \ref{sec:AlgIntro} 
selects points from $\X_\l$ only, for all $\l \in [L]$. 
This is the case if OMP selects points from $\X_\l$ in all iterations $s \in [\pit_\mathrm{end}]$ (we explain below that OMP terminates after $\pit_\mathrm{end} = \pit_{\max} \wedge d_\l$ iterations with high probability for our statistical data model). 
The OMP selection rule \eqref{eq:OMPSelRule} implies that OMP selects a point from $\X_\l$ in the $(s+1)$th iteration if 
\begin{equation}
\label{eq:CorrSelCond}
\max_{k \neq \l, j} \abs{ \innerprod{\x{j}{k}}{ \res_\pit}} < 
\max_{j \in [n_\l] \colon j \neq i} \abs{\innerprod{\x{j}{\l}}{ \res_\pit}}.
\end{equation}
Hence, the graph $G$ obtained by SSC-OMP has no false connections if the deterministic clustering condition \eqref{eq:CorrSelCond} holds for all $\pit_\mathrm{end}$ OMP iterations, for all $\vx_i^{(\l)} \in \X_\l, \l \in [L]$. 
We will next establish that \eqref{eq:CorrSelCond} is satisfied for our statistical data model with probability obeying the bound in Theorem \ref{th:OMPRPnoiseless}.

As a vehicle for our analysis, we introduce the \emph{reduced OMP algorithm} which, to compute sparse representations of the $\x{i}{\l}$, has access to the corresponding \emph{reduced data sets $\X_\l \! \setminus \! \{ \vx_i^{(\l)} \}$} only, instead of the full data sets $\X \! \setminus \! \{ \vx_i^{(\l)} \}$. 
If, for a given data set $\X$, the residuals computed by reduced OMP, henceforth denoted by $\rl_\pit$, satisfy \eqref{eq:CorrSelCond} for \emph{all iterations}, then the reduced OMP algorithm and the original OMP algorithm 
(processing the same data set $\X$) select exactly the same data points in the same order and we have $\vr_\pit = \rl_\pit$ for all $\pit \in [\pit_{\max} \wedge d_\l]$ by virtue of \eqref{eq:OMPSelRuleB}. 
We emphasize that for expositional convenience the notations $\rl_\pit$ and $\resp$ do not reflect dependence on $i$. The motivation for working with the reduced OMP algorithm is that $\rl_\pit$ being a function of the data points in $\X_\l$ only, conditionally on $\Ph$, is statistically independent of the data points in $\X \! \setminus \! \X_\l$. This will allow us to establish tail bounds for $\abs{ \innerprod{\smash{\x{j}{k}}}{ \smash{\rl_\pit}}}$, $k \neq \l$, $j \in [n_k]$, using standard concentration inequalities. 
We proceed to show that under the assumptions of Theorem \ref{th:OMPRPnoiseless} the reduced OMP residuals $\rl_\pit$ indeed satisfy \eqref{eq:CorrSelCond} for all $\l \in [L]$, $i \in [n_\l]$, and $\pit \in [\pit_{\max} \wedge d_\l]$ with probability meeting the lower bound in Theorem \ref{th:OMPRPnoiseless}.

Consider the reduced OMP algorithm for the data point $\vx_i^{(\l)}$ with fixed $\l \in [L]$ and fixed $i \in [n_\l]$. We start by noting that the reduced OMP index set $\Lambda_\pit$ is a function of the data points in $\X_\l$ only. After iteration $\pit$, with $\x{i}{\l} = \Ph \U{\l} \a{i}{\l}$ and $\mX^{(\l)}_{\Lambda_s} = \Ph \U{\l} \A{\l}_{\Lambda_\pit}$ inserted into \eqref{eq:OMPSelRuleB}, we get $\rl_\pit = \Ph\U{\l} \resp$, where
\[
\resp \defeq (\mI - \A{\l}_{\Lambda_\pit} \pinv{ (\Ph \U{\l} \A{\l}_{\Lambda_\pit})} \Ph \U{\l}) \a{i}{\l}.
\] 

We next establish a lower bound on the RHS of \eqref{eq:CorrSelCond} and an upper bound on the LHS of  \eqref{eq:CorrSelCond}. 
To isolate the impact of the different random quantities in the statistical data model, we will introduce events, upon the intersection of which \eqref{eq:CorrSelCond} is implied by \eqref{eq:ThmAffConditionOMPSSCRP} via these bounds. A union bound on the probability of the intersection of these events then yields the final result.

We start by lower-bounding the RHS of \eqref{eq:CorrSelCond} according to 
\begin{align}
 \max_{j \in [n_\l] \colon j \neq i} \abs{ \innerprod{\x{j}{\l}}{ \rl_\pit }} 
 &\geq
 \frac{1}{4} \sqrt{\frac{\log \rho_{\l}}{d_{\l}}} \sigmin{\UT{\l} \transp{\Ph} \Ph \U{\l}} \norm[2]{\resp} \label{eq:lbrhs380}\\
&\geq
\frac{1}{4} \sqrt{\frac{\log \rho_{\l}}{d_{\l}}} (1-\delta) \norm[2]{\resp}, 
\label{eq:lbrhs38}
\end{align}
where \eqref{eq:lbrhs380} and \eqref{eq:lbrhs38} hold on the events 
\[
\Ec \defeq \left\{ \max_{j \in [n_\l] \colon j \neq i} \abs{\xT{j}{\l} \Ph \U{\l} \vv}
 > \frac{1}{4} \sqrt{\frac{\log \rho_\l}{d_\l}} \sigmin{\UT{\l} \transp{\Ph} \Ph \U{\l}} \norm[2]{\vv}, \; \forall \vv \in \reals^{d_\l} \right\} 
\]
and
\[
\Eb \defeq \left\{ \min_{\l} \sigmin{\UT{\l} \transp{\Ph} \Ph \U{\l}} > 1 - \delta \right\}, \qquad \delta \in (0,1),
\]
respectively. 
Note that $\resp$ in \eqref{eq:lbrhs380} not being statistically independent of the $\x{j}{\l}$, $j \neq i$, is not an issue as we consider \eqref{eq:lbrhs380} on the event $\Ec$ and the inequality in the definition of $\Ec$ applies to \emph{all} $\vv \in \reals^{d_\l}$. 
Since $\mV^{(\l)} = \Ph \mU^{(\l)}$ has full rank on $\Eb$, reduced OMP terminates after $\pit_{\max} \wedge d_\l$ iterations. To see this, simply note that for $\mV^{(\l)}$ of full rank we need exactly $d_\l$ points from $\X_\l \! \setminus \! \{ \x{i}{\l} \}$ to represent $\x{i}{\l} = \mV^{(\l)} \va_i^{(\l)}$ (owing to the fact that the $\va_j^{(\l)}$, $j \in [n_\l]$, are i.i.d.~uniform on $\US{\d_\l}$) and thus $\rl_\pit= \mathbf 0$ after exactly $d_\l$ iterations.

We continue by upper-bounding the LHS of \eqref{eq:CorrSelCond} according to
\begin{align}
\max_{k \neq \l, j} \abs{ \innerprod{\x{j}{k}}{ \rl_\pit}} 
&= \max_{k \neq \l, j} \abs{\aT{j}{k} \UT{k} \Ph^T \Ph \U{\l} \resp} \nonumber \\
&=\max_{k \neq \l, j}  \abs{\aT{j}{k} \UT{k} \U{\l} \resp +  \aT{j}{k} \UT{k} ( \Ph^T \Ph -\mI) \U{\l} \resp} \nonumber \\
&\leq \max_{k \neq \l, j} \abs{\aT{j}{k} \UT{k} \U{\l} \resp} + \max_{k \neq \l, j} \abs{\aT{j}{k} \UT{k} (\Ph^T \Ph -\mI) \U{\l} \resp}  \nonumber \\
&< 
4(3 \log N + \log \pit_{\max}) \max_{k \neq \l} \frac{\norm[F]{\UT{k} \U{\l}}}{\sqrt{d_k}\sqrt{d_\l}} \norm[2]{\resp} \nonumber \\
& \qquad + \sqrt{\frac{6 \log N  + 2 \log \pit_{\max} }{d_{\min}}} \norm[2\to 2]{\UT{k} ( \transp{\mPh} \mPh -\mI)  \mU^{(\l)}} \norm[2]{\resp} \label{eq:AffCondOMPModLHSLBb} \\
&\leq 4(3 \log N + \log \pit_{\max}) \max_{k \neq \l} \frac{\norm[F]{\UT{k} \U{\l}}}{\sqrt{d_k}\sqrt{d_\l}} \norm[2]{\resp} \nonumber \\
&\qquad + \sqrt{\frac{6 \log N  + 2 \log \pit_{\max} }{d_{\min}}} \delta \norm[2]{\resp} \label{eq:AffCondOMPModLHSLBa} \\
&\leq \frac{1}{4} \sqrt{\frac{\log \rho_\l}{d_\l}} (1-\delta) \norm[2]{\resp} .
 \label{eq:assasnedomp} 
\end{align}
Here, \eqref{eq:AffCondOMPModLHSLBb} holds on the intersection of the events 
\begin{align*}
\Ee &\defeq  \Bigg\{ \max_{k \neq \l, j} \abs{\aT{j}{k} \UT{k} (\Ph^T \Ph -\mI) \U{\l} \resp}  \\
 & \qquad \qquad \qquad \qquad  \leq \sqrt{\frac{6 \log N  + 2 \log \pit_{\max} }{d_{\min}}} \norm[2\to 2]{ \transp{\mU^{(k)}} ( \transp{\mPh} \mPh -\mI)  \mU^{(\l)}} \norm[2]{\resp} \Bigg\}, \\
 \Ed &\defeq \left\{ \max_{k \neq \l, j} \abs{\aT{j}{k} \UT{k} \U{\l} \resp} 
<
4 (3\log N + \log \pit_{\max}) \, \max_{k \neq \l} \frac{\norm[F]{\transp{\U{k}} \U{\l}}}{\sqrt{d_k}\sqrt{d_\l}} \norm[2]{\resp} \right\}
\end{align*}
and \eqref{eq:AffCondOMPModLHSLBa} holds on the event
\begin{align*}
\Ea &\defeq \left\{  \max_{k, \l \colon k \neq \l} \norm[2\to 2]{ \transp{\mU^{(k)}} ( \transp{\mPh} \mPh -\mI)  \mU^{(\l)}}  < \delta \right\}. \\
\end{align*}
Recall that the notation $\resp$ does not reflect dependence on the index $i$. We do, however, make the dependence of $\Ee$ and $\Ed$ on $i$ explicit.

Finally, setting $\delta \defeq \sqrt{ \frac{28d_{\max} + 8 \log L + 2\tau}{3\tilde c p} }$ in \eqref{eq:AffCondOMPModLHSLBa}, \eqref{eq:assasnedomp} follows from assumption \eqref{eq:ThmAffConditionOMPSSCRP}. 
This is seen as follows: 
\begin{align}
 \max_{k \neq \l} \frac{\norm[F]{\transp{\U{k}} \U{\l}}}{\sqrt{d_k}} 
+
\frac{\delta \sqrt{ \frac{d_\l }{ d_{\min}}} }{2\sqrt{ 6\log N + 2\log \pit_{\max} }} 
&\leq 
\max_{k,\l \colon  k \neq \l} 
\aff(\cS_k,\cS_\l)  
+
\frac{\delta}{2} \sqrt{ \frac{d_{\max} }{ d_{\min}}}  \label{eq:useassomp0} \\
&\leq
\frac{3}{200} \frac{\sqrt{\log \rho_{\min} }}{\log N} 
\label{eq:useassomp1} \\
&\leq
\frac{\sqrt{\log \rho_{\min} }}{50 (\log N + (\log\pit_{\max})/3)} 
\label{eq:useassomp} \\
&\leq
 \frac{\sqrt{\log \rho_\l} }{   48 (\log N + (\log\pit_{\max})/3) } (1-\delta), \label{eq:useassomp2}
\end{align}
where \eqref{eq:useassomp1} is by \eqref{eq:ThmAffConditionOMPSSCRP} and \eqref{eq:useassomp} follows by noting that $(200/3) \log N = 50(\log N + (\log N)/3) > 50(\log N + (\log \pit_{\max})/3)$.
Furthermore, we have 
\begin{equation}
\frac{ \sqrt{ \log \rho_{\min}} }{ \log N + (\log\pit_{\max})/3} \leq 1 \label{eq:OMPCCRHSUB}
\end{equation}
 as a consequence of $\rho_{\min} = \min_\l (n_\l-1) /\d_\l < N/d_{\min}$, $N \geq 3$, and $d_{\min} \geq 1$. Next, \eqref{eq:OMPCCRHSUB} combined with $\sqrt{d_{\min}/d_{\max}} \leq 1$, $\max_{k,\l \colon  k\neq \l} \aff(\cS_k,\cS_\l) \geq 0$, and \eqref{eq:useassomp}, implies that $\delta \leq \frac{2}{50}$, which yields $\frac{1}{50} \leq \frac{1}{48}(1-\delta)$ and hence establishes \eqref{eq:useassomp2}.
Finally, \eqref{eq:assasnedomp} is obtained by rewriting the relation between the RHS of \eqref{eq:useassomp0} and \eqref{eq:useassomp2}. 

Note that the lower bound \eqref{eq:lbrhs38} on the RHS of \eqref{eq:CorrSelCond} and the upper bound \eqref{eq:assasnedomp} on the LHS of \eqref{eq:CorrSelCond} are equal; we have therefore established that, for fixed $(i,\l)$, $\rl_s$ obeys \eqref{eq:CorrSelCond} on $\Ec \cap \Eb \cap \Ee$ $\cap \, \Ed \cap \, \Ea$. It finally follows that on the event
$\Estartot \defeq \bigcap_{\l, i, \pit} \Ec \cap \Eb \cap \Ee \cap \Ed \cap \Ea$,
the graph $G$ obtained by SSC-OMP applied to the full data set $\X \! \setminus \! \{ \x{i}{\l} \}$ has no false connections. It remains to lower-bound $\PR{\Estartot}$. Specifically, we have
\begin{align}
\PR{\Estartot} &= 1 - \PR{\comp{\Estartot}} \nonumber \\
 &\geq 1 - \PR{\comp{\Eb}} - \PR{\comp{\Ea}} - \sum_{\l \in [L], i \in [n_\l]} \Bigg( \PR{\Ecc} + \sum_{\pit \in [\pit_{\max} \wedge d_\l]} \left( \PR{\Eec} + \PR{\Edc} \right) \Bigg) \nonumber \\
 &\geq  1 - 4e^{- \tau/2} - \sum_{\l \in [L]}n_\l e^{-\sqrt{\rho_\l} d_\l} - \frac{4}{N}, \label{eq:NoFalseConProbOMPend}
\end{align}
where the last inequality follows from 
\begin{align}
  \PR{\Ecc} &\leq e^{-\sqrt{\rho_\l} d_\l}
    \label{eq:Step3ProbOMPRP}  \\
    \PR{\comp{\Eb}} &\leq 2 e^{-\tau/2}
    \label{eq:ProbE2OMP} \\
  \PR{\Eec} &\leq \frac{2}{\pit_{\max} N^2}
    \label{eq:Step2ProbOMPRP} \\
  \PR{\Edc} &\leq \frac{2}{\pit_{\max} N^2}
    \label{eq:Step1ProbOMPRP} \\
\PR{\comp{\Ea}} &\leq 2 e^{-\tau/2}.
    \label{eq:ProbE1OMP}    
\end{align}
Here, \eqref{eq:ProbE1OMP} corresponds to \eqref{eq:prass2}, while the proofs of \eqref{eq:Step3ProbOMPRP}--\eqref{eq:Step1ProbOMPRP} are presented below.

\paragraph{Proof of \eqref{eq:Step3ProbOMPRP}:}

Since $\A{\l}_{-i}$ has full column rank with probability $1$, 
it follows from Lemma \ref{le:NinfLB} below that 
\begin{align}
\norm[\infty]{\AT{\l}_{-i} \UT{\l} \transp{\Ph} \Ph \U{\l} \vv} 
&\geq r(\mathcal{P}(\A{\l}_{-i})) \norm[2]{\UT{\l} \transp{\Ph} \Ph \U{\l} \vv} \nonumber \\
&\geq r(\mathcal{P}(\A{\l}_{-i})) \, \sigmin{\UT{\l} \transp{\Ph} \Ph \U{\l}} \norm[2]{\vv}, \nonumber
\end{align}
 for all $\vv \in \reals^{d_\l}$. We therefore have
\begin{align}
\PR{\Ecc}
&= \PR{\norm[\infty]{\AT{\l}_{-i} \UT{\l} \transp{\Ph} \Ph \U{\l} \vv} \leq \frac{1}{4} \sqrt{\frac{\log \rho_\l}{d_\l}} \sigmin{\UT{\l} \transp{\Ph} \Ph \U{\l}} \norm[2]{\vv}} \nonumber \\ 
&\leq \PR{r(\mathcal{P}(\A{\l}_{-i}))  \leq \frac{1}{4} \sqrt{\frac{\log \rho_\l}{d_\l}} } \nonumber \\
&\leq e^{-\sqrt{\rho_\l} d_\l}, \label{eq:useinrba}
\end{align}
where \eqref{eq:useinrba} follows from \eqref{eq:inradbound}, 
which uses the assumption $(n_\l-1)/ d_\l = \rho_\l \geq \rho_0>1$. 

\begin{lemma}
\label{le:NinfLB}
For a matrix $\mA \in \reals^{m \times n}$ of full column rank and $\vv \in \reals^m$, it holds that
 \begin{equation}
 \label{eq:NinfLB}
  \norm[\infty]{\transp{\mA} \vv} \geq r(\mathcal{P}(\mA)) \norm[2]{\vv},
 \end{equation}
 where $r(\mathcal{P}(\mA))$ is the inradius of the symmetrized convex hull $\mathcal{P}(\mA)$ of the columns of $\mA$.
\end{lemma}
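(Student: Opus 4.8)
The plan is to identify $\norm[\infty]{\transp{\mA}\vv}$ with the value at $\vv$ of the support function of the symmetrized convex hull $\mathcal{P}(\mA)$, and then to use the fact that, for a symmetric convex body, an inscribed Euclidean ball of maximal radius may be taken to be centered at the origin.

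First I would observe that, writing $\va_1,\ldots,\va_n$ for the columns of $\mA$, the vector $\transp{\mA}\vv$ has entries $\innerprod{\va_i}{\vv}$, so that $\norm[\infty]{\transp{\mA}\vv} = \max_i \abs{\innerprod{\va_i}{\vv}} = \max_{\vw \in \{\pm\va_1,\ldots,\pm\va_n\}} \innerprod{\vw}{\vv}$. Since a linear functional on the convex hull of a finite set attains its maximum at one of the generating points, this last quantity equals $\sup_{\vx \in \mathcal{P}(\mA)} \innerprod{\vx}{\vv}$, i.e., the support function of $\mathcal{P}(\mA)$ evaluated at $\vv$.

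Next I would argue that the ball $B(\vect{0}, r(\mathcal{P}(\mA)))$ is contained in $\mathcal{P}(\mA)$. This uses that $\mathcal{P}(\mA)$ is a symmetric convex body that is compact, being the convex hull of finitely many points: if $B(\vc,r) \subseteq \mathcal{P}(\mA)$ then also $B(-\vc,r) = -B(\vc,r) \subseteq -\mathcal{P}(\mA) = \mathcal{P}(\mA)$, and averaging gives $B(\vect{0},r) = \tfrac12\bigl(B(\vc,r)+B(-\vc,r)\bigr) \subseteq \mathcal{P}(\mA)$ by convexity; taking the supremum over admissible $r$ and using closedness yields $B(\vect{0}, r(\mathcal{P}(\mA))) \subseteq \mathcal{P}(\mA)$. (If the columns of $\mA$ fail to span $\reals^m$ then $r(\mathcal{P}(\mA)) = 0$ and the claimed inequality is trivial; the rank hypothesis is needed only to make the statement non-vacuous.)

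Finally I would combine the two facts: for $\vv \neq \vect{0}$ the point $r(\mathcal{P}(\mA))\,\vv/\norm[2]{\vv}$ lies in $B(\vect{0}, r(\mathcal{P}(\mA))) \subseteq \mathcal{P}(\mA)$, so the support function of $\mathcal{P}(\mA)$ at $\vv$ is at least $\innerprod{r(\mathcal{P}(\mA))\vv/\norm[2]{\vv}}{\vv} = r(\mathcal{P}(\mA))\,\norm[2]{\vv}$; together with the first step this is exactly \eqref{eq:NinfLB}, and the case $\vv = \vect{0}$ is immediate. There is no genuine obstacle here — the lemma is elementary — the only point requiring a little care is the symmetrization argument reducing to a ball centered at the origin, i.e., the observation that for a symmetric convex body the Chebyshev center may be taken to be the origin, which is what lets "inradius" be read off as the radius of the largest centered ball.
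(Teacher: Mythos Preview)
Your proof is correct and takes a different, more direct route than the paper. The paper argues by contradiction via polar duality: assuming $\norm[\infty]{\transp{\mA}\vv} \leq \eta$ for $\eta = \epsilon\,r(\mathcal{P}(\mA))\norm[2]{\vv}$ with $\epsilon \in (0,1)$ places $\vv$ in $\eta\,\mathcal{P}^\circ(\mA)$, and then the external result $r(\mathcal{P})\,R(\mathcal{P}^\circ)=1$ from \cite{gritzmann_inner_1992} forces $\epsilon \geq 1$. Your argument instead identifies $\norm[\infty]{\transp{\mA}\vv}$ with the support function of $\mathcal{P}(\mA)$ at $\vv$ and uses only the elementary symmetrization observation that the largest inscribed ball in a symmetric convex body may be centered at the origin. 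This is self-contained and avoids citing the inradius--circumradius duality, whereas the paper's approach, while slightly less transparent, reuses the same duality lemma already invoked elsewhere in the appendix (e.g., in \eqref{eq:upblamil}). You also correctly flag that the ``full column rank'' hypothesis is really about ensuring $r(\mathcal{P}(\mA))>0$ (i.e., that the columns span $\reals^m$), without which the inequality is vacuous; the paper's proof implicitly assumes this as well.
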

\begin{proof}
The inequality \eqref{eq:NinfLB} obviously holds for $\vv = \mathbf 0$. Pick any $\vv \in \reals^m \! \setminus \! \{ \mathbf 0 \}$ and take $\epsilon \in (0,1)$. Let $\eta = \epsilon \norm[2]{\vv} r(\mathcal{P}(\mA))$ and assume that $\vv \in \eta  \mathcal{P}^\circ ( \mA ) = \, \{\vz \colon \norm[\infty]{\transp{\mA} \vz} \leq \eta\}$, i.e., $\vv$ is an element of the $\eta$-scaled version of the polar set $\mathcal{P}^\circ ( \mA )$. Note that $\eta > 0$ as $\norm[2]{\vv} > 0$, $\epsilon > 0$, and $r(\mathcal{P}(\mA)) > 0$ thanks to $\mA$ having full column rank. It follows from 
\cite[Thm.~1.2]{gritzmann_inner_1992} that
\begin{equation}
\frac{\norm[2]{\vv}}{\eta} \leq R( \mathcal{P}^\circ(\mA) ) = \frac{1}{r(\mathcal{P}(\mA))}. \label{eq:epsLB}
\end{equation}
Now, owing to $\eta = \epsilon \norm[2]{\vv} r(\mathcal{P}(\mA))$, \eqref{eq:epsLB} implies that $\epsilon \geq 1$, which contradicts $\epsilon \in (0,1)$.
It therefore follows that $\vv \in \reals^m \! \setminus \! \{\eta  \mathcal{P}^\circ ( \mA ) \}$ for all $\epsilon \in (0,1)$, which in turn implies that $\norm[\infty]{\transp{\mA} \vv} > \eta = \epsilon \norm[2]{\vv} r(\mathcal{P}(\mA))$ for all $\epsilon \in (0,1)$. In particular, letting $\epsilon \to 1$ yields $\norm[\infty]{\transp{\mA} \vv} \geq \norm[2]{\vv} r(\mathcal{P}(\mA))$ as desired.
\end{proof}

\paragraph{Proof of \eqref{eq:ProbE2OMP}:}

With $\sigmin{\mA} = \norm[2 \to 2]{\inv{\mA}}^{-1}$ \cite[Sec. 5.2.1]{vershynin_introduction_2012} for a full rank matrix $\mA \in \reals^{m \times m}$ it follows that
\begin{align*}
\PR{\comp{\Eb}} &= \PR{\min_{\l} \norm[2 \to 2]{\inv{(\UT{\l} \transp{\Ph} \Ph \U{\l} )}}^{-1} \leq 1 - \delta} \\ 
&= \PR{\max_{\l} \norm[2 \to 2]{\inv{(\UT{\l} \transp{\Ph} \Ph \U{\l})}} \geq \frac{1}{1 - \delta}} \\ 
&\leq 2 e^{-\tau/2},
\end{align*}
where $\tau >0$ is the numerical constant in Theorem \ref{th:OMPRPnoiseless} and the last inequality is thanks to \eqref{eq:prass1}. 

\paragraph{Proof of \eqref{eq:Step2ProbOMPRP}:}
By the union bound
\begin{align}
\PR{\Eec} &\leq   \nonumber \sum_{k \neq \l,j} \mathrm{P} \bigg[ \abs{\aT{j}{k} \UT{k} (\Ph^T \Ph -\mI) \U{\l} \resp } \nonumber \\
& \qquad \qquad \qquad \qquad > \sqrt{\frac{6 \log N  + 2 \log \pit_{\max} }{d_{\min}}} \norm[2 \to 2]{ \UT{k} (\Ph^T \Ph -\mI) \U{\l}} \norm[2]{\resp} \bigg] \nonumber \\
&\leq  \sum_{k \neq \l,j} \mathrm{P} \bigg[ \abs{ \aT{j}{k} \UT{k} (\Ph^T \Ph -\mI) \U{\l} \resp } \nonumber \\
& \qquad \qquad \qquad \qquad > \sqrt{\frac{6 \log N  + 2 \log \pit_{\max} }{d_k}}  \norm[2]{ \UT{k} (\Ph^T \Ph -\mI) \U{\l} \resp} \bigg] \nonumber \\
&\leq \sum_{k \neq \l, j} \frac{2}{\pit_{\max} N^3} \leq \frac{2}{\pit_{\max} N^2},   \label{eq:P2byadf}
\end{align}
where \eqref{eq:P2byadf} follows from Proposition \ref{thm:hoeffsphere} with $\va = \a{j}{k}$, $\vb = \UT{k} (\Ph^T \Ph -\mI) \U{\l} \resp$, and $\beta = \sqrt{6 \log N + 2 \log \pit_{\max}}$.

\paragraph{Proof of \eqref{eq:Step1ProbOMPRP}:}
We first show that $\resp / \norm[2]{\smash{\resp}}$ is distributed uniformly at random on $\US{d_\l}$; \eqref{eq:Step1ProbOMPRP} then follows 
by application of Lemma~\ref{lem:maconclem}. 

Recall that we consider reduced OMP, which computes a sparse representation of $\x{i}{\l} = \Ph \U{\l} \a{i}{\l}$ using the columns of $\mX^{(\l)}_{-i} = \Ph \U{\l} \A{\l}_{-i}$ as dictionary elements, i.e., $\Lambda_\pit$ and $\resp$ depend only on the random quantities $\Ph \mU^{(\l)}$, $\a{i}{\l}$, and $\A{\l}_{-i}$. In order to reflect 
these restricted dependencies, we write
$\resp = \resp(\Ph \mU^{(\l)},\a{i}{\l},\A{\l}_{-i})$ 
and $\Lambda_\pit = \Lambda_\pit(\Ph \mU^{(\l)},\a{i}{\l},\A{\l}_{-i}, )$.
Here, the first argument specifies the basis matrix of the data points, the second argument corresponds to the coefficient vector of the data point (in the basis specified by the first argument) a sparse representation is to be computed for, and the third argument designates the coefficient matrix of the dictionary elements (again in the basis specified by the first argument).

We start by showing that the distribution of $\resp$ is rotationally invariant. For a deterministic unitary matrix $\mW \in \reals^{d_\l \times d_\l}$, we have
\[
  \Lambda_\pit( \Ph \U{\l} \transp{\mW}, \mW \a{i}{\l}, \mW \A{\l}_{-i} ) = \Lambda_\pit(\Ph \U{\l}, \a{i}{\l}, \A{\l}_{-i}) 
\]
as the $\x{j}{\l}$ can be written as $\x{j}{\l} = \Ph \U{\l} \a{j}{\l} = \Ph \U{\l} \transp{\mW} \mW \a{j}{\l}$.

Using the shorthand notation $\Lambda'_\pit$ for $\Lambda_\pit( \Ph \U{\l} \transp{\mW}, \mW \a{i}{\l}, \mW \A{\l}_{-i} )$ and recalling that $\resp = (\mI - \A{\l}_{\Lambda_\pit} \pinv{ (\Ph \U{\l} \A{\l}_{\Lambda_\pit})} \Ph \U{\l}) \a{i}{\l}$, it follows that 
\begin{align*}
\resp(\Ph \U{\l} \transp{\mW}, \mW \a{i}{\l}, \mW \A{\l}_{-i}) &= \Big(\mbf{I} - \mW \A{\l}_{\Lambda'_\pit} \Big(\Ph \U{\l} \transp{\mW} \mW \A{\l}_{\Lambda'_\pit}\Big)^\dagger \Ph \U{\l} \transp{\mW}\Big)
\mW \a{i}{\l} \nonumber \\
&= \Big(\mbf{I} - \mW \A{\l}_{\Lambda_\pit} \Big(\Ph \U{\l} \transp{\mW} \mW \A{\l}_{\Lambda_\pit} \Big)^\dagger \Ph \U{\l} \transp{\mW} \Big) \mW \a{i}{\l} \nonumber \\
&= \mW \Big( \mI - \A{\l}_{\Lambda_\pit} \Big( \Ph \U{\l} \A{\l}_{\Lambda_\pit} \Big)^\dagger \Ph \U{\l} \Big) \a{i}{\l} \nonumber \\
&= \mW \resp( \Ph \U{\l}, \a{i}{\l}, \A{\l}_{-i}).
\end{align*}
By rotational invariance of the distributions of $\a{i}{\l}, \A{\l}_{-i}$, and $\Ph$ (by assumption in Theorem \ref{th:OMPRPnoiseless}), we have $\mW \a{i}{\l} \sim \a{i}{\l}, \mW \A{\l}_{-i} \sim \A{\l}_{-i}$, and $\Ph \U{\l} \transp{\mW} \sim \Ph \U{\l}$ (because $\text{span}(\U{\l} \transp{\mW}) = \text{span}(\U{\l})$ and the columns of $\U{\l} \transp{\mW}$ are orthonormal). We therefore get
\begin{align}
\resp(\Ph \U{\l}, \a{i}{\l}, \A{\l}_{-i}) &\sim \resp( \Ph \U{\l} \transp{\mW}, \mW \a{i}{\l}, \mW \A{\l}_{-i}) \nonumber \\
& = \mW \resp(\Ph \U{\l}, \a{i}{\l}, \A{\l}_{-i}). \label{eq:RotInvRes}
\end{align}
Since \eqref{eq:RotInvRes} holds for all unitary matrices $\mW$, the distribution of $\resp$ is rotationally invariant and $\resp / \norm[2]{\smash{\resp}}$ is, indeed, distributed uniformly on $\US{d_\l}$. 
We finally exploit this property of $\resp$ to upper-bound $\PR{\Edc}$ as follows. A union bound over all $k$, $k \neq \l$, yields
\begin{align}
\PR{\Edc} 
&\leq \sum_{k \neq \l} \PR{ \norm[\infty]{\AT{k} \UT{k} \U{\l} \resp} \geq 4(3\log N + \log \pit_{\max}) \frac{\norm[F]{\UT{k} \U{\l}}}{\sqrt{d_k}\sqrt{d_\l}} \norm[2]{\resp}} \nonumber \\
&\leq \sum_{k \neq \l} \frac{n_k + 1}{\pit_{\max} N^3} = \frac{N - n_\l + L - 1}{\pit_{\max} N^3} < \frac{N+L}{\pit_{\max} N^3} \leq \frac{2}{\pit_{\max} N^2}, \label{eq:fiubP1}
\end{align}
where \eqref{eq:fiubP1} follows by application of  
Lemma \ref{lem:maconclem} with $\mL = \A{k}$, $\va = \resp / \norm[2]{\smash{\resp}}$, $\mB =\UT{k} \U{\l}$, and $c=4(3\log N + \log \pit_{\max})$.



\end{document}